\documentclass[sigconf]{aamas}

\usepackage{balance} 

\acmConference[]{Preprint}{Saskatchewan}{Canada}{University of Regina, Regina}

\usepackage{comment}
\usepackage{tikz}
\newcommand{\defi}{\doteq}
\newtheorem{axiom}{Axiom}
\newtheorem{lemma}{Lemma}

\title[On the Semantics of Primary Cause in Hybrid Dynamic Domains]{On the Semantics of Primary Cause in Hybrid Dynamic Domains}

\author{Shakil M.\ Khan}
\affiliation{
  \institution{University of Regina}
  \city{Regina, Saskatchewan}
  \country{Canada}}
\email{shakil.khan@uregina.ca}

\author{Asim Mehmood}
\affiliation{
\institution{University of Regina}
  \city{Regina, Saskatchewan}
  \country{Canada}}

\author{Sandra Zilles}
\affiliation{
\institution{University of Regina}
  \city{Regina, Saskatchewan}
  \country{Canada}}
\email{sandra.zilles@uregina.ca}

\begin{abstract}
Reasoning about actual causes of observed effects is fundamental to the study of rationality. This important problem has been  studied since the time of Aristotle, with formal mathematical accounts emerging recently. We live in a world where change due to actions can be both discrete and continuous, that is, hybrid. Yet, despite extensive research on actual causation, only few recent studies looked into causation with continuous change. Building on recent progress, in this paper we propose two definitions of primary cause in a hybrid action-theoretic framework, namely the hybrid temporal situation calculus. One of these is foundational in nature while the other formalizes causation through contributions, which can then be verified from a counterfactual perspective using a modified ``but-for'' test. We prove that these two definitions are indeed equivalent. We then show that our definitions of causation have some intuitively justifiable properties.
\end{abstract}

\keywords{Actual cause, Counterfactuals, Hybrid dynamic systems, Situation calculus, Logic}

\newcommand{\BibTeX}{\rm B\kern-.05em{\sc i\kern-.025em b}\kern-.08em\TeX}

\setcopyright{none}
\begin{document}


\pagestyle{fancy}
\fancyhead{}


\maketitle 


\section{Introduction}
A fundamental challenge in dynamic domains is to identify the causes of change, in particular, that of an observed effect becoming true given a history of actions (the ``scenario''), a problem known as actual or token-level causation. Based on Pearl's foundational work \cite{Pearl98,Pearl00}, Halpern and Pearl (HP) and others \cite{Halpern00,HalpernP05,EiterL02,Hopkins05,HopkinsP07,Halpern15,Halpern16} have extensively studied the problem of actual causation and significantly advanced this field. HP adopted David Hume's reductionist viewpoint on causality and provided a rigorous formalization using structural equations models (SEM) \cite{Simon77}. Following Hume, HP defined causes using counterfactuals formalized via ``interventions''. Humean counterfactual definition of causation, that ``\emph{a cause to be an object, followed by another $\cdots$ where, if the first object had not been, the second never had existed}'',\footnote{The second-half of the quoted definition is also known as the ``but-for'' test.} suffers from the problem of preemption: it could still be the case that in the absence of the first object, the second would have occurred due to the presence of a third object, whose effects in the original scenario was preempted by the first's. In other words, it might be that in the original scenario, both the first and the third objects were competing to achieve the same effect (the second object), and the third failed to do so, as the first, which occurred earlier, had already achieved the effect. HP's formalization avoids preemption by performing selective counterfactual analysis 
under carefully chosen contingencies. While their inspirational work has been used for practical applications,  
their approach based on SEM has been nevertheless shown to fail for some examples and was criticized for its limited expressiveness \cite{Hopkins05,HopkinsP07,GlymourDGERSSTZ10}. To deal with this, others have expanded HP's account with additional features, such as temporal logic \cite{GladyshevADD025} and more general forms of counterfactuals that are non-interventionist in nature \cite{Aguilera-Ventura25}. Yet others have proposed alternate formalizations of actual cause from first principles, e.g.\ using the notions of \emph{counterfactual dependence} and \emph{production} \cite{BeckersV18} and from a non-counterfactual ``regularity’’ perspective \cite{Mackie65}, e.g., using propositional non-monotonic logic frameworks \cite{Bochman18a, Bochman18b, Bochman03}.
\par
In recent years researchers have become increasingly interested in studying causation within more expressive action-theoretic frameworks, in particular in that of the situation calculus \cite{BatusovS17,BatusovS18,KhanS20}. Among other things, this allows one to formalize causation from the perspective of individual agents by defining a notion of epistemic causation \cite{KL21} and by supporting causal reasoning about conative effects, which in turn has proven useful for explaining agent behaviour using causal analysis \cite{KR23}. This line of work has recently been extended to deal with non-deterministic domains \cite{KLR25a,KLR25b} and multiagent synchronous concurrent games \cite{KKL25}, which led to the formalization of novel concepts such as causal responsibility \cite{KKL25tr}.\footnote{The term ``causal responsibility'' is also used to refer to strategic responsibility due to the nature of actions and their (causal) effects \cite{DeGiacomoLPP25,Vincent11}; in contrast, in \cite{KKL25tr} it is used to indicate responsibility due to \emph{actual causal contributions to an outcome}.} 
\par
A distinguishing feature of the real world is that change can be both discrete and continuous. However, despite the enormous body of work on actual causality, almost all have focused on defining causes in discrete domains. In an attempt to support continuous variables, Halpern and Peters \cite{HalpernPeters22} recently introduced an extention of SEM, called generalized structural-equations models. However, it is not clear how one can formalize various aspects of action-theoretic/dynamic frameworks there, e.g.\ non-persistent change supported by fluents, possible dependency between events, temporal order of event occurrence, etc. Also, an actual definition of causation was not given.
%
Recently in \cite{MK2024}, we proposed a preliminary definition of actual cause in a hybrid action theoretic framework. 
While this definition appeals to intuition, many argue that causation should be rather defined in counterfactual terms. Note that, there is strong experimental evidence that humans understand causes using counterfactual reasoning \cite{GerstenbergGLT15,GerstenbergGLT14}.
\par
To deal with this, building on previous work \cite{KL21}, in this paper we propose two definitions of actual cause in a hybrid action-theoretic framework. Our formalization is set within a recently proposed hybrid temporal situation calculus \cite{BatusovGS19a,BatusovGS19b}. We focus on actual primary cause and study causation relative to primitive fluents exclusively. One of our proposed definitions is foundational and refines \cite{MK2024}, while the other formalizes causation through contributions, which can then be verified from a counterfactual perspective using a modified but-for test. Namely, we show that if one were to remove the identified cause along with some contingencies --in particular, actions that are currently preempted by the actual cause-- from the given scenario, the outcome would no longer follow. We prove that the proposed definitions are indeed equivalent and that our definitions of causation have some intuitively justifiable properties.
%
%

\section{Preliminaries}\label{prelim}
\noindent\paragraph{\textbf{The Situation Calculus (SC)}}
The situation calculus is a well-known second-order language for representing and reasoning about dynamic worlds \cite{McCarthyH69,Reiter01}. In the SC, all changes 
are due to named actions, which are terms in the language. Situations represent a possible world history resulting from performing some actions. The constant $S_0$ is used to denote the initial situation where no action has been performed yet. The distinguished binary function symbol $\mathit{do}(a, s)$ denotes the successor situation to $s$ resulting from performing the action $a$. The expression $\mathit{do}([a_1,\cdots,a_n],s)$ represents the situation resulting from executing actions $a_1,\cdots,a_n$, starting with situation $s$. As usual, a relational/functional fluent representing a property whose value may change from situation to situation takes a situation term as its last argument. There is a special predicate $\mathit{Poss}(a,s)$ used to state that action $a$ is executable in situation $s$. Also, the special binary predicate $s \sqsubset s'$ represents that $s'$ can be reached from situation $s$ by executing some sequence of actions.  $s\sqsubseteq s'$ is an abbreviation of $s\sqsubset s'\lor s=s'$. $s < s'$ is an abbreviation of $s\sqsubset s'\land\mathit{Exec}(s')$, where $\mathit{Exec}(s)$ is defined as $\forall a',s'.\;do(a',s')\sqsubseteq s \supset\mathit{Poss}(a', s')$, i.e.\ every action performed in reaching situation $s$ was possible in the situation in which it occurred. $s \leq s'$ is an abbreviation of $s<s'\vee s=s'$.
\par
In the SC, a dynamic domain is specified using a \emph{basic action theory (BAT)} $\mathcal{D}$ that includes the following sets of axioms: (i) (first-order or FO) initial state axioms $\mathcal{D}_{S_0}$, which indicate what was true initially; (ii) (FO) action precondition axioms $\mathcal{D}_\mathit{ap}$, characterizing $\mathit{Poss}(a, s)$; (iii) (FO) successor-state axioms (SSA) $\mathcal{D}_\mathit{ss}$, indicating precisely when the fluents change; (iv) (FO) unique-names axioms $\mathcal{D}_{\mathit{una}}$ for actions, stating that different action terms represent distinct actions; and (v) (second-order or SO) domain-independent foundational axioms $\Sigma$, describing the structure of situations \cite{LevPirRei98}.
Although the SC is SO, Reiter \cite{Reiter01} showed that for certain type of queries $\phi$, $\mathcal{D}\models\phi$ iff $\mathcal{D}_{\mathit{una}}\cup\mathcal{D}_{S_0}\models\mathcal{R}[\phi]$, where $\mathcal{R}$ is a syntactic transformation operator called {\em regression} and $\mathcal{R}[\phi]$ is a SC formula that compiles dynamic aspects of the theory $\mathcal{D}$ into the query $\phi$. Thus reasoning in the SC for a large class of interesting queries can be restricted to entailment checking w.r.t a FO theory \cite{Reiter01}.
\noindent\paragraph{\textbf{Hybrid Temporal Situation Calculus (HTSC)}}
The SC only allows discrete changes to fluents as a result of actions. However in the real world, many changes are continuous 
and happen due to the passage of time. For example, a change in room temperature after adjusting the thermostat happens over time, but not immediately. Reiter's temporal SC \cite{Reiter01} can model continuous change. In his framework, each action is given a time argument, but the fluents remain atemporal and do not actually change over time; instead, they attain certain values when these time-stamped actions are performed. One cannot query the value of a continuous fluent at some arbitrary time without referencing a time-stamped action. 
\par
To accommodate time, Reiter introduced two special functions, $\mathit{time}(a)$, which refers to the time at which an action $a$ is executed, and $\mathit{start}(s)$, which gives the starting time of the situation $s$. $\mathit{time}$ is specified by an axiom $\mathit{time}(a(\vec{x},t))=t$ (included in $\mathcal{D}_{S_0}$) for every action function $a(\vec{x},t)$ in the domain. $\mathit{start}$ is specified by the new foundational axiom $\mathit{start}(do(a,s))=\mathit{time}(a).$ The starting time of $S_0$ is not enforced. To outlaw temporal paradoxes, the abbreviation $\mathit{Exec}(s)$ is redefined as: 
$\mathit{Exec}(s)\defi\forall a,s’.\;do(a,s’)\sqsubseteq s\supset(\mathit{Poss}(a,s’)\land\mathit{start}(s’)\leq\mathit{time}(a)).$
\par
The HTSC \cite{BatusovGS19a,BatusovGS19b} takes inspiration from hybrid systems in control theory, which are based on discrete transitions between states that continuously evolve over time. In HTSC, (atemporal) SC fluents are preserved, not to represent continuous change, but rather to provide a context within which the values of temporal fluents can change. For instance, the velocity of a ball, a continuous fluent, changes over time when the ball is in a state of falling. Here, the discrete fluent $\mathit{Falling}(s)$ serves as the context that influences how the continuous functional fluent $\mathit{velocity}(t,s)$ varies with time.
\par
In HTSC, the contexts are mutually exclusive to make sure continuous fluents do not assume two different values at the same time. This is enforced by 
the following condition 
in the background theory (the Mutex Axiom, henceforth):\footnote{Hereafter, free variables in a sentence are assumed to be $\forall$\!-quantified at the front.}
$\Phi(\overline{x}, y, t, s) \land \Phi(\overline{x}, y', t, s) \supset y = y'$.
%
HTSC modifies SC's BAT by including the axioms for $\mathit{time}(a)$ and $\mathit{start}(s)$ as well as the new definition of $\mathit{Exec}(s)$ 
in $\mathcal{D}_{S_0}$ and in $\Sigma$, respectively. Moreover, in addition to Reiter's SSA \cite{Reiter01}, which specify how fluents change as a result of named actions, HTSC introduces the following state evolution axioms (SEA) \cite{BatusovGS19a}, each of which defines how a temporal fluent $f(\vec{x})$'s value changes over time: 
%
$f(\vec{x}, t, s) = y \equiv [\Phi(\vec{x}, y, t, s) \lor y = f(\vec{x}, start(s), s) \land \neg\Psi(\vec{x}, s)]$.
%
Here $\Phi(\vec{x}, y, t, s)$ represents $ \bigvee_{1\leq i \leq k} (\gamma_i(\vec{x}, s) \land \delta_i(\vec{x}, y, t, s))$, where $\gamma_i$ is a context and $\delta_i$ is a relevant formula to be used to compute the temporal fluent $f$'s value when $\gamma_i$ holds.  $\Psi$ denotes  $\bigvee_{1\leq i \leq k} \gamma_i(\vec{x}, s)$, which represents all the mutually exclusive relevant contexts of a temporal fluent. 
Thus the above SEA states that the value of a temporal fluent $f(\vec{x})$ changes only if some context $\gamma_i$ holds and according to the rules defined in the formula $\delta_i$ associated with $\gamma_i$; otherwise, it remains unchanged. The formula $\delta_i(\vec{x},y,t,s)$ implicitly or explicitly defines $y$ using some arbitrary (domain-specific) constraints on the variables and fluents.
\par
A \emph{hybrid BAT} \cite{BatusovGS19a} 
is 
a collection of axioms 
$\mathcal{D}_{S_0}\cup\mathcal{D}_{ap}\cup\mathcal{D}_{ss}\cup\mathcal{D}_{se}\cup\mathcal{D}_{una}\cup\Sigma$, where $\mathcal{D}_{se}$ is the set of state evolution axioms.
\noindent\paragraph{\textbf{Example}} \label{ExpNPP}
We use a simple nuclear power plant (NPP) 
as a running example, 
where the core temperature needs to be maintained below a certain threshold using 
a cooling system and 
coolant supplied through pipes attached to it. For simplicity, we assume just one (unnamed) pipe per plant. In this domain, we have the following actions: 
    $\mathit{rup}(p,t),$ i.e.\ the pipe in plant $p$ ruptures at time $t$;
    $\mathit{csFailure}(p,t)$, i.e.\ the cooling system of $p$ fails at $t$;
    $\mathit{fixP}(p,t)$, i.e.\ the pipe of plant $p$ is fixed at $t$;
    $\mathit{fixCS}(p,t)$, i.e.\ the cooling system of $p$ is fixed at $t$; and
    $\mathit{mRad}(p,t)$, i.e.\ the radiation level of $p$ is measured at $t$. 
%
The discrete fluents in this domain are: 
    $\mathit{Ruptured(p,s)}$, representing plant $p$ has a ruptured pipe in situation $s$, and
    $\mathit{CSFailed(p,s)}$, representing the cooling system of $p$ has failed in $s$. 
We also have a temporal functional fluent $\mathit{coreTemp(p,t,s)}$, which stands for the core temperature of power plant $p$ at time $t$ in situation $s$. 
\par
We now give the 
action precondition axioms for these actions.
\par
\vspace{-1 em}
\begin{small}
\begin{eqnarray*}
&&\hspace{-7 mm}\mathit{Poss}(\mathit{rup}(p,t), s),\;\;\;\;\;\;\;\;\mathit{Poss}(\mathit{fixP}(p, t), s) \equiv \mathit{Ruptured}(p,s),\\
&&\hspace{-7 mm}\mathit{Poss}(\mathit{csFailure}(p,t),s)\equiv\neg\mathit{CSFailed}(p,s),\\
&&\hspace{-7 mm}\mathit{Poss}(\mathit{fixCS}(p,t),s)\equiv\mathit{CSFailed}(p,s),\;\;\;\;\;\;\;\;\mathit{Poss}(\mathit{mRad}(p,t), s).    
\end{eqnarray*}
\end{small}
\noindent 
We also have the following successor-state axioms:
\par\vspace{-1 em}
\begin{small}
\begin{eqnarray*}
&&\hspace{-7 mm}\mathit{Ruptured}(p,do(a,s))\equiv\exists t.\;a=\mathit{rup}(p,t)\\
&&\hspace{20 mm}\mbox{}\lor (\mathit{Ruptured}(p,s)\land\neg\exists t.\;a=\mathit{fixP}(p,t)),\\
&&\hspace{-7 mm}\mbox{}\mathit{CSFailed}(p,do(a,s))\equiv\exists t.\;a=\mathit{csFailure}(p,t)\\
&&\hspace{20 mm}\mbox{}\lor (\mathit{CSFailed}(p,s)\land\neg\exists t.\;a=\mathit{fixCS}(p,t)).
\end{eqnarray*}
\end{small}
\noindent Thus, e.g., the first axiom says that the pipe of plant $p$ has ruptured after action $a$ happens in situation $s$, i.e.\ in $\mathit{do}(a, s)$, iff $a$ is the action of rupturing the pipe of $p$ at some time $t$, or the pipe of $p$ was already ruptured in $s$ and $a$ does not refer to the action of fixing the pipe of $p$ at some time $t$. 
\par
The contexts for core temperature $\gamma_i$, $i$ = 1 to 3, comprise: 
$\gamma_1(p) = \mathit{Ruptured}(p) \land \mathit{CSFailed}(p)$; 
$\gamma_2(p) = \mathit{Ruptured}(p) \land \neg\mathit{CSFailed}(p)$; 
and 
$\gamma_3(p) = \neg\mathit{Ruptured}(p) \land \mathit{CSFailed}(p)$. 
%
We now give state evolution axiom for temporal fluent $\mathit{coreTemp}(p, t, s)$.
\par\vspace{-1 em}
\begin{small}
\begin{eqnarray*}
&&\hspace{-7 mm}\mbox{}\mathit{coreTemp}(p, t, s) = y\equiv{}\\ 
&&\hspace{-7 mm}[(\gamma_1(p,s)\!\land\!\delta_1(p,t,s))\lor(\gamma_2(p,s)\!\land\!\delta_2(p,t,s))
 \lor(\gamma_3(p,s)\!\land\!\delta_3(p,t,s))\\
&&\hspace{-7 mm}\mbox{}\lor (y = \mathit{coreTemp}(p, start(s), s) \land \neg(\gamma_1(p,s)\vee\gamma_2(p,s)\vee\gamma_3(p,s)))].
\end{eqnarray*} 
\end{small}
\noindent Thus, the value of $\mathit{coreTemp}$ of $p$ at time $t$ in situation $s$ is dictated by 
$\delta_1$ if both $p$'s cooling system has failed and its pipe was ruptured; by
$\delta_2$ if $p$'s pipe was ruptured but its cooling system is working; by
$\delta_3$ if $p$'s cooling system has failed but its pipe is intact; and 
it remains the same as in $start(s)$, otherwise. 
Here, $\delta_i$ for $i=1,2,3$ is defined as follows: 
$\delta_i(p,t,s) \defi{}$
$\mathit{coreTemp}(p, t, s) = \mathit{coreTemp}(p, \mathit{start}(s), s) + (t - \mathit{start}(s)) \times\Delta_i,$ 
where, $\Delta_i$ is the rate of change such that $\Delta_1=100, \Delta_2=35,$ and $\Delta_3=55.$ The above formula computes $\mathit{coreTemp}(p, t, s)$ by adjusting the initial temperature at $\mathit{start}(s)$ based on the elapsed seconds $t - \mathit{start}(s)$ and specifies a rate of temperature increase, 100, 35, or 55 degrees per second, respectively, depending on the context $\gamma_i$. For simplicity, we use these basic equations, but we could have used more realistic differential equations to model temperature change as well.
\par
We assume the availability of unique names axioms for actions. 
Finally, we assume that there is at least one nuclear power plant $P_1$, for which the following initial state axioms hold: 
$\neg\mathit{Ruptured}(P_1, S_0)$; 
$\neg\mathit{CSFailed}(P_1,S_0)$; and 
$\mathit{coreTemp}(P_1,\mathit{start}(S_0),S_0) = -50$.
%
Henceforth, we use $\mathcal{D}_\mathit{npp}$ to refer to the above axiomatization. \hfill\qed
\section{Actual Achievement Cause in the SC}
Given a history of actions/events (often called a scenario) and an observed effect, \emph{actual causation} involves figuring out which of these actions are responsible for bringing about this effect. 
When the effect is assumed to be false before the execution of the actions in the scenario and true afterwards, the notion is referred to as \emph{achievement (actual) causation}. Based on Batusov and Soutchanski's original proposal \cite{BatusovS18}, Khan and Lesp\'{e}rance (KL) recently offered a definition of achievement cause in the SC \cite{KL21}. Both of these frameworks assume that the scenario is a linear sequence of actions, i.e.\ no concurrent actions are allowed. 
%
\par 
To formalize reasoning about effects, KL \cite{KL21} introduced the notion of \emph{dynamic formulae}. 
An effect $\varphi$ in their framework is thus a dynamic formula.\footnote{While KL also study the epistemics of causation and formalized epistemic dynamic formulae, we restrict our discussion to objective causality only. Also, while we provide the intuition behind KL's definition of actual cause, familiarity with its technical formalization is not required for this paper; for completeness, the details are given in the technical appendix.} 
As usual, we will often suppress the situation argument of $\varphi$. 
Given an effect $\varphi,$ the actual causes are defined relative to a {\em narrative} (variously known as a {\em scenario} or a {\em trace}) $s$. When $s$ is ground, the tuple $\langle\varphi,s\rangle$ is often called a {\em causal setting} \cite{BatusovS18}. Also, it is assumed that $s$ is executable, and $\varphi$ was false before the execution of the actions in $s$, but became true afterwards, i.e.\ 
$\mathcal{D}\models \mathit{Exec}(s)\wedge\neg\varphi[S_0]\wedge\varphi[s]$. Here $\varphi[s]$ denotes the formula obtained from $\varphi$ by restoring the appropriate situation argument into all fluents in $\varphi$ (see the appendix for a definition
).
%
%
%
%
\par
Note that since all changes in the SC result from actions, the potential causes of an effect $\varphi$ are identified with a set of action terms occurring in $s$. However, since $s$ might include multiple occurrences of the same action, one also needs to identify the situations where these actions were executed. To deal with this, KL required that each situation be associated with a time-stamp, which is an integer for their theory.  Since in the context of knowledge, there can be different epistemic alternative situations (possible worlds) where an action occurs, using time-stamps provides a common reference/rigid designator for the action occurrence. KL assumed that the initial situation starts at time-stamp 0 and each action increments the time-stamp by one. Thus, their action theory includes the following axioms: 
%
$\mathit{timeStamp}(S_0)=0;$ 
$\forall a,s,ts.\;\mathit{timeStamp}(do(a,s))=ts\equiv \mathit{timeStamp}(s)=ts-1.$
%
\noindent With this, causes in their framework is a non-empty set of action-time-stamp pairs derived from the trace $s$ given $\varphi$. 
%
%
\par
We will now present KL's definition of causes in the SC. The idea behind how causes are computed is as follows. Given an effect $\varphi$ and scenario $s$, if some action of the action sequence in $s$ triggers the formula $\varphi$ to change its truth value from false to true relative to $\mathcal{D}$, and if there are no actions in $s$ after it that change the value of $\varphi$ back to false, then this action is an actual cause of achieving $\varphi$ in $s$. Such causes are referred to as {\em primary} or {\em direct} causes:
\begin{definition}[Primary Cause \cite{KL21}]\label{PCause}\label{PCause0}
\begin{small}
\begin{eqnarray*}
&&\hspace{-7 mm}\mathit{CausesDir}(a,ts,\varphi,s)\defi
\exists s_a.\;\mathit{timeStamp}(s_a)=ts\wedge(S_0<do(a,s_a)\leq s)\\
&&\hspace{3 mm}\mbox{}\wedge\neg\varphi[s_a]\wedge\forall s'.(do(a,s_a)\leq s'\leq s\supset\varphi[s']).
\end{eqnarray*}
\end{small}
\end{definition} 
\noindent That is, $a$ executed at time-stamp $ts$ is the \emph{primary cause} of effect $\varphi$ in situation $s$ 
iff $a$ was executed in a situation with time-stamp $ts$ in scenario $s$, $a$ caused $\varphi$ to change its truth value to true, and no subsequent actions on the way to $s$ falsified $\varphi$. 
\par
Now, note that a (primary) cause $a$ might have been non-execut\-able initially. Also, $a$ might have only brought about the effect conditionally and this context condition might have been false initially. Thus earlier actions on the trace that contributed to the preconditions and the context conditions of a cause must be considered as causes as well. KL introduced $\mathit{Causes}(a,ts,\varphi,s)$ to inductively capture both primary and indirect causes; see appendix.
\noindent\paragraph{\textbf{Example (cont'd).}} 
Assume a SC BAT for a variant of our example domain, $\mathcal{D}_\mathit{npp}^\mathit{SC}$ that only includes the atemporal variants of the last 3 actions, i.e.\ $\mathit{mRad}(p),$ $\mathit{csFailure}(p),$ and $\mathit{fixCS}(p)$, the $\mathit{CSFailed}(p,s)$ fluent, and the associated 
axioms. 
Consider the scenario $\sigma_1\;=\;do([\mathit{mRad}(P_1),\mathit{csFailure}(P_1),\mathit{fixCS}(P_1),$ $\mathit{mRad}(P_1),$ $\mathit{csFailure(P_1)},\mathit{mRad}(P_1)],S_0)$ and the effect $\varphi_1$ $=\mathit{CSFailed}(P_1)$, for power-plant $P_1$. We can show that: 
$\mathcal{D}_\textit{npp}^\mathit{SC}\models\mathit{CausesDir}(\mathit{csFailure}(P_1),$ $4,\varphi_1,\sigma_1).$
\noindent Moreover, we have the following result about indirect causes: 
$\mathcal{D}_\textit{npp}^\mathit{SC}\models\mathit{Causes}(\mathit{csFailure}(P_1),1,\varphi_1,\sigma_1)
\land\mathit{Causes}(\mathit{fixCS}(P_1),$ $2,\varphi_1,\sigma_1)
\land\mathit{Causes}(\mathit{csFailure}(P_1),4,\varphi_1,\sigma_1).$
\noindent Explaining backwards, the second $\mathit{csFailure}(P_1)$ action executed at time-stamp 4 is a cause as it is a direct cause. The $\mathit{fixCS}(P_1)$ action is a cause since had it not for this action, the primary cause would not have been executable (see precondition axiom for $\mathit{csFailure}$). Finally, the first cooling system failure action is required to make $\mathit{fixCS}(P_1)$ executable.\footnote{Since we do not deal with secondary causes in this paper, we will not pursue this discussion further and rather refer the interested reader to \cite{KL21}.}
\hfill\qed
%
%

\section{Primary Actual Cause in the HTSC}
The above definition, which provides some insight on which actions are causes, can also be used to reason about atemporal effects in the HTSC.\footnote{Besides continuous time, we also keep KL's time-stamps to uniquely identify causes. Note that while one might be tempted to use the execution time of an action for this, HTSC do allow two actions to have the same execution time, and thus using execution time to identify action instances will not work.} However it cannot handle temporal effects and continuous change. To deal with this, we next formalize causes of temporal effects. 
%
%
%
%
We start by defining a notion of causal setting in the HTSC.
\begin{definition}[Hybrid Temporal Achievement Causal Setting]\label{HTSCSetting}
A \textit{hybrid temporal achievement causal setting} (henceforth, \textit{hybrid setting}) is a tuple $\langle\mathcal{D},\sigma,\varphi\rangle$, where $\mathcal{D}$ is a hybrid BAT, $\sigma\neq S_0$ is a ground situation term of the form $\mathit{do}([\alpha_1,\ldots, \alpha_n],$ $S_0)$ with non-empty sequence of ground action functions $\alpha_1,\ldots,\alpha_n,$ and $\varphi$ is a situation 
and time-suppressed temporal SC formula that is uniform in $s$ (meaning that it has no occurrences of $\mathit{Poss}$, $\sqsubseteq$, other situation terms besides $s$, and quantifiers over situations) such that: 
\[
\mathcal{D}\models\mathit{Exec}(\sigma)\land\neg\varphi[\mathit{start}(S_0),S_0]\land\neg\varphi[\mathit{time}(\alpha_1),S_0]\land \varphi[\mathit{start}(\sigma),\sigma].
\]    
\end{definition}
\noindent 
To rule out cases where no actions contributed to the effect (this can happen when the effect $\varphi$ becomes true during the initial situation as some context was true initially), the above definition requires $\varphi$ to be false at the beginning and at the end of the initial situation $S_0$.\footnote{As we will see later, this restriction is not strong enough, and in some contexts it is indeed still possible for an effect to become true despite having no contributing actions, e.g., when the context that brought about the effect was true initially.} 
To ensure that $\varphi$ is actually achieved within the scenario, we also require $\varphi$ to be true when observed at the beginning of $\sigma$. In a hybrid domain, in general, one can query the causes of an observed effect at any time-point within a situation $\sigma$, i.e.\ at any time-point in between the start-time and the end-time of $\sigma$, inclusive. To simplify, we assume that the query is posed relative to the starting time of $\sigma$. 
If this is not the case, one can always add a subsequent dummy action, $\mathit{noOp}$ (which has no effect and is always possible to execute, i.e.\ $\forall s.\mathit{Poss}(\mathit{noOp},s)$), and query wrt to 
scenario $\mathit{do}(\mathit{noOp},\sigma)$ instead.
\par
In this paper, 
we deal with effects $\varphi$ that are 
conditions on the values of primitive temporal fluents (e.g.\ $\mathit{coreTemp}(P)>1000$) 
exclusively. In the following, we will write $\varphi[t,s]$ to denote the formula obtained from $\varphi$ by restoring the appropriate situation and time arguments into the fluents in $\varphi$, and thus, for example, $\mathit{coreTemp}(P_1)[5, S_0]$ stands for $\mathit{coreTemp}(P_1, 5, S_0)$. 
\par
We are now ready to give our definition of actual cause of a primitive temporal effect. 
Note that for discrete effects, the primary cause, which is an action $a$ executed at time-point $t$, brings about the effect discretely and immediately after the execution of $a$ at $t$. For the temporal case, however, the effect might be only realized after a while, and one or more irrelevant actions might be executed in between. For example, if one changes the temperature on a thermostat, the desired room temperature will likely be achieved after some time has passed, but in between there can be other irrelevant actions that might be executed, those that have no impact on the value of the room temperature. 
Thus, while defining the primary achievement cause, in addition to actions causing the change in a temporal fluent's value, we need to identify the situation where the
effect was actually achieved within the scenario, i.e.\ the ``achievement situation''.
\par
Recall that in the HTSC, the values of temporal fluents can change only when certain relevant contexts are enabled. Contexts for a temporal fluent, which are (mutually exclusive) discrete fluents, on the other hand, are enabled or disabled due to the execution of actions. Thus, when determining the primary cause of some temporal fluent having a certain value, we first need to identify the last context $\gamma$ that was enabled before the fluent acquired this value, i.e.,\ the context $\gamma$ which was true in the achievement situation $s_\varphi$ of the effect $\varphi$, and then figure out the action $a$ that caused/enabled this context in $s_\varphi$. Since contexts are mutually exclusive (no two contexts can be true at the same time), $\gamma$ must have been the only enabled context in the achievement situation $s_\varphi$, which ensures that the action $a$ is unique. Additionally, $a$ must have been the last action that enabled $\gamma$, and whose contribution brought about the temporal effect under consideration.\footnote{There can certainly be other secondary/indirect causes, but we are only concerned with primary causes in this paper.}
%
\par 
In the following, we give the definition of primary cause relative to a hybrid causal setting $\langle\mathcal{D},\sigma,\varphi\rangle$. In this, the effect $\varphi$ is a constraint on the values of a situation- and time-suppressed primitive temporal fluent $f(\vec{x})$. 
Also, $\gamma_i^f$ refers to the contexts $\delta_i(\vec{x}, y, t, s)$, indexed by $i$, that are associated with the temporal fluent $f$.

\begin{definition}[Primary Temporal Achievement Cause]\label{PTI}
\begin{small}
\begin{eqnarray*}
&&\hspace{-7 mm}\mathit{CausesDir_{temp}^{prim}}(a, ts,\varphi,s)\defi\\
&&\hspace{7 mm}\exists s_\varphi.\;\mathit{AchvSit}(s_\varphi,\varphi,s)
\land \exists i.\;\mathit{CausesDir}(a, ts, \gamma_i^f, s_\varphi).
\end{eqnarray*}
\end{small}
\end{definition}
\noindent That is, an action $a$ executed at time-stamp $ts$ directly causes the effect $\varphi$ in scenario $s$ iff the achievement situation of $\varphi$ in $s$ is $s_\varphi$, and $a$ executed in some earlier situation with time-stamp $ts$ directly caused the active context $\gamma_i^f$ for the temporal fluent $f$ mentioned in $\varphi$ in scenario $s_\varphi$. Here, $\mathit{CausesDir}(a, ts, \gamma_i^f\!, s_\varphi)$ is as defined in Definition \ref{PCause0}. Note that, $\mathit{CausesDir}(a, ts, \gamma_i^f\!, s_\varphi)$ implies that the context $\gamma_i^f$ holds in $s_\varphi$, i.e., $\gamma_i^f[s_\varphi]$, and thus it is indeed the (unique) context that was active in $s_\varphi$. Since contexts are mutually exclusive, we do not need to check whether a subsequent action executed after $a$ made another context true before the achievement of the effect in situation $s_\varphi$.
\par
To define the achievement situation, observe that the effect $\varphi$ must be true at the end of the achievement situation and must remain true in all subsequent situations and times. But since there can be multiple situations in between the achievement situation and the final situation in the scenario, we must also ensure that the achievement situation is the earliest such situation to uniquely identify it. The following definition captures this intuition.
\begin{definition}[Achievement Situation]\label{DefAchvSit}
\begin{small}
\begin{eqnarray*}
%
&&\hspace{-7 mm}\mathit{AchvSit}(s_\varphi,\varphi,s)\defi{}\\
&&\hspace{-3 mm}\mathit{AchvSitAux}(s_\varphi,\varphi,s)\land\neg\exists s''.\;(s''\! < s_\varphi\land\mathit{AchvSitAux}(s'',\varphi,s)),\\
&&\hspace{-7 mm}\mathit{where,}\\
&&\hspace{-7 mm}\mathit{AchvSitAux}(s_\varphi,\varphi,s)\defi\varphi[\mathit{end}(s_\varphi, s), s_\varphi]\\
&&\hspace{5 mm}{}\land\forall s'\!, t.\; s_\varphi < s'\! \leq s \land start(s') \leq t \leq \mathit{end}(s'\!, s) \supset \varphi[t, s'].
\end{eqnarray*}
\end{small}
\end{definition}
\noindent That is, $s_\varphi$ is the achievement situation of the effect $\varphi$ in scenario $s$ iff $\varphi$ holds at the end of the situation $s_\varphi$, and $\varphi$ continues to hold in all subsequent situations $s'$ and time points $t$ between $\mathit{start}(s')$ and the ending time of $s'$ in $s$, $\mathit{end(s',s)}$. Additionally, there must not exist another preceding situation $s''$ before $s_\varphi$ that satisfies these conditions.  This ensures that $s_\varphi$ is the earliest situation in which $\varphi$ is achieved and maintained till the start of the scenario $s$.
\par
Here, we use the function $\mathit{end}(s',s)$ to specify the end time of a situation $s'$ within a given scenario $s$. Since it is not directly possible to talk about the end time of a situation in the HTSC (as the end time does not really exist when the scenario is not known), we use the time of the next action within the scenario to denote this.
\begin{definition}[End Time of a Situation within a Context]
\begin{eqnarray*}
&&  \hspace{-7 mm}\mathit{end}(s',s)\defi
    \begin{cases}      
      \mathit{start}(s') & \!\!\textup{if }s'=s\\
      \mathit{time}(a) & \!\!\textup{if }\exists a.\;do(a,s')\leq s\\
    \end{cases}      
\end{eqnarray*}
\end{definition}
\noindent Thus, the end time of a situation $s'$ in scenario $s$ is the starting time of $s'$ if $s'$ is the last situation in scenario $s$, or the time of the execution of the next action $a$ in $s'$ within scenario $s$, i.e., $\mathit{time}(a)$ such that $\mathit{do}(a, s')\leq s$. Note that, since our definition of hybrid setting guarantees that causes are computed relative to the starting time of the scenario, taking the starting time of $s'$ as the end time of it is reasonable when $s' = s$.
\paragraph{\textbf{Example (cont'd).}}\label{PrimExp}
Consider 
setting $\langle \mathcal{D}_\textit{npp}, \varphi_2, \sigma_2 \rangle$, where 
$\varphi_2\defi\mathit{coreTemp}(P_1) \geq 1000$, 
and $\sigma_2 = do([\mathit{rup}(P_1, 5),\mathit{csFailure}(P_1, 15),$ $\mathit{mRad}(P_1, 20),\mathit{fixP}(P_1, 26)],S_0)$. 
Recall that, the last argument of each action represents the execution time of that action. This 
is depicted in Figure \ref{fig:def2}, which also shows the temperature at the beginning and 
end of each situation for clarity. Here, time progresses vertically, while actions unfold horizontally. Also, $S_1=do(\mathit{rup}(P_1,5),S_0)$, 
etc. 
\begin{figure}[t]
\centering
\resizebox{0.42\textwidth}{!}{%
\begin{tikzpicture}[font=\fontsize{10}{13}\selectfont]
\draw[thick] 
    (0,2) node[anchor=east] {$-50^\circ, S_0$} -- (0,0) node[anchor=east] {$-50^\circ, S_0$}
    -- (3,0) node[anchor=south, yshift=-0.5mm, xshift=-16mm] {$\quad\mathit{rup}(P_1, 5)\;\;$} node[anchor=west] {$\varphi_2 = -50^\circ, \gamma_2, S_1$}
    -- (3,-2) node[anchor=east]  {$300^\circ, \gamma_2, S_1$}
    -- (6,-2) node[anchor=south, yshift=-0.5mm, xshift=-15mm] {$\mathit{csFailure}(P_1, 15)$} node[anchor=west] {$300^\circ, \gamma_1, S_2$}
    -- (6,-4) node[anchor=east]  {$800^\circ, \gamma_1, S_2$}
    -- (9.5,-4) node[anchor=south, yshift=-0.5mm, xshift=-17mm] {$\mathit{mRad}(P_1, 20)\;$} node[anchor=west] {$800^\circ, \gamma_1, S_3$}
    -- (9.5,-4.8) node[anchor=west] {$1000^\circ, \varphi_2$}
    -- (9.5,-6) node[anchor=east]  {$1400^\circ, \gamma_1, S_3$}
    -- (12.5,-6) node[anchor=south, yshift=-0.5mm, xshift=-13mm] {$\mathit{fixP}(P_1, 26)\quad$} node[anchor=west] {$1400^\circ, \gamma_3, \sigma_2$};

\filldraw [black] (0,0) circle (2pt);
\filldraw [black] (0,2) circle (2pt);
\filldraw [black] (3,0) circle (2pt);
\filldraw [black] (3,-2) circle (2pt);
\filldraw [black] (6,-2) circle (2pt);
\filldraw [black] (6,-4) circle (2pt);
\filldraw [black] (9.5,-4) circle (2pt);
\filldraw [black] (9.5,-4.8) circle (2pt);
\filldraw [black] (9.5,-6) circle (2pt);
\filldraw [black] (12.5,-6) circle (2pt);
\end{tikzpicture}
}
    \caption{Primary Cause in Hybrid Domains: Primitive Case}
    \label{fig:def2}
\end{figure}
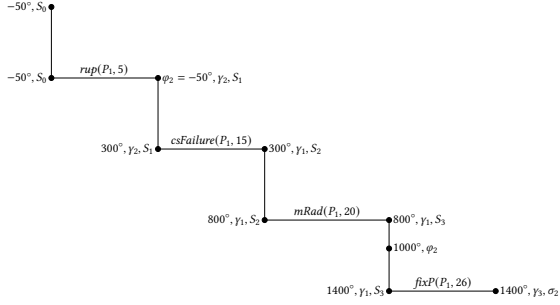
We can show that:
\begin{proposition}\label{Exp1-PDCause}
\begin{small}
\[\mathcal{D}_\textit{npp} \models \mathit{CausesDir}(\mathit{csFailure}(P_1, 15), 1, \gamma_1, S_3).\]
\end{small}
\end{proposition}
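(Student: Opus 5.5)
We prove the claim by unfolding Definition~\ref{PCause0} with $a=\mathit{csFailure}(P_1,15)$, $ts=1$, $\varphi=\gamma_1(P_1)$ and $s=S_3=do(\mathit{mRad}(P_1,20),S_2)$, and exhibiting the witness $s_a=S_1=do(\mathit{rup}(P_1,5),S_0)$. Four conjuncts must be shown to be entailed by $\mathcal{D}_\mathit{npp}$:
\begin{enumerate}
\item $\mathit{timeStamp}(S_1)=1$;
\item $S_0<do(a,S_1)\leq S_3$;
\item $\neg\gamma_1(P_1)[S_1]$;
\item $\gamma_1(P_1)[s']$ for every $s'$ with $S_2\leq s'\leq S_3$.
\end{enumerate}
The time-stamp conjunct follows from the two time-stamp axioms: $\mathit{timeStamp}(S_0)=0$ gives $\mathit{timeStamp}(S_1)=1$.

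For the ordering conjunct, note that $do(a,S_1)=S_2$ by definition. We have $S_0\sqsubset S_2\sqsubseteq S_3$ syntactically, via the foundational axioms of $\Sigma$. What remains is $\mathit{Exec}(S_2)$ and $\mathit{Exec}(S_3)$ under the temporal definition of $\mathit{Exec}$, which requires possibility and nondecreasing times.
\begin{itemize}
\item Possibility: $\mathit{rup}$ and $\mathit{mRad}$ are always possible. $\mathit{Poss}(\mathit{csFailure}(P_1,15),S_1)$ needs $\neg\mathit{CSFailed}(P_1,S_1)$, which follows from $\neg\mathit{CSFailed}(P_1,S_0)$, the successor-state axiom for $\mathit{CSFailed}$, and unique names (since $\mathit{rup}\neq\mathit{csFailure},\mathit{fixCS}$).
\item Time ordering: $\mathit{start}(S_1)=5\leq 15$ and $\mathit{start}(S_2)=15\leq 20$, using $\mathit{start}(do(a,s))=\mathit{time}(a)$ and the $\mathit{time}$ axioms. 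For the first action, $\mathit{start}(S_0)\leq 5$ is needed.
\end{itemize}

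That last requirement is the step I expect to be the main obstacle. The start time of $S_0$ is not enforced by the axioms, so I would read the setting as presupposing $\mathit{Exec}(\sigma_2)$, which fixes $\mathit{start}(S_0)\leq 5$. If that reading is unavailable, I would add it as an assumption about $S_0$; it is harmless because $S_0$'s start only matters for this ordering check.

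Negation in $S_1$: $\gamma_1(P_1)[S_1]=\mathit{Ruptured}(P_1,S_1)\land\mathit{CSFailed}(P_1,S_1)$. We showed above that $\mathit{CSFailed}(P_1,S_1)$ is false, so the conjunction fails.

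Persistence: since $\leq$ is restricted to the executable chain, the only $s'$ with $S_2\leq s'\leq S_3$ are $S_2$ and $S_3$; this uses the foundational axioms ($s\sqsubset do(a,s')\equiv s\sqsubseteq s'$). We then apply the successor-state axioms and unique names:
\begin{itemize}
\item $\mathit{Ruptured}(P_1,S_1)$ holds by the $\mathit{rup}$ disjunct.
\item It persists to $S_2$ and $S_3$, since neither $\mathit{csFailure}$ nor $\mathit{mRad}$ is a $\mathit{fixP}$ action.
\item $\mathit{CSFailed}(P_1,S_2)$ holds by the $\mathit{csFailure}$ disjunct.
\item It persists to $S_3$, since $\mathit{mRad}$ is not a $\mathit{fixCS}$ action.
\end{itemize}
Hence $\gamma_1$ holds in both $S_2$ and $S_3$. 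This is routine regression, so the only delicate point is the $\mathit{start}(S_0)$ issue noted above.
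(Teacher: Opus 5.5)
Your proof is correct and is the routine verification the paper intends (it gives no argument beyond ``We can show that''): unfold $\mathit{CausesDir}$ with witness $s_a=S_1$ and discharge each conjunct using the time-stamp, precondition, successor-state and unique-names axioms. The point you flag is a real omission in $\mathcal{D}_\mathit{npp}$ as written, since nothing there forces $\mathit{start}(S_0)\leq 5$; the paper supplies it only implicitly, by treating $\langle\mathcal{D}_\mathit{npp},\varphi_2,\sigma_2\rangle$ as a hybrid setting, which requires $\mathcal{D}_\mathit{npp}\models\mathit{Exec}(\sigma_2)$, so your added assumption is the right fix.
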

\begin{proposition}\label{Exp1-PCause}
\begin{small}
\[\mathcal{D}_\textit{npp} \models \mathit{CausesDir}^\mathit{prim}_\mathit{temp}(\mathit{csFailure}(P_1, 15), 1, \varphi_2, \sigma_2).\]
\end{small}
\end{proposition}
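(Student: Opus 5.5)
The plan is to unfold Definition~\ref{PTI} and exhibit the witnesses explicitly: $s_\varphi = S_3 = do([\mathit{rup}(P_1,5),\mathit{csFailure}(P_1,15),\mathit{mRad}(P_1,20)],S_0)$ and context index $i=1$. The second conjunct, $\mathit{CausesDir}(\mathit{csFailure}(P_1,15),1,\gamma_1,S_3)$, is exactly Proposition~\ref{Exp1-PDCause}, which we take as already established. So the entire work lies in showing $\mathcal{D}_\mathit{npp}\models\mathit{AchvSit}(S_3,\varphi_2,\sigma_2)$.

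First we compute the trajectory of $\mathit{coreTemp}(P_1)$ along $\sigma_2$. We use the initial state axioms, the successor-state axioms for $\mathit{Ruptured}$ and $\mathit{CSFailed}$, and the state evolution axiom. From the start-time axiom we get $\mathit{start}(S_1)=5$, $\mathit{start}(S_2)=15$, $\mathit{start}(S_3)=20$ and $\mathit{start}(\sigma_2)=26$. From the SSAs the active contexts are: none in $S_0$, $\gamma_2$ in $S_1$, $\gamma_1$ in $S_2$ and $S_3$, and $\gamma_3$ in $\sigma_2$. The Mutex Axiom together with the $\delta_i$ then fixes the value as a linear function on each situation.

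We must also pin down the start value of each situation. The SEA evaluated at $t=\mathit{start}(s)$ only constrains $f(\vec x,\mathit{start}(s),s)$ relative to itself, so the continuity across an action has to be argued explicitly. My first attempt is to show that the value at $\mathit{start}(do(a,s))$ equals the value at $\mathit{time}(a)$ in $s$. If the axiomatization does not yield this directly, I would read it off the intended HTSC semantics, as the figure does. This gives $-50$ on $S_0$, $-50\to 300$ on $S_1$, $300\to 800$ on $S_2$, and $800+100(t-20)$ on $S_3$, reaching $1400$ at $t=26$. On $\sigma_2$ the value at its start time is $1400$.

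With this, $\mathit{AchvSitAux}(S_3,\varphi_2,\sigma_2)$ follows. We have $\mathit{end}(S_3,\sigma_2)=26$ and $1400\ge 1000$. The only $s'$ with $S_3<s'\le\sigma_2$ is $\sigma_2$, with $\mathit{end}(\sigma_2,\sigma_2)=\mathit{start}(\sigma_2)$, where the value is $1400\ge1000$. Executability of $\sigma_2$, needed for $<$, comes from the precondition axioms: $\mathit{fixP}$ is possible since $\mathit{Ruptured}$ holds, $\mathit{csFailure}$ since $\neg\mathit{CSFailed}$, and the times are nondecreasing.

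For minimality, we show that $\neg\mathit{AchvSitAux}(s'',\varphi_2,\sigma_2)$ holds for each $s''\in\{S_0,S_1,S_2\}$. For these $s''$ the end values are $-50$, $300$ and $800$, all below $1000$, so the first conjunct $\varphi_2[\mathit{end}(s'',\sigma_2),s'']$ already fails. One also checks that $s''<S_3$ ranges exactly over these three by the foundational axioms, since $\sigma_2$ is ground.

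The main obstacle is the value-continuity step across action boundaries; everything else is routine arithmetic and unfolding of definitions.
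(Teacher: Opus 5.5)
Your proposal is correct and follows the same route as the paper. The paper gives no written proof, but its justification rests on Figure~\ref{fig:def2} and Proposition~\ref{Exp1-PDCause}, exactly as yours does: take $s_\varphi=S_3$ and context $\gamma_1$ in Definition~\ref{PTI}, discharge the $\mathit{CausesDir}$ conjunct by Proposition~\ref{Exp1-PDCause}, and read $\mathit{AchvSit}(S_3,\varphi_2,\sigma_2)$ off the temperature trajectory (end values $-50$, $300$, $800$ in $S_0,S_1,S_2$ fail $\varphi_2$, while $1400$ at time $26$ in both $S_3$ and $\sigma_2$ satisfies it). The continuity across actions that you flag cannot be derived from the state evolution axiom as written, which leaves $\mathit{coreTemp}(P_1,\mathit{start}(do(a,s)),do(a,s))$ unconstrained; the paper likewise takes these values from the intended semantics shown in the figure, so your fallback is exactly the paper's implicit assumption.
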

\noindent 
Note that, although $\mathit{mRad}(P_1, 20)$ is the latest action before achieving $\varphi_2$ in situation $S_3$, it is irrelevant because it did not enable any context and thus did not contribute to the change in $\mathit{coreTemp}$. Our definition correctly identifies $\mathit{mRad}(P_1, 20)$ as an irrelevant action. Also, even though $\mathit{rup}(P_1, 5)$ contributed to the change, it is not a primary cause of $\varphi_2$ because it is not the primary cause of the enabled context $\gamma_1$ in achievement situation $S_3$. \qed
\section{Properties}\label{HTSCPropoerties}
We now prove some intuitive properties of our formalization (see the appendix for proof sketches). The first three are self-explanatory. 
\begin{lemma}[Uniqueness of Direct Cause]\label{UniqDirCause}
Given a causal setting $\langle \mathcal{D}, \varphi, s \rangle$, it follows that:
\par\vspace{-3 mm}
\begin{small}
\begin{eqnarray*}
&&\hspace{-5 mm}\mathcal{D} \models \forall a, a'\!, ts, ts'\!.\; \mathit{CausesDir}(a,ts,\varphi,s) \land \mathit{CausesDir}(a',ts',\varphi,s) \\
&& \hspace{10 mm}\supset a = a' \land ts = ts'.
\end{eqnarray*}
\end{small}
\end{lemma}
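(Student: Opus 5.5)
The plan is to unfold Definition~\ref{PCause0} for both hypotheses and show that the two witnessing situations $s_a$ and $s_{a'}$ coincide. Everything else then follows from the successor structure of situations. Assume $\mathit{CausesDir}(a,ts,\varphi,s)$ and $\mathit{CausesDir}(a',ts',\varphi,s)$. We obtain $s_a$ with $\mathit{timeStamp}(s_a)=ts$, $S_0<do(a,s_a)\leq s$, $\neg\varphi[s_a]$, and $\varphi[s']$ for all $s'$ with $do(a,s_a)\leq s'\leq s$. Symmetrically, we obtain $s_{a'}$ with the primed properties.

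The first step uses the foundational axioms $\Sigma$. Since both $do(a,s_a)\leq s$ and $do(a',s_{a'})\leq s$, both situations are prefixes of the same situation $s$. By the standard fact that the predecessors of a fixed situation are linearly ordered by $\sqsubseteq$ (from the tree structure of situations and uniqueness of $do$ predecessors), either $do(a,s_a)\sqsubseteq do(a',s_{a'})$ or the reverse holds. Executability of $s$ is inherited by these prefixes, so we may use $\leq$ in place of $\sqsubseteq$.

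By symmetry, suppose $do(a,s_a)\leq do(a',s_{a'})$. Either the two are equal, or $do(a,s_a)<do(a',s_{a'})$. In the strict case, $do(a,s_a)\leq s_{a'}$, because a strict prefix of $do(a',s_{a'})$ is a prefix of $s_{a'}$. We also have $s_{a'}<do(a',s_{a'})\leq s$, so $do(a,s_a)\leq s_{a'}\leq s$. The persistence clause for $a$ then gives $\varphi[s_{a'}]$, which contradicts $\neg\varphi[s_{a'}]$. Hence $do(a,s_a)=do(a',s_{a'})$. The unique-names axioms for $do$ in $\Sigma$ ($do(a_1,s_1)=do(a_2,s_2)\supset a_1=a_2\land s_1=s_2$) then yield $a=a'$ and $s_a=s_{a'}$. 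Finally, $ts=\mathit{timeStamp}(s_a)=\mathit{timeStamp}(s_{a'})=ts'$, because $\mathit{timeStamp}$ is a function.

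The main obstacle is the linearity step, namely that two prefixes of $s$ are $\sqsubseteq$-comparable. It is not stated as an axiom, so I would prove it by the second-order induction axiom on $s$. For $s=S_0$ the claim is trivial, since no $do(\cdot)\sqsubseteq S_0$. For $s=do(b,s^*)$, anything $\sqsubseteq s$ is either $s$ itself or $\sqsubseteq s^*$, by the axiom $s_1\sqsubset do(b,s_2)\equiv s_1\sqsubseteq s_2$. If either prefix equals $s$, comparability is immediate; otherwise the induction hypothesis applies. A smaller point needing care is the passage from $do(a,s_a)<do(a',s_{a'})$ to $do(a,s_a)\leq s_{a'}$, which uses the same axiom.
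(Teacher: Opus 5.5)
Your proof is correct and takes the same approach as the paper: it is the unfolding of the definition of $\mathit{CausesDir}$ that the paper's one-line proof (``follows directly from the definition'') leaves implicit, namely $\sqsubseteq$-comparability of the two prefixes $do(a,s_a)$ and $do(a',s_{a'})$ of $s$, the persistence clause forcing $\varphi[s_{a'}]$ against $\neg\varphi[s_{a'}]$ in the strict case, and then unique names for $do$ together with functionality of $\mathit{timeStamp}$. Your induction for linearity of prefixes and your handling of $\mathit{Exec}$ when moving between $\sqsubseteq$ and $\leq$ (via $S_0<do(\cdot)$ and prefix-closure of $\mathit{Exec}$) are sound, so nothing is missing.
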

%
\begin{lemma}[Uniqueness of Achievement Situation]\label{UniqAchvSit}
Given a hybrid setting $\langle \mathcal{D}, \varphi, s \rangle$, we have:
\par\vspace{-3 mm}
\begin{small}
\begin{eqnarray*}
&&\mathcal{D} \models \mathit{AchvSit}(s_\varphi, \varphi, s) \land \mathit{AchvSit}(s'_\varphi, \varphi, s) \supset s_\varphi = s'_\varphi.
\end{eqnarray*}
\end{small}
\end{lemma}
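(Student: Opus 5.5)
The plan is to argue directly from Definition~\ref{DefAchvSit}, using the fact that $\sigma$-prefixes are linearly ordered by $<$. Assume $\mathit{AchvSit}(s_\varphi,\varphi,s)$ and $\mathit{AchvSit}(s'_\varphi,\varphi,s)$. Both conjuncts give $\mathit{AchvSitAux}(s_\varphi,\varphi,s)$ and $\mathit{AchvSitAux}(s'_\varphi,\varphi,s)$. The minimality clause of each says that no situation strictly $<$-below it satisfies $\mathit{AchvSitAux}$. If we can show that $s_\varphi$ and $s'_\varphi$ are $<$-comparable, then $s_\varphi<s'_\varphi$ would contradict minimality of $s'_\varphi$, and symmetrically for $s'_\varphi<s_\varphi$. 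This leaves only $s_\varphi=s'_\varphi$.

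The key step is therefore comparability. Here I intend to establish (or at least make explicit) that any situation satisfying $\mathit{AchvSitAux}(\cdot,\varphi,s)$ is a prefix of $s$, i.e.\ $s_\varphi\le s$. The definition of $\mathit{AchvSitAux}$ does not state this literally. However, $\mathit{end}(s_\varphi,s)$ is only defined when $s_\varphi=s$ or $do(a,s_\varphi)\le s$ for some $a$, so the intended reading (and the one I would adopt, treating $\mathit{end}$ as partial) is that $s_\varphi\le s$. Given this, I use the foundational axioms $\Sigma$: the set of $\sqsubseteq$-predecessors of a fixed situation is linearly ordered. This follows by induction on $s$ using $s'\sqsubset do(a,s)\equiv s'\sqsubseteq s$. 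Since $\mathit{Exec}(s)$ holds in the hybrid setting, $\mathit{Exec}$ is inherited by every prefix, and so any two prefixes $s_1,s_2$ of $s$ satisfy $s_1<s_2$, $s_1=s_2$, or $s_2<s_1$ in the paper's executable sense.

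The proof then closes by the case analysis above. I expect the main obstacle to be the comparability step. Specifically, I need to justify that achievement situations lie on the scenario: either the $\mathit{end}$ function forces it, or I add $s_\varphi\le s$ as an implicit conjunct, as the prose ``earliest situation in which $\varphi$ is achieved and maintained'' suggests. I also need to lift $\sqsubseteq$-linearity to $<$ via $\mathit{Exec}$. Both are routine consequences of $\Sigma$ once the prefix condition is in hand. Note that neither the Mutex Axiom nor the specific form of $\varphi$ is needed; the hybrid setting assumptions matter only for ensuring $\mathit{Exec}(s)$.
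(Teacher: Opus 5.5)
Your proposal is correct and takes essentially the same route as the paper, whose proof simply says the result follows directly from the minimality clause of Definition~\ref{DefAchvSit}. You additionally make explicit the comparability step that the paper leaves tacit: any witness of $\mathit{AchvSitAux}(\cdot,\varphi,s)$ must lie on $s$ (under the intended case-reading of $\mathit{end}$), and prefixes of the executable scenario $s$ are linearly ordered by $<$; this is exactly what the minimality argument needs.
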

%
\begin{theorem}[Uniqueness of Primary Cause of Temporal Effects]\label{Property-Uniqueness}
Given a hybrid setting $\langle \mathcal{D}, \varphi, s \rangle$, we have:
\par\vspace{-3 mm}
\begin{small}
\begin{eqnarray*}
&&\hspace{-5 mm}\mathcal{D} \models \mathit{CausesDir}^\mathit{prim}_\mathit{temp}(a_1, ts_1, \varphi, \sigma) \land \mathit{CausesDir}^\mathit{prim}_\mathit{temp}(a_2, ts_2, \varphi, \sigma) \\
&& \hspace{10 mm} \supset a_1 = a_2 \land ts_1 = ts_2.
\end{eqnarray*}
\end{small}
\end{theorem}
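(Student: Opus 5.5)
The plan is to unfold Definition~\ref{PTI} for both conjuncts of the antecedent and reduce the claim to Lemma~\ref{UniqAchvSit} and Lemma~\ref{UniqDirCause}, with the Mutex Axiom bridging the gap between possibly different context indices.

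\textbf{Step 1: unfold the definition.} Assume $\mathit{CausesDir}^\mathit{prim}_\mathit{temp}(a_1, ts_1, \varphi, \sigma)$ and $\mathit{CausesDir}^\mathit{prim}_\mathit{temp}(a_2, ts_2, \varphi, \sigma)$. By Definition~\ref{PTI} we obtain situations $s_\varphi, s'_\varphi$ and indices $i, j$ such that
\begin{itemize}
\item $\mathit{AchvSit}(s_\varphi,\varphi,\sigma)$ and $\mathit{CausesDir}(a_1,ts_1,\gamma_i^f,s_\varphi)$;
\item $\mathit{AchvSit}(s'_\varphi,\varphi,\sigma)$ and $\mathit{CausesDir}(a_2,ts_2,\gamma_j^f,s'_\varphi)$.
\end{itemize}

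\textbf{Step 2: identify the achievement situations.} Lemma~\ref{UniqAchvSit} gives $s_\varphi = s'_\varphi$.

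\textbf{Step 3: identify the contexts.} This is the main obstacle, because Lemma~\ref{UniqDirCause} is stated for a single fixed effect, while the indices $i$ and $j$ may a priori differ. By Definition~\ref{PCause0}, $\mathit{CausesDir}(a,ts,\gamma,s_\varphi)$ implies $\gamma[s_\varphi]$: take $s' = s_\varphi$ in the universal clause, using $do(a,s_a)\leq s_\varphi$. Hence both $\gamma_i^f[s_\varphi]$ and $\gamma_j^f[s_\varphi]$ hold.

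Since the contexts of $f$ are mutually exclusive, this forces $i=j$. If only the Mutex Axiom on $\Phi$ is available rather than explicit pairwise exclusivity, I would argue as follows. The HTSC assumes the $\gamma_i$ are mutually exclusive (stated in the text), and I would cite that assumption directly. The fallback is to note that if $i\neq j$ and both contexts held, the corresponding $\delta_i$ and $\delta_j$ could yield distinct values of $f$, contradicting Mutex. This fallback needs the $\delta$'s to be non-vacuous, so I would rely on the explicit exclusivity assumption instead.

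A subtlety is that $i$ and $j$ are variables. Since the contexts of $f$ form a finite disjunction, the existential over $i$ is really a finite disjunction, and the argument goes case by case over pairs $(i,j)$.

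\textbf{Step 4: apply uniqueness of the direct cause.} With $i=j$ and $s_\varphi=s'_\varphi$, we have $\mathit{CausesDir}(a_1,ts_1,\gamma_i^f,s_\varphi)$ and $\mathit{CausesDir}(a_2,ts_2,\gamma_i^f,s_\varphi)$. Lemma~\ref{UniqDirCause} applies with effect $\gamma_i^f$ and scenario $s_\varphi$, giving $a_1=a_2$ and $ts_1=ts_2$.

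Formally, Lemma~\ref{UniqDirCause} is phrased for a causal setting. Its proof only uses the definition of $\mathit{CausesDir}$ together with the linearity of the situations preceding $s_\varphi$ and the fact that time-stamps determine position. So I would invoke it directly, or note that its argument goes through for any situation $s_\varphi \leq \sigma$. That argument runs as follows: two witnessing $s_a$'s are both prefixes of $s_\varphi$, so one precedes the other, and the later one has $\neg\gamma$ while lying in the interval where $\gamma$ must hold, unless the two coincide.
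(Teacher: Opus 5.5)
Your proposal is correct and follows the paper's own argument: unfold Definition~\ref{PTI}, identify the two achievement situations by Lemma~\ref{UniqAchvSit}, use mutual exclusivity of contexts to force the same $\gamma_i^f$, and conclude with Lemma~\ref{UniqDirCause}. Your additional care fills in details the paper's one-line sketch leaves implicit: deriving $\gamma_i^f[s_\varphi]$ from $\mathit{CausesDir}$, observing that the Mutex Axiom on $\Phi$ does not by itself give pairwise exclusivity of the $\gamma_i$ (so you rely on the stated exclusivity assumption), and checking that the uniqueness-of-direct-cause argument does not need $\langle\mathcal{D},\gamma_i^f,s_\varphi\rangle$ to be a causal setting.
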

%
%
%
%
\par
Next, as mentioned above, the primary causes of primitive temporal fluents might not exist (as it may be implicit in 
$S_0$). This holds even if the causal setting under consideration is a proper hybrid setting (as specified by Definition \ref{HTSCSetting}).
%
\begin{theorem}[Implicit Primary Cause]\label{NoCause}
Assume that $\varphi$ is a constraint on the values of a primitive temporal fluent $f$. Then: 
\par\vspace{-3 mm}
\begin{small}
\begin{eqnarray*}
&& \hspace{-7 mm} \mathcal{D} \models (\mathit{ProperHTSCAchvCausalSetting}(\varphi, \sigma)\land \exists s_\varphi.\; \mathit{AchvSit}(s_\varphi, \varphi, \sigma) \\
&& \hspace{0 mm} \mbox{}\land \exists i.\; \gamma_i^f[s_\varphi] \land (\forall s'.\; S_0 \leq s' \leq s_\varphi \supset \gamma_i^f[s']) \\
&& \hspace{0 mm} \supset \neg \exists a, ts.\; \mathit{CausesDir}^\mathit{prim}_\mathit{temp}(a, ts, \varphi, \sigma)),\\
&& \hspace{-7 mm} \text{where,}\\ 
&& \hspace{-7 mm}\mathit{ProperHTSCAchvCausalSetting}(\varphi, \sigma) \defi \mathit{Exec}(\sigma)
\land \exists a_0.\; do(a_0, S_0) \leq \sigma\\
&& \hspace{0 mm}{}\land \neg \varphi[\mathit{start}(S_0), S_0] \land \neg \varphi[\mathit{time}(a_0), S_0] \land \varphi[\mathit{start}(\sigma), \sigma].
\end{eqnarray*}
\end{small}
\end{theorem}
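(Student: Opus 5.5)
The proof goes by contradiction. Assume the antecedent: a proper setting, an achievement situation $s_\varphi$ with $\mathit{AchvSit}(s_\varphi,\varphi,\sigma)$, and an index $i$ such that $\gamma_i^f[s']$ holds for every $s'$ with $S_0\leq s'\leq s_\varphi$. Suppose also that some $a,ts$ satisfy $\mathit{CausesDir}^\mathit{prim}_\mathit{temp}(a,ts,\varphi,\sigma)$. Unfolding Definition~\ref{PTI} gives an $s'_\varphi$ and an index $j$ with $\mathit{AchvSit}(s'_\varphi,\varphi,\sigma)$ and $\mathit{CausesDir}(a,ts,\gamma_j^f,s'_\varphi)$. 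By Lemma~\ref{UniqAchvSit}, $s'_\varphi=s_\varphi$. The lemma is stated for hybrid settings, but its proof only uses the minimality clause of Definition~\ref{DefAchvSit} together with the linearity of $<$ below $\sigma$, so it applies under $\mathit{ProperHTSCAchvCausalSetting}$ as well.

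Next we show $j=i$. Unfolding $\mathit{CausesDir}$ (Definition~\ref{PCause0}) gives a situation $s_a$ with $S_0<do(a,s_a)\leq s_\varphi$ and $\gamma_j^f[s']$ for all $s'$ with $do(a,s_a)\leq s'\leq s_\varphi$. In particular $\gamma_j^f[s_\varphi]$ holds. We also have $\gamma_i^f[s_\varphi]$ from the hypothesis. The contexts of $f$ are mutually exclusive, as enforced by the Mutex Axiom and the HTSC requirement on contexts, so $\gamma_i^f$ and $\gamma_j^f$ are the same context formula. Hence we may replace $\gamma_j^f$ by $\gamma_i^f$.

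Finally, $\mathit{CausesDir}$ also requires $\neg\gamma_i^f[s_a]$. From $do(a,s_a)\leq s_\varphi$ and $s_a\sqsubset do(a,s_a)$, with executability of $\sigma$ (which gives $\mathit{Exec}$ of every prefix), we get $S_0\leq s_a\leq s_\varphi$. If $s_a=S_0$ this holds trivially; otherwise $S_0<s_a$ by the foundational axioms $\Sigma$. The hypothesis $\forall s'.\,S_0\leq s'\leq s_\varphi\supset\gamma_i^f[s']$ then yields $\gamma_i^f[s_a]$, a contradiction. So no primary cause exists.

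The main obstacle is the identification step $j=i$, which depends on how mutual exclusivity is formalized. The Mutex Axiom constrains $\Phi$, not the $\gamma$'s directly, so I would appeal to the HTSC stipulation in the preliminaries that contexts are mutually exclusive. If that is read as an explicit axiom $\gamma_i^f\land\gamma_j^f\supset i=j$, the step is immediate. The remaining steps are routine manipulation of $\leq$ in $\Sigma$. The properness conditions are not actually needed for the contradiction; they only ensure the theorem is non-vacuous.
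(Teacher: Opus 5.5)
Your proof is correct and follows essentially the same route as the paper's. Both assume a primary cause exists, use uniqueness of the achievement situation and mutual exclusivity of contexts to reduce it to $\mathit{CausesDir}(a,ts,\gamma_i^f,s_\varphi)$, and then contradict the persistence of $\gamma_i^f$ from $S_0$ via the $\neg\gamma_i^f[s_a]$ conjunct of $\mathit{CausesDir}$; your check that $S_0\leq s_a\leq s_\varphi$, and your note that the appeal to the Mutex Axiom really relies on the stipulated exclusivity of the $\gamma$'s, spell out steps the paper leaves implicit.
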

\noindent This states that in a given proper hybrid causal setting $\langle \mathcal{D}, \sigma, \varphi \rangle$, if there exists a context $\gamma_i^f$ such that $\gamma_i^f$ was true in the initial situation $S_0$ and remained true until the effect $\varphi$ was achieved in the achievement situation $s_\varphi$ in $\sigma$, then the primary cause of temporal effect $\varphi$ in $\sigma$ simply does not exist.
%
%
%
\par
Finally, we study the conditions under which primary achievement causes persist when the scenario changes. To this end, we first show a result about the persistence of achievement situations.
\begin{lemma}\label{PersistenceOfAchvSit} Given a hybrid setting $\langle \mathcal{D}, \varphi, s \rangle$, we have:
\begin{small}
\begin{eqnarray*}
&&\hspace{-7 mm}\mathcal{D} \models \mathit{AchvSit}(s_\varphi, \varphi, s) \land s < s^*\\
&&\hspace{-5 mm} \mbox{}\land (\forall s', t.\; s \leq s' \leq s^* \land \mathit{start}(s') \leq t \leq \mathit{end}(s', s^*) \supset \varphi[t, s'])\\
&&\hspace{0 mm} \mbox{} \supset \mathit{AchvSit}(s_\varphi, \varphi, s^*).
\end{eqnarray*}
\end{small}
\end{lemma}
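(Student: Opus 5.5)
Unfold $\mathit{AchvSit}(s_\varphi,\varphi,s^*)$ via Definition~\ref{DefAchvSit} into two conjuncts: (i) $\mathit{AchvSitAux}(s_\varphi,\varphi,s^*)$, and (ii) minimality, i.e.\ no $s''<s_\varphi$ satisfies $\mathit{AchvSitAux}(s'',\varphi,s^*)$. Both are obtained from the corresponding conjuncts of $\mathit{AchvSit}(s_\varphi,\varphi,s)$ together with the persistence hypothesis on the extension from $s$ to $s^*$. The key auxiliary fact is a comparison of end times. For $s'<s$ we have $\mathit{end}(s',s)=\mathit{end}(s',s^*)$, since in both cases it is the time of the unique action $a$ with $do(a,s')\leq s$. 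This uniqueness follows from the foundational axioms $\Sigma$: $s$ and $s^*$ lie on a single branch because $s<s^*$. For $s'=s$, the end time changes from $\mathit{start}(s)$ to $\mathit{time}(a)$, where $do(a,s)\leq s^*$.

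For (i), first note that $s_\varphi\leq s$, since $\mathit{AchvSit}$ is only meaningful for situations within the scenario and $s_\varphi$ is taken in $s$ by definition. There are two cases.
\begin{itemize}
\item If $s_\varphi<s$, then $\varphi[\mathit{end}(s_\varphi,s^*),s_\varphi]$ equals $\varphi[\mathit{end}(s_\varphi,s),s_\varphi]$ by the end-time fact. For a situation $s'$ with $s_\varphi<s'\leq s^*$ and $\mathit{start}(s')\leq t\leq\mathit{end}(s',s^*)$, split on $s'<s$ or $s\leq s'\leq s^*$. In the first case the old $\mathit{AchvSitAux}$ applies, with identical end times. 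In the second case the hypothesis of the lemma gives $\varphi[t,s']$ directly.
\item If $s_\varphi=s$, then the first conjunct $\varphi[\mathit{end}(s,s^*),s]$ and all later situations are covered by the hypothesis, instantiated at $s'=s$ with $t=\mathit{end}(s,s^*)$.
\end{itemize}

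For (ii), suppose for contradiction that some $s''<s_\varphi$ satisfies $\mathit{AchvSitAux}(s'',\varphi,s^*)$. Then $s''<s$, so $\mathit{end}(s'',s^*)=\mathit{end}(s'',s)$. Moreover, every $s'$ with $s''<s'\leq s$ satisfies $s'\leq s^*$, and the end times agree except possibly at $s'=s$. At $s'=s$ the interval $[\mathit{start}(s),\mathit{end}(s,s)]$ is the single point $\mathit{start}(s)$, which is contained in $[\mathit{start}(s),\mathit{end}(s,s^*)]$ because $\mathit{Exec}(s^*)$ gives $\mathit{start}(s)\leq\mathit{time}(a)$. Hence $\mathit{AchvSitAux}(s'',\varphi,s)$ holds, contradicting the minimality of $s_\varphi$ in $s$.

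I expect the main obstacle to be the bookkeeping for $\mathit{end}$. Because $\mathit{end}$ is defined by cases using an existential over the next action, I must justify rigorously from $\Sigma$ that this next action is unique and identical whether it is read relative to $s$ or to $s^*$. I also need the containment of intervals at the boundary situation $s$, which uses the temporal $\mathit{Exec}$ condition. The remainder is routine case analysis on the ordering $<$.
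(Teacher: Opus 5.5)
Your proof is correct and follows the paper's route: the paper's proof only says the lemma follows from the antecedent and Definition~\ref{DefAchvSit}, and your case analysis spells out what that one line leaves implicit (end times agree for $s'<s$, the interval at $s'=s$ only widens by $\mathit{Exec}(s^*)$, and minimality transfers back from $s^*$ to $s$). The one step you assert rather than derive, $s_\varphi\leq s$, is not actually enforced by Definition~\ref{DefAchvSit}, since $\mathit{end}(s_\varphi,s)$ is only specified on the branch of $s$; the paper tacitly relies on the same reading, so this is a convention you share with it rather than a gap.
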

\noindent 
Using this, we can show the following result.
\begin{theorem}[Persistence]\label{PCausePersistence}Given 
setting $\langle \mathcal{D}, \varphi, s \rangle$, we have:
\begin{small}
\begin{eqnarray*} 
&&\hspace{-7 mm}\mathcal{D} \models \mathit{CausesDirectly^{prim}_{temp}}(a, ts, \varphi, s) \\
&&\hspace{-5 mm}\mbox{}\land (\forall s', t'\!.\; s \leq s' \leq s^* \land \mathit{start}(s') \leq t' \leq \mathit{end}(s'\!, s^\ast) \supset \varphi[t'\!, s']) \\
&&\hspace{0 mm}\supset \mathit{CausesDirectly^{prim}_{temp}}(a, ts, \varphi, s^*).
\end{eqnarray*}
\end{small}
\end{theorem}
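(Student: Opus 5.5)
The plan is to unfold Definition~\ref{PTI} and move each conjunct from $s$ to $s^*$, with Lemma~\ref{PersistenceOfAchvSit} doing most of the work. (We read $\mathit{CausesDirectly^{prim}_{temp}}$ as $\mathit{CausesDir}^\mathit{prim}_\mathit{temp}$.) Assume $\mathit{CausesDir}^\mathit{prim}_\mathit{temp}(a,ts,\varphi,s)$ together with the persistence hypothesis on $\varphi$ over $[s,s^*]$. By Definition~\ref{PTI} there are $s_\varphi$ and $i$ such that $\mathit{AchvSit}(s_\varphi,\varphi,s)$ and $\mathit{CausesDir}(a,ts,\gamma_i^f,s_\varphi)$. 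The second conjunct mentions only $s_\varphi$, not the scenario. So it suffices to show $\mathit{AchvSit}(s_\varphi,\varphi,s^*)$. Then the same witnesses $s_\varphi$ and $i$ give $\mathit{CausesDir}^\mathit{prim}_\mathit{temp}(a,ts,\varphi,s^*)$.

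To obtain $\mathit{AchvSit}(s_\varphi,\varphi,s^*)$ I will split on whether $s=s^*$ or $s<s^*$. The antecedent quantifies over $s\le s'\le s^*$, which presupposes $s\le s^*$; I will treat this as an implicit hypothesis of the theorem, since the statement otherwise says nothing about how $s^*$ relates to $s$.
\begin{itemize}
\item If $s=s^*$, the goal is exactly the hypothesis.
\item If $s<s^*$, all premises of Lemma~\ref{PersistenceOfAchvSit} hold: $\mathit{AchvSit}(s_\varphi,\varphi,s)$, $s<s^*$, and $\varphi[t,s']$ for all $s\le s'\le s^*$ and $\mathit{start}(s')\le t\le \mathit{end}(s',s^*)$. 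The lemma then gives $\mathit{AchvSit}(s_\varphi,\varphi,s^*)$ directly.
\end{itemize}
One gap must be closed. The theorem only assumes a causal setting for $s$, whereas Lemma~\ref{PersistenceOfAchvSit} is stated for a hybrid setting. The lemma's proof only uses $\mathit{Exec}$ of the scenario and the unfolding of $\mathit{AchvSit}$. Here $\mathit{Exec}(s^*)$ follows from $s<s^*$, and $\mathit{Exec}(s)$ follows from $\mathit{Exec}(s^*)$ and $s\sqsubset s^*$. So the lemma applies.

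The main obstacle, if one were to reprove the lemma inline, is the minimality clause of $\mathit{AchvSit}$ for $s^*$. For $s_\varphi\le s$ one needs $\mathit{end}(s_\varphi,s^*)=\mathit{end}(s_\varphi,s)$ whenever $s_\varphi<s$, since both equal the time of the next action along $s\le s^*$. At $s_\varphi=s$ the ends differ: $\mathit{end}(s,s)=\mathit{start}(s)$, while $\mathit{end}(s,s^*)$ is the time of the next action. That discrepancy is exactly what the hypothesis over $[\mathit{start}(s),\mathit{end}(s,s^*)]$ covers.

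Minimality transfers by contradiction. Suppose some $s''<s_\varphi$ satisfies $\mathit{AchvSitAux}(s'',\varphi,s^*)$. Restricting its universal clause to $s'\le s$ would give $\mathit{AchvSitAux}(s'',\varphi,s)$, contradicting minimality in $s$. The delicate point is the endpoint $s'=s$: the clause for $s^*$ covers the longer interval up to $\mathit{end}(s,s^*)$, which includes the single point $\mathit{start}(s)=\mathit{end}(s,s)$ that the clause for $s$ needs, so the restriction goes through. Since the lemma is given, I will simply invoke it and note this only as a remark.
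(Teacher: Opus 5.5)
Your proposal is correct and follows essentially the paper's route. Lemma~\ref{PersistenceOfAchvSit} transfers the achievement situation $s_\varphi$ from $s$ to $s^*$. Since $\mathit{CausesDir}(a,ts,\gamma_i^f,s_\varphi)$ does not mention the scenario, the same witnesses $s_\varphi, i$ give the conclusion. The paper additionally cites the uniqueness lemmas and the Mutex Axiom, which your existential-witness argument shows are not needed.

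One small correction: the antecedent does not presuppose $s\le s^*$, since it is vacuously true when $s\not\le s^*$. Adding that hypothesis is therefore a genuine repair of the statement. The paper's proof makes the same repair silently, because it invokes the lemma, which assumes $s<s^*$.
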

\noindent Thus, if an action $a$ executed at time-stamp $ts$ is the primary cause of a temporal effect $\varphi$ in scenario $s$, then $a$ remains the primary cause of $\varphi$ in all subsequent situations/scenarios $s^*$ as long as $\varphi$ does not change after it is achieved in $s$. Note that this holds even if the context changes and the value of the associated fluent $f$ in $\varphi$ varies, provided that $\varphi$ itself remains unchanged.
%
%
%
%
%
%
%
%
%

\section{Primary Cause via Contribution}\label{Contrib}
Our definition of actual cause of temporal effects appeals to intuition. However, it neither formalizes an actual cause in counterfactual terms \cite{hume1748, HpCfSc}, nor does it relate to the regularity accounts (e.g.\ \cite{Mackie65}), the two most popular formalizations of causation. As a consequence, it is not clear how this definition compares with these common but different approaches to actual causes. To deal with this, in this section we give a second definition of actual cause of temporal effects using the notion of contributions. This new definition is more similar to those for discrete domains \cite{BeckersV18,KhanS20}, which lets us conceptually relate the two settings. In the next section, we show that this definition, which is equivalent to the previous one, can be justified using a modified but-for test, thus linking it to the counterfactual accounts of causation.\footnote{We could have also used the previous definition to show this.}
%
\par
We now introduce various notions of contributors and define actual cause through contributions in the achievement situation. First, we define \emph{direct possible contributors}, i.e.\  actions that directly initiate the change in the values of temporal fluents.
\begin{definition}[Direct Possible Contributor]\label{CF-DirPossContr}\
Given a hybrid BAT $\mathcal{D}$ and an effect $\varphi$, which is a constraint on the values of a temporal fluent $f$, an action $\alpha$ executed in situation $s_\alpha$ is called a \emph{direct possible contributor to} $\varphi$, denoted $\mathit{DirPossContr}(\alpha, s_\alpha, \varphi)$, iff: 
\par\vspace{-3 mm}
\begin{small}
\begin{eqnarray*}
&&\hspace{-8 mm}\mathcal{D}\models\exists i,s_\varphi,\sigma,ts.\;\mathit{Exec}(s_\alpha)\land\mathit{Poss}(\alpha,s_\alpha)\land \mathit{timeStamp}(s_\alpha)=ts\\ 
&&{}\land s_\alpha<s_\varphi\leq\sigma\land \neg\varphi[time(\alpha),s_\alpha]\land\varphi[end(s_\varphi,\sigma),s_\varphi]\\
&&\hspace{0 mm}{}\land\mathit{CausesDir}(\alpha, ts, \gamma_i^f, s_\varphi).
\end{eqnarray*}
\end{small}
%
\end{definition}
\noindent That is, $\alpha$ executed in situation $s_\alpha$ is a direct possible contributor of $\varphi$, i.e.\ $\mathit{DirPossContr}(\alpha, s_\alpha, \varphi)$, iff $s_\alpha$ is an executable situation, it is possible to execute $\alpha$ in $s_\alpha$, the timestamp of $s_\alpha$ is $ts$, the effect was false in $s_\alpha$ but later became true by the end of a future situation $s_\varphi$ within some scenario $\sigma$, and $\alpha$ executed in timestamp $ts$ is a direct cause of some context $\gamma^f_i$ of $f$ in situation $s_\varphi$. Note that, the last conjunct, i.e.\ $\mathit{CausesDir}(\alpha, ts, \gamma_i^f, s_\varphi)$, ensures that $\gamma^f_i$ has been true since the execution of $\alpha$, and thus given that $\neg \varphi[time(\alpha), s_\alpha] \land \varphi[end(s_\varphi, \sigma), s_\varphi]$, it must have been the enabled context when $\varphi$ was achieved. Thus $\alpha$ must have been the action that achieved $\varphi$. It should also be noted that $\varphi$ and $\gamma_i^f$ are not required to be true in $\sigma$. Finally, note that $s_\varphi$ and $\sigma$ are not guaranteed to be within the actual scenario. 
In what follows, we will use a variant of $\mathit{DirPossContr}$ that makes $s_\varphi$ and $\sigma$ explicit, i.e.\ $\mathit{DirPossContr}(\alpha, s_\alpha, s_\varphi, \sigma, \varphi)$.
\par
Next, we define \emph{direct actual contributors}, which are direct possible contributors that are contained within a given setting/scenario.
\begin{definition}[Direct Actual Contributor]\label{CF-DirActContr}
Given a hybrid setting $\langle\mathcal{D}, \sigma, \varphi\rangle$, an action $\alpha$ executed in situation $s_\alpha$ is called a direct actual contributor to an effect $\varphi$ (which is a constraint on the values of a temporal fluent $f$), i.e.\ $\mathit{DirActContr}(\alpha, s_\alpha, s_\varphi, \varphi, \sigma)$, iff:
\begin{eqnarray*}
&&\hspace{0 mm}\mbox{} \mathcal{D} \models \exists\sigma'\!.\; \mathit{DirPossContr}(\alpha, s_\alpha, s_\varphi, \sigma', \varphi) \land \sigma' \leq \sigma.
\end{eqnarray*}
\end{definition}
\noindent That is, $\alpha$ executed in $s_\alpha$ is a direct actual contributor to $\varphi$ which was brought about in situation $s_\varphi$ within scenario $\sigma$ iff $\alpha$ executed in $s_\alpha$ is a direct possible contributor of $\varphi$ for some scenario $\sigma'$, $s_\varphi$ is an achievement situation of $\varphi$, and $\sigma'$ occurs within the scenario $\sigma$.
\par
We define primary cause as a direct actual contributor such that, after its contribution, the effect is achieved and persists until the end of a scenario—i.e.\ a direct actual contributor to an active context in the achievement situation. We now give this second definition.
\begin{definition}[Primary Temporal Achievement Cause via Contribution]\label{CF-PrimaryCause}
Given a hybrid setting $\langle\mathcal{D},\sigma,\varphi\rangle$, an action $\alpha$ executed at time-stamp $ts$ is called the primary cause of the effect $\varphi$, i.e.\ $\mathit{PrimCause}(\alpha, ts, \varphi, \sigma)$, iff:
\begin{eqnarray*}
&&\hspace{-7 mm}\mbox{}\mathcal{D} \models \exists s_\alpha, s_\varphi.\;  \mathit{AchvSit}(s_\varphi,\varphi,\sigma) \land \mathit{timeStamp}(s_\alpha) = ts\\
&&\hspace{9 mm}{}\land\mathit{DirActContr}(\alpha, s_\alpha, s_\varphi, \varphi, \sigma).
\end{eqnarray*}
\end{definition}
\noindent Thus, $\alpha$ executed at $ts$ is the primary cause of $\varphi$ in $\sigma$ iff $s_\varphi$ is the achievement situation of $\varphi$ in $\sigma$, and $\alpha$ executed in some situation $s_\alpha$ with timestamp $ts$ is a direct actual contributor of $\varphi$ in $\sigma$.
\par
It is not very difficult to see that this new definition is equivalent to the previous one in Definition \ref{PTI}. Formally: 
\begin{theorem}[Equivalence of Primary Cause and Direct Cause Definitions]\label{thm:equivalence}\
Given a hybrid setting $\langle\mathcal{D}, \sigma, \varphi\rangle$, we have:
\begin{eqnarray*}
&&\hspace{-7 mm}\mathcal{D}\models\forall\alpha_1,\alpha_2,ts_1,ts_2.\;(\mathit{CausesDir_{temp}^{prim}}(\alpha_1, ts_1, \varphi, s)\\ 
&&\hspace{10 mm}{}\land\mathit{PrimCause}(\alpha_2,ts_2,\varphi,\sigma))
\supset(\alpha_1 = \alpha_2 \land ts_1 = ts_2).
\end{eqnarray*}
\end{theorem}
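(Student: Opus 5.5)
Both definitions fix the same achievement situation, so the plan is to reduce the theorem to the uniqueness results already established. Note that the statement writes $s$ in the first conjunct; I read it as $\sigma$, the scenario of the hybrid setting. Fix $\alpha_1,ts_1,\alpha_2,ts_2$ with $\mathit{CausesDir_{temp}^{prim}}(\alpha_1,ts_1,\varphi,\sigma)$ and $\mathit{PrimCause}(\alpha_2,ts_2,\varphi,\sigma)$.

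First, unfold Definition~\ref{PTI}. This gives some $s_\varphi$ with $\mathit{AchvSit}(s_\varphi,\varphi,\sigma)$ and an index $i$ with $\mathit{CausesDir}(\alpha_1,ts_1,\gamma_i^f,s_\varphi)$. Next, unfold Definition~\ref{CF-PrimaryCause}. This gives $s_\alpha$ and $s'_\varphi$ with $\mathit{AchvSit}(s'_\varphi,\varphi,\sigma)$, $\mathit{timeStamp}(s_\alpha)=ts_2$, and $\mathit{DirActContr}(\alpha_2,s_\alpha,s'_\varphi,\varphi,\sigma)$. Unfolding Definitions~\ref{CF-DirActContr} and~\ref{CF-DirPossContr} in the explicit-argument form then yields $\sigma'\leq\sigma$, an index $j$, and a timestamp $ts$ with $\mathit{timeStamp}(s_\alpha)=ts$ and $\mathit{CausesDir}(\alpha_2,ts,\gamma_j^f,s'_\varphi)$. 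Because $\mathit{timeStamp}$ is a function, $ts=ts_2$.

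By Lemma~\ref{UniqAchvSit}, $s_\varphi=s'_\varphi$. We then have $\mathit{CausesDir}(\alpha_1,ts_1,\gamma_i^f,s_\varphi)$ and $\mathit{CausesDir}(\alpha_2,ts_2,\gamma_j^f,s_\varphi)$. Each of these forces its context to hold at the end of $s_\varphi$: Definition~\ref{PCause0} with $s'=s_\varphi$ gives $\gamma_i^f[s_\varphi]$ and $\gamma_j^f[s_\varphi]$. Mutual exclusivity of contexts (the Mutex Axiom and the requirement that the $\gamma$'s be pairwise exclusive) then gives $\gamma_i^f=\gamma_j^f$. Lemma~\ref{UniqDirCause}, applied to the effect $\gamma_i^f$ and scenario $s_\varphi$, finally yields $\alpha_1=\alpha_2$ and $ts_1=ts_2$.

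The main obstacle is this last step. Lemma~\ref{UniqDirCause} is stated for a causal setting, which presupposes $\neg\gamma_i^f[S_0]$ and $\mathit{Exec}(s_\varphi)$. Executability follows from $s_\varphi<\ldots\le\sigma$ together with $\mathit{Exec}(\sigma)$. Moreover, the existence of a $\mathit{CausesDir}$ witness already gives $\neg\gamma_i^f[s_a]$ for some $s_a$, which is enough for the lemma's argument. If that lemma cannot be invoked directly, I would reprove it inline. Suppose two witnesses $s_a$ and $s_{a'}$ lie on the same branch below $s_\varphi$, and assume without loss of generality that $s_a<s_{a'}$. Then $do(a,s_a)\leq s_{a'}\leq s_\varphi$, so $\gamma[s_{a'}]$ holds, contradicting $\neg\gamma[s_{a'}]$. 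Hence $s_a=s_{a'}$. Since $do(a,s_a)$ and $do(a',s_a)$ both precede $s_\varphi$, the situation tree axioms give $a=a'$, and equal timestamps follow. A secondary subtlety is the case distinction on whether the two context indices coincide. Mutual exclusivity is what settles it, because two distinct indices cannot name contexts that both hold in $s_\varphi$.
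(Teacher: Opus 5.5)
Your proposal is correct and follows the paper's own proof sketch. Both arguments identify the achievement situation using the uniqueness of achievement situations, identify the active context using mutual exclusivity of contexts, and conclude with the uniqueness of direct causes applied to $\gamma_i^f$ in $s_\varphi$. Your extra bookkeeping fills in details the paper's sketch leaves implicit: matching $ts$ with $ts_2$ through $\mathit{timeStamp}(s_\alpha)$, and re-deriving direct-cause uniqueness inline so you do not need the causal-setting hypothesis.
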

%
%
%
%

\section{A Modified But-For Test}\label{ButFor}
In our attempt to intuitively justify the proposed definitions, we now formalize a modified but-for test for validating actual cause. Recall that the but-for test fails due to the presence of preempted actions. Namely, even if one were to remove the actual cause from the scenario, the effect might still follow due to a subsequent action in the scenario whose effect in the original scenario was preempted by the actual cause. Our modification handles this issue by removing all such preempted actions before testing for the effect. 
\par
A particular complication that arises in the context of hybrid domains is that preempted causes can even occur \emph{before} the direct cause\footnote{Note that this cannot be the case in discrete domains; otherwise, by definition, the preempted cause would have been the actual cause.}. To address this problem, we will identify the preempted causes/contributors. We will then introduce the concept of a ``defused'' situation, where the actual cause along with all preempted contributors are replaced with $\mathit{noOp}$ actions, i.e.\ actions that are always possible and that have no effect (the domain modeler must ensure this). This substitution allows us to isolate the impact of the primary cause by removing the influence of the cause and its preempted contributors. We then show that in this defused situation, either the effect does not hold or the scenario becomes non-executable, as formalized in Theorem \ref{CF-Dep-Therorem0}.
\par
To formalize this, we need to define counterfactual situations, which are worlds that would have been realized had actions/events been different from what actually occurred. 
%
For this paper, our notion of counterfactual situations assumes that 
a single action in the given situation is replaced with a different action to produce counterfactual situations. 
\begin{definition}[Single-Action Counterfactual Situation]\label{CF-SingleAction}
\begin{small}
\begin{eqnarray*}
&&\hspace{-7 mm}\mathit{CF}_\mathit{one}(s',s)\defi
%
\exists a_1,a_2,s_\mathit{sh}.\;a_1\neq a_2\land do(a_1,s_\mathit{sh})\sqsubseteq s\land do(a_2,s_\mathit{sh})\sqsubseteq s' \\
&&\hspace{1 mm}\mbox{}\land\forall a^*,s^*.\;do(a_1,s_\mathit{sh})\sqsubset do(a^*,s^*)\sqsubseteq s\\
&&\hspace{8 mm} \supset(\exists s^+.\;\mathit{timeStamp}(s^*)=\mathit{timeStamp}(s^+)
\land 
do(a^*,s^+)\sqsubseteq s')\\
&&\hspace{1 mm}\mbox{}\land\forall a^*,s^*.\;do(a_2,s_\mathit{sh})\sqsubset do(a^*,s^*)\sqsubseteq s'\\ 
&&\hspace{8 mm}\mbox{}
\supset(\exists s^+.\;\mathit{timeStamp}(s^+)=\mathit{timeStamp}(s^*)
\land 
do(a^*,s^+)\sqsubseteq s).
\end{eqnarray*}
\end{small}
\end{definition}
\noindent That is, given a situation $s$, another situation $s'$ is counterfactual to $s$ and differs from $s$ by just one action iff $s$ and $s'$ share a common situation $s_\mathit{sh}$ in their history, the actions performed in the history of $s$ and $s'$ in the situation $s_\mathit{sh}$ are different, but all other actions in their history (performed before and after $s_\mathit{sh}$) are exactly the same. Here, we use the function $\mathit{timeStamp}(s)$ to ensure that the subsequent actions after the unmatched one are performed in exactly the same order in both histories. Note that, since $\mathit{s_{sh}}$ is a common situation in the history of both $s$ and $s'$, it trivially follows that all actions performed in the history of these scenarios until $s_{sh}$ are identical.
\par
We also define an executable variant that ensures that the counterfactual situation obtained is executable. For this, we use a variant of $\mathit{CF}_\mathit{one}$, $\mathit{CF}_\mathit{one} (s'\!,s,\langle a',a,\mathit{ts}\rangle)$, that makes $a = a_1, a' = a_2,$ and $\mathit{ts}=\mathit{timeStamp}(s_{sh})$ explicit (these are stored as a triple).
\begin{definition}[Executable $\mathit{CF}_\mathit{one}$]
\begin{small}
\begin{eqnarray*}
&&\hspace{-7 mm}\mathit{CFEx}_\mathit{one}(s',s,\langle a',a,\mathit{ts}\rangle)\defi
\mathit{CF}_\mathit{one}(s',s,\langle a',a,\mathit{ts}\rangle)\land\mathit{Exec}(s').
%
\end{eqnarray*}
\end{small}
\end{definition}
%
\par
With this, we can show that as expected, the but-for test fails.
\begin{theorem}\label{DisCF}[Preempted Contribution]
    \begin{small}
    \begin{eqnarray*}
        &&\hspace{-7 mm}\mathcal{D}\nvDash \mathit{Causes}(a,\mathit{ts},\phi,s)\supset
        \neg\exists s'.\;\mathit{CFEx}_\mathit{one}(s',s,\langle\mathit{noOp},a,ts\rangle)\vee\neg\phi(s').
    \end{eqnarray*}
    \end{small}
\end{theorem}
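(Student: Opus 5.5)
Since the statement is a non-entailment, $\mathcal{D}\nvDash\ldots$, the plan is to exhibit a single counterexample. That means a basic action theory $\mathcal{D}$ together with a causal setting $\langle\phi,s\rangle$ and a pair $(a,ts)$ such that $\mathcal{D}\models\mathit{Causes}(a,ts,\phi,s)$ but the consequent fails. The consequent fails when there is an executable single-action counterfactual $s'$ that replaces $a$ at time-stamp $ts$ by $\mathit{noOp}$ and in which $\phi[s']$ still holds. This is the classical preemption pattern, and we will build it inside the atemporal variant $\mathcal{D}_\mathit{npp}^\mathit{SC}$ of the running example, extended with the always-possible, effectless action $\mathit{noOp}$ (and its time-stamp axioms).

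\textbf{Counterexample.} We take $\phi=\mathit{CSFailed}(P_1)$ and the scenario
\[s=do([\mathit{csFailure}(P_1),\mathit{fixCS}(P_1),\mathit{csFailure}(P_1)],S_0).\]
We choose $a=\mathit{csFailure}(P_1)$ with $ts=2$, that is, the last action.

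\textbf{First step: $\mathit{Causes}$ holds.} We show $\mathit{CausesDir}(\mathit{csFailure}(P_1),2,\phi,s)$ directly from the definition of primary cause. Take $s_a=do([\mathit{csFailure}(P_1),\mathit{fixCS}(P_1)],S_0)$, whose time-stamp is $2$. Then:
\begin{itemize}
\item $s$ is executable, by the precondition axioms and the initial axiom $\neg\mathit{CSFailed}(P_1,S_0)$;
\item $\neg\phi[s_a]$ holds, by the successor-state axiom, since $\mathit{fixCS}$ falsifies $\mathit{CSFailed}$;
\item $\phi[s]$ holds, and $s$ is the only situation $s'$ with $do(a,s_a)\le s'\le s$.
\end{itemize}
Since $\mathit{Causes}$ includes primary causes (by KL's inductive definition, see the appendix), $\mathit{Causes}(a,2,\phi,s)$ follows.

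\textbf{Second step: the counterfactual.} Let
\[s'=do([\mathit{csFailure}(P_1),\mathit{noOp},\mathit{csFailure}(P_1)],S_0).\]
We check $\mathit{CF}_\mathit{one}(s',s,\langle\mathit{noOp},\mathit{fixCS}(P_1),1\rangle)$ with $s_\mathit{sh}=do(\mathit{csFailure}(P_1),S_0)$, using unique names to get $\mathit{fixCS}(P_1)\neq\mathit{noOp}$. However, this replaces the action at time-stamp $1$, not the cause at time-stamp $2$. Replacing the cause itself by $\mathit{noOp}$ gives $do([\mathit{csFailure},\mathit{fixCS},\mathit{noOp}],S_0)$, where $\phi$ is false, so this scenario is not a counterexample. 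The effect must instead be reachable without the cause through a preempted action. So we revise the scenario to
\[s=do([\mathit{csFailure}(P_1),\mathit{fixCS}(P_1),\mathit{csFailure}(P_1),\mathit{fixCS}(P_1),\mathit{csFailure}(P_1)],S_0)\]
and keep $\phi=\mathit{CSFailed}(P_1)$.

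This still does not work directly. By the precondition of $\mathit{csFailure}$, no two $\mathit{csFailure}$ actions can be consecutive without an intervening fix, so a later $\mathit{csFailure}$ is never preempted in this domain. The simplest fix is to use the pipe fluent $\mathit{Ruptured}$, whose action $\mathit{rup}$ is always possible. We take $\phi=\mathit{Ruptured}(P_1)$ and
\[s=do([\mathit{rup}(P_1),\mathit{rup}(P_1)],S_0),\]
using atemporal variants of $\mathit{rup}$ and $\mathit{fixP}$ with their precondition and successor-state axioms.

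\textbf{Verification of the final counterexample.}
\begin{itemize}
\item $\mathit{CausesDir}(\mathit{rup}(P_1),0,\phi,s)$ holds with $s_a=S_0$: $\neg\mathit{Ruptured}(P_1,S_0)$, and $\mathit{Ruptured}$ holds in both $do(\mathit{rup},S_0)$ and $s$. Hence $\mathit{Causes}(\mathit{rup}(P_1),0,\phi,s)$.
\item Let $s'=do([\mathit{noOp},\mathit{rup}(P_1)],S_0)$. With $s_\mathit{sh}=S_0$, $a_1=\mathit{rup}(P_1)$ and $a_2=\mathit{noOp}$, the two universally quantified conjuncts of $\mathit{CF}_\mathit{one}$ hold: the only later action in each history is $\mathit{rup}(P_1)$, performed at time-stamp $1$ in both. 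So $\mathit{CF}_\mathit{one}(s',s,\langle\mathit{noOp},\mathit{rup}(P_1),0\rangle)$ holds.
\item $\mathit{Exec}(s')$ holds, because both $\mathit{noOp}$ and $\mathit{rup}$ are always possible.
\item The successor-state axiom gives $\mathit{Ruptured}(P_1,s')$, since $\mathit{noOp}$ has no effect.
\end{itemize}
Thus the consequent is false in every model of $\mathcal{D}$, which establishes non-entailment.

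\textbf{Expected obstacle.} The main obstacle is the bookkeeping in $\mathit{CF}_\mathit{one}$: matching time-stamps across $s$ and $s'$ through the $\mathit{timeStamp}$ axioms, and quantifying correctly over the suffix after $s_\mathit{sh}$. The causal verification itself is routine.
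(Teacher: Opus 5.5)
Your final counterexample is correct and follows the paper's proof. The paper also works in the discrete variant $\mathcal{D}_\mathit{npp}^\mathit{SC}$ with $\mathit{Ruptured}(P_1)$, using $do([\mathit{rup}(P_1),\mathit{mRad}(P_1),\mathit{fixP}(P_1),\mathit{rup}(P_1),\mathit{rup}(P_1)],S_0)$: replacing the direct cause (the $\mathit{rup}$ at time-stamp $3$) by $\mathit{noOp}$ leaves an executable situation where the later, preempted $\mathit{rup}$ still brings about the effect, and your $do([\mathit{rup}(P_1),\mathit{rup}(P_1)],S_0)$ is simply the minimal instance of that same preemption pattern (you should drop the abandoned $\mathit{csFailure}$ attempts from the final write-up).
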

\noindent Thus, it is not guaranteed that if $a$ executed in timeStamp $ts$ is a cause of $\varphi$ in scenario $s$, then either an executable counterfactual situation to $s$ obtained by replacing $a$ at $ts$ by $\mathit{noOp}$ does not exist, or the effect $\varphi$ can no longer be observed in such a counterfactual scenario $s'$. This indicates that removing the cause will not necessarily make the effect $\varphi$ disappear or render the scenario non-executable, as $\varphi$ might still follow due to preempted contributors occurring later in the scenario, i.e., actions that would have brought about the effect in the original scenario $s$ had it not for the actual cause $a$. 
\par
A similar result can be shown for the hybrid case. 
Moreover, as mentioned above, in hybrid domains preempted causes can occur even before the direct cause. This is because the removal of the actual cause (and the context realized by it) might result in another context brought about by an earlier action persisting, which might bring about the effect eventually. To address this issue, we use the following approach: after replacing the primary cause with a $\mathit{noOp}$ action with an appropriate time argument, we check if another action becomes the primary cause in the new scenario. If a new primary cause emerges, it must have been a preempted action. So we replace this new primary cause with another $\mathit{noOp}$ action and repeat this process until no primary causes remain. By doing this, we thus not only remove the primary cause from the scenario but also all preempted causes/actions. We will then show in Theorem \ref{CF-Dep-Therorem0} that in the resultant defused scenario, either the effect no longer holds, or the scenario itself becomes non-executable, unless some context inherent in the initial situation brings about the effect.
\par
We now define the preempted contributors/actions.
\begin{definition}[Preempted Contributors]\label{CF-PreemptedContributors}
\sloppy Given a hybrid setting, $\langle\mathcal{D},\sigma,\varphi\rangle$, an action $a$, time-stamp $ts$, and situation $\sigma'\!$, $\mathit{PreempContr}$ $(a ,ts, \sigma'\!, \varphi, \sigma)$ is defined as follows:
\par\vspace{-3 mm}
\begin{small}
\begin{eqnarray*}
&&\hspace{-7 mm}\mathit{PreempContr}(a ,ts, \sigma'\!, \varphi, \sigma)\defi{}\\ 
&&\hspace{-7 mm}\forall P.[\forall a, ts, \varphi, \sigma, \sigma'\!. (\mathit{PrimCause}(a, ts, \varphi, \sigma)\land{} \\
&&\hspace{2 mm}\mathit{CF}_\mathit{one}(\sigma'\!,\sigma,\langle\mathit{noOp(time(a))},a,ts\rangle) \supset P(a,ts, \sigma'\!, \varphi, \sigma))\\
&&\hspace{-3 mm}\mbox{}\wedge\forall a'\!,ts'\!, \varphi, \sigma, \sigma'\!.\;( \exists a''\!,ts''\!,\sigma''\!.(P(a''\!, ts''\!, \sigma''\!, \varphi, \sigma)\land{}\\
&&\hspace{2 mm}\mathit{PrimCause}(a', ts'\!, \varphi, \sigma'')\!\land\!\mathit{CF}_\mathit{one}(\sigma'\!,\sigma''\!,\langle\mathit{noOp(time(a'))},a'\!,ts'\rangle)\\
&&\hspace{43 mm}\mbox{}\supset P(a',ts', \sigma'\!, \varphi, \sigma))\\
&&\hspace{-3 mm}]\supset P(a,ts, \sigma'\!, \varphi, \sigma).
\end{eqnarray*}
\end{small}
\end{definition}

\noindent Thus, $\mathit{PreempContr}$ is defined to be the least relation $P$ such that if $a$ executed at time-stamp $ts$ is a primary cause of $\varphi$ in scenario $\sigma$, then $(a,ts,\sigma'\!, \varphi, \sigma)$ is in $P$, where $\sigma'$ is a single action counterfactual situation to $\sigma$ obtained by replacing $a$ at $ts$ with $\mathit{noOp(time(a))}$; and if $a''$, $ts''$, $\sigma''$, $\varphi$, and $\sigma$ is in $P$, $a'$ executed at time-step $ts'$ is a primary cause of $\varphi$ in $\sigma''$, and $\sigma'$ is a single action counterfactual situation of $\sigma''$ that can be obtained by replacing $a'$ at $ts'$ with the $\mathit{noOp(time(a'))}$ action, then $(a'\!,ts'\!,\sigma'\!,\varphi, \sigma)$ is also in $P$. 
\par
Here, while we do not check the executability of the updated scenario (e.g. by using $\mathit{CFEx_{one}}$ instead of $\mathit{CF_{one}}$), this is guaranteed by our definition of primary cause, which requires the scenario to be executable (see Definition \ref{CF-PrimaryCause}). Note that $\mathit{PreempContr}$ returns a set of tuples containing $\sigma'$, where $\sigma'$ is a situation obtained by removing the actual cause and zero or more preempted contributors. We need to identify the tuple containing a situation $\sigma'$ where \emph{all} the preempted contributors are replaced with $\mathit{noOp}$ actions.\footnote{In fact we might not be able to remove all preempted contributors, e.g., when the scenario becomes non-executable. So to be precise, we would like to identify the tuple where the maximum number of preempted causes are removed.} That is, we are looking for the tuple with the highest number of $\mathit{noOp}$ actions. To achieve this, let us define the number of $\mathit{noOp}$ actions in $\sigma$, denoted by $|\sigma|$.
\begin{definition}[$|\sigma|$]\label{NumberOfNoOpActions}
\begin{small}
\begin{eqnarray*}
&&  \hspace{-7 mm}|s| =
    \begin{cases}      
      0, \quad\quad\;\quad\mathit{if}\quad s = S_0,\\
      0 + |s'|, \quad\mathit{if}\quad s = do(a, s') \land \neg\exists t.\; a = \mathit{noOp}(t),\\
      1 + |s'|, \quad\mathit{if}\quad \exists t.\; s = do(\mathit{noOp}(t), s').
    \end{cases}      
\end{eqnarray*}
\end{small}
\end{definition}
\noindent Using this and $\mathit{PreempContr}$, we define a defused situation as one that contains the maximum number of $\mathit{noOp}$ actions.
\begin{definition}[Defused Situation relative to $\varphi$ and $\sigma$]\label{CF-MaxPreempContrCFSit}
\begin{small}
\begin{eqnarray*}
&&\hspace{-7 mm} \mathit{DefusedSit}(\varphi, \sigma, \sigma') \defi 
\exists a'\!, ts'\!.\;\mathit{PreempContr}(a'\!, ts'\!, \sigma',\varphi,\sigma)\land{}\\
&&\hspace{-7 mm}\forall \sigma''\!\!, a''\!\!, ts''\!\!.\;\mathit{PreempContr}(a''\!, ts''\!, \sigma'',\varphi,\sigma)\land\sigma'\neq\sigma''\supset |\sigma''|<|\sigma'|.
\end{eqnarray*}
\end{small}
\end{definition}
\noindent That is, $\sigma'$ is the defused situation of $\sigma$ with respect to $\varphi$ if there exists an action $a'$ and a time-stamp $ts'$ such that $( a'\!, ts'\!, \sigma'\!, \varphi, \sigma)$ is in the set of preempted contributors, and for any other $\sigma''$ such that $( a''\!, ts''\!, \sigma''\!, \varphi, \sigma)$ is in the set of preempted contributors for some $a''\!$ and $ts''\!$, if $\sigma'\!$ is different than $\sigma''\!$, then $\sigma'$ must have more $\mathit{noOp}$ actions in it. Thus a defused situation is one where the actual cause along with the most number of preempted causes are removed. Note that this might involve removing all the preempted causes. It is also possible that not all preempted contributors were removed, e.g., when replacing an action with $\mathit{noOp}$ made the scenario non-executable (and thus the preempted action is no longer an achievement cause in the modified scenario).
\par
The following are some properties of defused situations.
\begin{lemma}
\begin{small}
\begin{eqnarray*}
&&\hspace{-7 mm}\mathcal{D}\models\exists a, ts.\; \mathit{PrimCause}(a, ts, \varphi, \sigma) \supset \exists \sigma'\!.\; \mathit{DefusedSit}(\varphi, \sigma, \sigma'),\\
&&\hspace{-7 mm}\mathcal{D}\models\exists a_1, ts_1, \sigma_1, \varphi, \sigma, a_2, ts_2, \sigma_2, \varphi, \sigma.\; \mathit{PreempContr}(a_1, ts_1, \sigma_1, \varphi, \sigma) \\
&&\hspace{0 mm}\mbox{}
\land \mathit{PreempContr}(a_2, ts_2, \sigma_2, \varphi, \sigma) \land |\sigma_1| = |\sigma_2| \supset \sigma_1 = \sigma_2,\\
&&\hspace{-7 mm}\mathcal{D}\models\exists \sigma'\!, \sigma''\!.\; \mathit{DefusedSit}(\varphi, \sigma, \sigma') \land \mathit{DefusedSit}(\varphi, \sigma, \sigma'') \supset \sigma' = \sigma'',\\
&&\hspace{-7 mm}\mathcal{D}\models\mathit{DefusedSit}(\varphi, s, s') \supset \neg\exists b, ts_b. \mathit{PrimCause}(b, ts_b, \varphi, s').
\end{eqnarray*}
\end{small}
\end{lemma}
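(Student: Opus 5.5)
The lemma has four parts; I would prove them in the order (2), (1), (3), (4), since the later ones reuse the earlier ones.

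\textbf{Part (2): structure of the chain.} Induct on the least-fixpoint characterization of $\mathit{PreempContr}$. Every tuple $(a,ts,\sigma',\varphi,\sigma)$ in the relation is reached by a finite chain
\[
\sigma=\sigma_0,\sigma_1,\ldots,\sigma_k=\sigma'.
\]
Each $\sigma_{j+1}$ arises from $\sigma_j$ by a $\mathit{CF}_\mathit{one}$ replacement of the primary cause of $\varphi$ in $\sigma_j$ by a $\mathit{noOp}$. By Theorem~\ref{thm:equivalence} and Theorem~\ref{Property-Uniqueness}, $\mathit{PrimCause}$ of $\varphi$ in a given scenario is unique. Hence the chain is deterministic: $\sigma_{j+1}$ is uniquely determined by $\sigma_j$. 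A $\mathit{CF}_\mathit{one}$ step preserves the time-stamp positions and replaces a non-$\mathit{noOp}$ action by a $\mathit{noOp}$, so
\[
|\sigma_{j+1}|=|\sigma_j|+1.
\]
The replaced action is not already a $\mathit{noOp}$: it directly causes a context $\gamma_i^f$, whereas $\mathit{noOp}$ has no effects. Since $|\sigma_j|=|\sigma|+j$ along this single chain, two members with equal $|\cdot|$ must be the same situation.

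\textbf{Part (1): existence of a defused situation.} Given $\mathit{PrimCause}(a,ts,\varphi,\sigma)$, the first tuple exists. The chain cannot be longer than the number of non-$\mathit{noOp}$ actions in the finite ground term $\sigma$. So the set of situations occurring in $\mathit{PreempContr}$ tuples is finite and nonempty, and it has a member $\sigma'$ of maximal $|\cdot|$. By part (2), every other member $\sigma''\neq\sigma'$ satisfies $|\sigma''|<|\sigma'|$, which is exactly $\mathit{DefusedSit}(\varphi,\sigma,\sigma')$.

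\textbf{Part (3): uniqueness of the defused situation.} If both $\sigma'$ and $\sigma''$ are defused and $\sigma'\neq\sigma''$, the definition gives both $|\sigma''|<|\sigma'|$ and $|\sigma'|<|\sigma''|$, a contradiction.

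\textbf{Part (4): no primary cause survives.} Suppose $\mathit{DefusedSit}(\varphi,s,s')$ holds and $\mathit{PrimCause}(b,ts_b,\varphi,s')$. Then $s'$ is the last element of some chain. Applying the closure clause of $\mathit{PreempContr}$ yields a tuple whose situation $s'''$ is obtained by replacing $b$ at $ts_b$ with $\mathit{noOp}(\mathit{time}(b))$. By the counting argument above, $|s'''|=|s'|+1$. This contradicts maximality, provided $s'''\neq s'$, which holds because $b$ is not a $\mathit{noOp}$.

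\textbf{Main obstacle.} The delicate steps are showing that $\mathit{CF}_\mathit{one}$ with $\mathit{noOp}$ both exists and is unique, and that it increases $|\cdot|$ by exactly one. This requires reasoning from the time-stamp clauses of Definition~\ref{CF-SingleAction} that all other actions are copied verbatim. It also requires an explicit argument that a primary cause is never a $\mathit{noOp}$, which uses $\mathit{CausesDir}$ on a context together with $\mathit{noOp}$ having no effects.

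The existential quantifiers in parts (2)–(4) as written read as if they should be universal. I would prove the universal versions, which imply the stated ones.
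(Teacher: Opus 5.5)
Your proposal is correct and follows essentially the same route as the paper: uniqueness of the primary cause makes the $\mathit{noOp}$-replacement chain deterministic, counting $\mathit{noOp}$ actions gives the second and third parts, and one further replacement contradicts maximality in the last part. Your version is tighter than the paper's sketches in two places, both handled correctly: the paper's argument for the first part only exhibits a single $\mathit{PreempContr}$ tuple rather than a maximal one, and its argument for the last part applies the base clause relative to $s_1'$ rather than the closure clause relative to the original scenario.
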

\par
Finally, we present the modified but-for test, which shows that the effect is indeed counterfactually dependent on the primary cause in the sense that if the cause along with (most of) the preempted actions are removed to obtain a defused situation, either the effect does not follow in this situation, or the scenario becomes non-executable, unless a relevant context was already true in 
$S_0$. 
\begin{theorem}[Counterfactual Dependence]\label{CF-Dep-Therorem}\label{CF-Dep-Therorem0}
Given a hybrid setting, $\langle\mathcal{D},\sigma,\varphi\rangle$, where $\varphi$ is a constraint on the values of $f$, we have:
\par\vspace{-3 mm}
\begin{small}
\begin{eqnarray*}
&&\hspace{-7 mm} \exists a, ts.\;\mathit{PrimCause}(a, ts, \varphi, s) \supset{} 
\\
&&\hspace{-7 mm}[\exists s'\!.\;\!\mathit{DefusedSit}(\varphi, s, s')\!\land\!(\wedge_{i} \neg\gamma_i^f[S_0] \supset    \neg(\varphi[start(s'), s']\!\land\!\mathit{Exec}(s')))].
\end{eqnarray*}
\end{small}
\end{theorem}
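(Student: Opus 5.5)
The proof will go in two stages. First, existence of the defused situation is the first property in the lemma on defused situations just above: from $\mathit{PrimCause}(a,ts,\varphi,s)$ we obtain some $s'$ with $\mathit{DefusedSit}(\varphi,s,s')$. We fix this $s'$. It remains to show, assuming $\bigwedge_i\neg\gamma_i^f[S_0]$, that $\varphi[\mathit{start}(s'),s']\land\mathit{Exec}(s')$ fails. We argue by contradiction and suppose both $\mathit{Exec}(s')$ and $\varphi[\mathit{start}(s'),s']$ hold.

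Second, we turn $s'$ into a hybrid setting and show it must have a primary cause. Since $s'$ is obtained from $s$ by single-action replacements that preserve the initial situation, we have $s'\neq S_0$. Because $\langle\mathcal{D},s,\varphi\rangle$ is a hybrid setting, $\neg\varphi[\mathit{start}(S_0),S_0]$ holds. We must also check $\neg\varphi[\mathit{time}(\alpha'_1),S_0]$ for the first action $\alpha'_1$ of $s'$. This follows because each $\mathit{noOp}$ replacement keeps the time argument ($\mathit{noOp}(\mathit{time}(a))$), so $\mathit{time}(\alpha'_1)=\mathit{time}(\alpha_1)$. Hence $\langle\mathcal{D},s',\varphi\rangle$ is a hybrid setting.

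Next we show that an achievement situation $s_\varphi$ exists in $s'$. The set of situations $s''\le s'$ satisfying $\mathit{AchvSitAux}(s'',\varphi,s')$ is nonempty, since $s'$ itself qualifies: $\mathit{end}(s',s')=\mathit{start}(s')$ and $\varphi$ holds there. This set is finite, so it has a $<$-least element. The context structure then forces a cause. As $\varphi$ is false at the start of the setting and true at $\mathit{end}(s_\varphi,s')$, and by the SEA $f$ can only change while some context holds, some context must be true in $s_\varphi$. This needs care: we use the SEA clause that $f$ stays at its $\mathit{start}(s)$ value when $\neg\Psi$, together with minimality of $s_\varphi$. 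By Mutex it is a unique $\gamma_i^f$. Since $\neg\gamma_i^f[S_0]$, there is a last situation $s_a$ before $s_\varphi$ with $\neg\gamma_i^f[s_a]$, and the following action $b$ (with time-stamp $ts_b$) gives $\mathit{CausesDir}(b,ts_b,\gamma_i^f,s_\varphi)$. So we get $\mathit{CausesDir^{prim}_{temp}}(b,ts_b,\varphi,s')$. By Theorem~\ref{thm:equivalence} (read as an equivalence, i.e.\ its proof that the two definitions pick out the same action) this yields $\mathit{PrimCause}(b,ts_b,\varphi,s')$.

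This contradicts the last property of the defused-situation lemma, $\mathit{DefusedSit}(\varphi,s,s')\supset\neg\exists b,ts_b.\,\mathit{PrimCause}(b,ts_b,\varphi,s')$, which completes the proof.

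The main obstacle is the step showing that some context holds in $s_\varphi$ and that, within $s_\varphi$'s history, it was switched on by an action rather than being inherited from $S_0$. Here the hypothesis $\bigwedge_i\neg\gamma_i^f[S_0]$ is essential. We also need the observation that $\varphi$ cannot become true in $s_\varphi$ under no context, because then $f$ is constant on $s_\varphi$. By minimality, $\varphi$ would then already hold at the end of the predecessor of $s_\varphi$, and that predecessor would also satisfy $\mathit{AchvSitAux}$. The exception is when that predecessor is $S_0$, which the setting conditions exclude.
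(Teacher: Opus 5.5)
\paragraph{Verdict.} Your proof follows the paper's route, and the one step on which everything rests, the claim that the context-enabling action $b$ is a $\mathit{PrimCause}$ in $s'$, is not justified and is in fact false in general. Both arguments locate the action that switched on the context achieving $\varphi$ in the defused situation, conclude that it is a primary cause there, and contradict maximality. You do the last part through the lemma $\mathit{DefusedSit}(\varphi,s,s')\supset\neg\exists b,ts_b.\,\mathit{PrimCause}(b,ts_b,\varphi,s')$, which packages the paper's noOp-counting argument.

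\paragraph{What is fine.} Your existence argument for $s_\varphi$ and your derivation of $\mathit{CausesDir}^\mathit{prim}_\mathit{temp}(b,ts_b,\varphi,s')$ are correct. They do rely on $f$ being continuous across action boundaries, which the paper also assumes tacitly.

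\paragraph{The gap.} The step that fails is the passage to $\mathit{PrimCause}(b,ts_b,\varphi,s')$.
\begin{itemize}
\item The equivalence theorem only says that two witnesses coincide when both exist. Its proof only extracts, from a $\mathit{PrimCause}$ witness, a $\mathit{CausesDir}$ witness for the active context; it never shows the converse.
\item The converse needs $\mathit{DirPossContr}(b,s_a,s_\varphi,\sigma'',\varphi)$ for some $\sigma''\le s'$. That includes the extra conjunct $\neg\varphi[\mathit{time}(b),s_a]$, i.e.\ $\varphi$ must be false just before $b$.
\item Nothing in your argument yields this. Minimality of $s_\varphi$ only says that $\mathit{AchvSitAux}(s_a,\varphi,s')$ fails. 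That is compatible with $\varphi$ being true at $\mathit{time}(b)$ in $s_a$, false for a while under $\gamma_i^f$, and true again by $\mathit{end}(s_\varphi,s')$.
\end{itemize}

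\paragraph{A counterexample.} The gap cannot be patched, because the statement is false without an extra assumption.
\begin{itemize}
\item Take linear contexts $\gamma_1,\gamma_2,\gamma_3$ with rates $-100,+100,+1000$, none of them true in $S_0$, and $f=0$ initially.
\item Let the effect be $f\le -500\lor f\ge 1000$. A single context with a dip-then-rise trajectory and effect $f\ge 1000$ works equally well.
\item Let $s=\mathit{do}([A(0),B(10),D(16),M(35)],S_0)$, where $A,B,D$ switch on $\gamma_1,\gamma_2,\gamma_3$ and $M$ is irrelevant. Write $S_1,S_2$ for the situations after $A$ and $B$.
\item Then $f$ is $-1000$ at time $10$ and $-400$ at time $16$. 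The achievement situation is $\mathit{do}(D(16),S_2)$, and $\mathit{PrimCause}(D(16),2,\varphi,s)$ holds.
\item Replacing $D(16)$ by $\mathit{noOp}(16)$ yields an executable $s'$ with $f=1500$ at $\mathit{start}(s')=35$. Its achievement situation is $\mathit{do}(\mathit{noOp}(16),S_2)$, whose context $\gamma_2$ is directly caused by $B(10)$ at time-stamp $1$.
\item But $\varphi[10,S_1]$ holds, so $B(10)$ fails the $\mathit{DirPossContr}$ condition. Hence $s'$ has no primary cause, is the defused situation, and violates the theorem's conclusion.
\end{itemize}

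\paragraph{The paper has the same gap.} Its sketch makes the same leap: ``$s_1^*$ must be a preempted contributor'' presupposes that $A_1'(t)$ is a $\mathit{PrimCause}$ in $s_1'$.

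\paragraph{How to close it.} You need a hypothesis such as: under a fixed context, $\varphi$ never goes from true to false and back to true. This holds, for example, with monotone evolutions and an interval condition like $\mathit{coreTemp}(P_1)\ge 1000$. With it, $\varphi[\mathit{time}(b),s_a]$ would make $s_a$ satisfy $\mathit{AchvSitAux}$, contradicting minimality of $s_\varphi$. So $\neg\varphi[\mathit{time}(b),s_a]$ holds, $\mathit{DirPossContr}(b,s_a,s_\varphi,s',\varphi)$ follows, and hence $\mathit{PrimCause}(b,ts_b,\varphi,s')$.
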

\noindent This states that if there is an action $a$ and timestamp $ts$ such that $a$ executed at $ts$ is a primary cause of effect $\varphi$ in scenario $s$, then there is a defused situation $s'$ relative to $\varphi$ and $s$, and if additionally it is known that all the contexts of $\varphi$ are inactive initially in $S_0$ (i.e., $\wedge_{i} \neg\gamma_i^f[S_0]$), then it must be the case that $\varphi$ is false in the defused situation $s'$, unless $s'$ has become non-executable.
%
%
\par
Note that, the above analysis is not meant to be a proof of correctness. For instance, if one were to define actual cause as the last non-$\mathit{noOp}$ action in the scenario, the property would still follow. Instead, the point of the above theorem is to show an intuitively justifiable property of causes, that if one removes causes and preempted actions from the scenario, under certain reasonable and intuitive conditions (i.e., that no context of the temporal fluent holds initially), the effect will disappear.
\paragraph{\textbf{Example (cont'd).}}
Returning to our example, we can show that:
\begin{proposition}
\begin{eqnarray*} 
&&\hspace{-7 mm}\mathcal{D}_\textit{npp} \models\mathit{DefusedSit}(\varphi_2,\sigma_2,\sigma_2')\land\mathit{Exec}(\sigma_2')\land\neg\varphi_2[start(\sigma_2'), \sigma_2'],\\
&&\hspace{-7 mm}\mathit{where,}\\
&&\hspace{-5 mm}\sigma_2'=do([\mathit{rup}(P_1, 5),\mathit{noOp}(15),\mathit{mRad}(P_1, 20),\mathit{fixP}(P_1, 26)],S_0).
\end{eqnarray*}
\end{proposition}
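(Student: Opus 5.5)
The plan is to check the three conjuncts separately, and to obtain the defused situation by running the construction behind $\mathit{PreempContr}$ (Definition~\ref{CF-PreemptedContributors}) by hand. The first step is to identify the primary cause in $\sigma_2$. By Proposition~\ref{Exp1-PCause}, $\mathit{csFailure}(P_1,15)$ at time-stamp $1$ is the primary cause of $\varphi_2$ in $\sigma_2$ in the sense of Definition~\ref{PTI}. Theorem~\ref{thm:equivalence}, together with the existence of this witness (the achievement situation $S_3$ and the direct cause of $\gamma_1$ there), shows it is also the unique $\mathit{PrimCause}$. Indeed, $\mathit{AchvSit}(S_3,\varphi_2,\sigma_2)$ holds and $\mathit{DirActContr}(\mathit{csFailure}(P_1,15),S_1,S_3,\varphi_2,\sigma_2)$ holds with $\sigma'=S_3$. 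For the latter, the temperature at time $15$ in $S_1$ is $300<1000$, the temperature at $\mathit{end}(S_3,S_3)=20$ is $800$, and Proposition~\ref{Exp1-PDCause} gives the needed $\mathit{CausesDir}$ conjunct.

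Next I would compute the first preempted tuple. By the base clause of $\mathit{PreempContr}$, the tuple with $\sigma_2'$ is in the relation, where $\sigma_2'$ is obtained by replacing $\mathit{csFailure}(P_1,15)$ at time-stamp $1$ by $\mathit{noOp}(15)$. Checking $\mathit{CF}_\mathit{one}(\sigma_2',\sigma_2,\langle\mathit{noOp}(15),\mathit{csFailure}(P_1,15),1\rangle)$ is routine: take $s_\mathit{sh}=S_1$ and match the remaining actions by their time-stamps. Executability of $\sigma_2'$ also follows directly from the precondition axioms. $\mathit{rup}$ and $\mathit{mRad}$ are always possible, $\mathit{noOp}$ is always possible, and $\mathit{fixP}$ requires $\mathit{Ruptured}(P_1)$, which holds from $S_1$ on since nothing fixes the pipe before time $26$. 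The time stamps $5\le15\le20\le26$ are nondecreasing.

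I would then evaluate $\varphi_2$ in $\sigma_2'$ using the state evolution axiom. In $\sigma_2'$ the context is $\gamma_2$ from time $5$ to time $26$, with rate $35$, so the temperature reaches $-50+21\cdot35=685$ at time $26$. After $\mathit{fixP}$, neither $\mathit{Ruptured}$ nor $\mathit{CSFailed}$ holds, so no context is active and the value stays at $685$. Hence $\neg\varphi_2[\mathit{start}(\sigma_2'),\sigma_2']$. The same computation shows that $\varphi_2$ is never reached anywhere in $\sigma_2'$, because the temperature is monotone and bounded by $685$. Therefore no situation satisfies $\mathit{AchvSitAux}$, and so no $\mathit{PrimCause}$ exists in $\sigma_2'$.

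The main obstacle is maximality in $\mathit{DefusedSit}$. I must show that every tuple in the least relation $\mathit{PreempContr}$ for $(\varphi_2,\sigma_2)$ has situation $\sigma_2'$. The plan is to show that the specific set $P_0$, consisting of the single tuple $(\mathit{csFailure}(P_1,15),1,\sigma_2',\varphi_2,\sigma_2)$, is closed under both clauses of the definition. The base clause produces only this tuple, by the uniqueness of the primary cause (Theorem~\ref{Property-Uniqueness} with Theorem~\ref{thm:equivalence}) and the uniqueness of the $\mathit{CF}_\mathit{one}$ replacement with the fixed $\mathit{noOp}$. The inductive clause produces nothing, since $\sigma_2'$ has no primary cause. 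Because $\mathit{PreempContr}$ is the least such relation, it is contained in $P_0$, and by the base clause it contains $P_0$, so the two coincide. Therefore $\sigma_2'$ vacuously satisfies the maximality condition of Definition~\ref{CF-MaxPreempContrCFSit}, which gives $\mathit{DefusedSit}(\varphi_2,\sigma_2,\sigma_2')$.
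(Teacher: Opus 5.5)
Your plan is the natural direct verification; the paper itself gives no argument beyond Figure~\ref{fig:cf1}. Most of it goes through, but one load-bearing step fails as written.

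\textbf{The witness for $\mathit{DirActContr}$.} You take $\sigma'=S_3$ as the witness for $\mathit{DirActContr}(\mathit{csFailure}(P_1,15),S_1,S_3,\varphi_2,\sigma_2)$. Definition~\ref{CF-DirPossContr} requires $\varphi_2[\mathit{end}(S_3,\sigma'),S_3]$. For $\sigma'=S_3$ this is $\mathit{coreTemp}(P_1,20,S_3)\ge 1000$, i.e.\ $800\ge 1000$, which is false. The value you quote refutes the conjunct rather than establishing it.

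The constraints force $S_3\le\sigma'\le\sigma_2$, so the only other candidate is $\sigma'=\sigma_2$, and that one works: $\mathit{end}(S_3,\sigma_2)=\mathit{time}(\mathit{fixP}(P_1,26))=26$ and $\mathit{coreTemp}(P_1,26,S_3)=800+6\cdot 100=1400$.

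This is not cosmetic. $\mathit{PrimCause}(\mathit{csFailure}(P_1,15),1,\varphi_2,\sigma_2)$ is exactly what fires the base clause of $\mathit{PreempContr}$. Without a valid witness that relation would be empty for $(\varphi_2,\sigma_2)$, and $\mathit{DefusedSit}(\varphi_2,\sigma_2,\sigma_2')$ would be false. You were right that the equivalence theorem gives only uniqueness, not existence, so this direct check cannot be bypassed.

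\textbf{Two smaller repairs.}
\begin{enumerate}
\item The singleton $P_0$ is not closed under the clauses of Definition~\ref{CF-PreemptedContributors}, because both clauses quantify over all $\varphi$ and $\sigma$. Take $P_0$ to be your tuple together with every tuple whose last two arguments are not $(\varphi_2,\sigma_2)$. Neither clause changes those two arguments, so this set is closed, and your minimality argument then goes through unchanged.
\item $\mathit{Exec}(\sigma_2')$ also needs $\mathit{start}(S_0)\le 5$ and the $\mathit{noOp}$ axioms (always possible, no effect, unique names). None of these is in $\mathcal{D}_\mathit{npp}$ as written. The first is already presupposed by treating $\langle\mathcal{D}_\mathit{npp},\sigma_2,\varphi_2\rangle$ as a hybrid setting, and the others by the paper's stipulation on $\mathit{noOp}$, so just state them as assumptions.
\end{enumerate}

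\textbf{On the figure.} Your value $685$ at time $26$, with no active context after $\mathit{fixP}$, is correct. Figure~\ref{fig:cf1} shows $615$ and labels the contexts $\gamma_1$ and $\gamma_3$ where $\gamma_2$ and no context actually hold. Only the bound below $1000$ matters, so the conclusion is unaffected.
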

\noindent Thus the defused situation of $\sigma_2$ relative to $\varphi_2 = \mathit{coreTemp}(P_1) \geq 1000$ is $\sigma_2'$, which is depicted in Figure \ref{fig:cf1}. Also, $\sigma_2'$ is executable, but the effect $\varphi_2$ does not hold in $\sigma_2'$. \qed
%
%
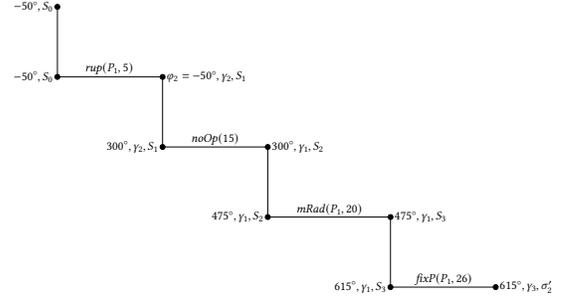
\begin{figure}[t]
\centering
\resizebox{0.41\textwidth}{!}{%
\begin{tikzpicture}[font=\fontsize{10}{13}\selectfont]
\draw[thick] 
    (0,2) node[anchor=east] {$-50^\circ, S_0$} -- (0,0) node[anchor=east] {$-50^\circ, S_0$}
    -- (3,0) node[anchor=south, yshift=-0.5mm, xshift=-16mm] {$\quad\mathit{rup}(P_1, 5)\;\;$} node[anchor=west] {$\varphi_2 = -50^\circ, \gamma_2, S_1$}
    -- (3,-2) node[anchor=east]  {$300^\circ, \gamma_2, S_1$}
    -- (6,-2) node[anchor=south, yshift=-0.5mm, xshift=-15mm] {$\mathit{noOp}(15)$} node[anchor=west] {$300^\circ, \gamma_1, S_2$}
    -- (6,-4) node[anchor=east]  {$475^\circ, \gamma_1, S_2$}
    -- (9.5,-4) node[anchor=south, yshift=-0.5mm, xshift=-17mm] {$\mathit{mRad}(P_1, 20)\;$} node[anchor=west] {$475^\circ, \gamma_1, S_3$}
    -- (9.5,-6) node[anchor=east]  {$615^\circ, \gamma_1, S_3$}
    -- (12.5,-6) node[anchor=south, yshift=-0.5mm, xshift=-13mm] {$\mathit{fixP}(P_1, 26)\quad$} node[anchor=west] {$615^\circ, \gamma_3, \sigma_2'$};

\filldraw [black] (0,0) circle (2pt);
\filldraw [black] (0,2) circle (2pt);
\filldraw [black] (3,0) circle (2pt);
\filldraw [black] (3,-2) circle (2pt);
\filldraw [black] (6,-2) circle (2pt);
\filldraw [black] (6,-4) circle (2pt);
\filldraw [black] (9.5,-4) circle (2pt);
\filldraw [black] (9.5,-6) circle (2pt);
\filldraw [black] (12.5,-6) circle (2pt);
\end{tikzpicture}
}
    \caption{Defused Situation $\sigma_2'$}
    \label{fig:cf1}
\end{figure}

\section{Discussion and Conclusion}\label{sec13}
We presented a first account of actual cause in a proper hybrid action-theoretic framework and defined causation using both a foundational approach, where we studied actions and their effects to define achievement causation, and a production approach, where we defined causation through contributions. One of the key challenges in the study of actual causation is that the intuitive soundness of these proposals cannot be demonstrated, except perhaps using examples \cite{Halpern16}. This leads to the development of definitions by analyzing a handful of simple examples, and then validating relative to our intuition for these examples, a process which G\"{o}\ss ler et al. \cite{GosslerSS17} referred to as ``TEGAR'' (i.e.\ Textbook Example Guided Analysis Refinement). Example driven development however historically led to problematic definitions that continue to suffer from various conceptual problems, even after multiple rounds of revisions. This might still be true for the general case; however for our restricted framework where we only consider achievement causation relative to a linear scenarios, we were able to show that a modified but-for test can be used to intuitively and formally justify our definition.
\par
To the best of our knowledge, the only other work linking actual causation and continuous change is 
Halpern and Peters' \cite{HalpernPeters22} generalized structural equations models (GSEM), which do provide notable advantages over SEM, such as the ability to represent variables indexed by time or other continuous parameters. They also allow specifying which interventions are permitted, making them more expressive than traditional structural equations models. However, the approach is still constrained by the inherent expressive limitations of SEM, as it lacks grounding in a robust action theory. 
Also, they did not provide a definition of actual cause in GSEM. Closely related to our work is our preliminary study in \cite{MK2024}, where we also gave a definition of primary cause in the HTSC by distinguishing several causal cases. Our current proposal refines that definition by unifying all such cases into a single and coherent formulation. In addition, 
we introduce a second production-based definition of actual cause and justify it counterfactually using a modified but-for test.
\par
Our current proposal is nonetheless limited in several ways. For instance, we did not address compound effects and also did not consider indirect causes. However, this attempt demonstrates that determining causes requires careful modeling and reasoning in hybrid domains, even under such restrictions. In the future, we plan to study direct and indirect causes for arbitrary compound effects, both discrete and temporal. 



\begin{acks}
This work is partially supported by the Natural Sciences and Engineering Research Council of Canada and by the University of Regina.
\end{acks}



\bibliographystyle{ACM-Reference-Format} 
\bibliography{9-hybrid}
\section*{Technical Appendix}
%
\paragraph{\textbf{Dynamic Formulae.}}
The notion of \emph{dynamic formulae} is defined as follows:
\begin{definition}
Let 
$\vec{x}$, $\theta_a$, and $\vec{y}$ resp.\ range over object terms, action terms, and object and action variables. 
The class of \emph{dynamic formulae} $\varphi$ is defined inductively using the following grammar:
%
\[\varphi::=P(\vec{x}) \mid Poss(\theta_a)\mid\mathit{After}(\theta_a,\varphi)\mid\neg\varphi\mid\varphi_1\wedge\varphi_2
\mid\exists\vec{y}.\;\varphi.
\]
\end{definition}
\noindent That is, a dynamic formula (DF) can be a situation-suppressed fluent, a formula that says that some action $\theta_a$ is possible, a formula that some DF holds after some action has occurred, or a formula that can built from other DF using the usual connectives. Note that $\varphi$ can have quantification over object and action variables, but must not include quantification over situations or ordering over situations (i.e.\ $\sqsubset$). 
We will use $\varphi$ for DF. 
\par
$\varphi[\cdot]$ is defined as follows:
\begin{definition}\label{psiSAT}
\begin{small}
\begin{eqnarray*}
&&  \hspace{-7 mm}\varphi[s]\defi
    \begin{cases}      
      P(\vec{x},s) & \!\!\textup{if }\varphi\textup{ is }P(\vec{x})\\
      \mathit{Poss}(\theta_a,s) & \!\!\textup{if }\varphi\textup{ is }\mathit{Poss}(\theta_a)\\
      \varphi'[do(\theta_a,s)] & \!\!\textup{if }\varphi\textup{ is }\mathit{After}(\theta_a,\varphi')\\
      \neg(\varphi'[s]) & \!\!\textup{if }\varphi\textup{ is }(\neg\varphi')\\
      \varphi_1[s]\wedge\varphi_2[s] & \!\!\textup{if }\varphi\textup{ is }(\varphi_1\wedge\varphi_2)\\
      \exists\vec{y}.\;(\varphi'[s]) & \!\!\textup{if }\varphi\textup{ is }(\exists\vec{y}.\;\varphi')\\
    \end{cases}      
\end{eqnarray*}
\end{small}
\end{definition}
\par
\paragraph{\textbf{KL formalization of Actual Cause \cite{KL21}.}}
\begin{definition}[Primary Cause \cite{KL21}]\label{PCause}
\begin{small}
\begin{eqnarray*}
&&\hspace{-7 mm}\mathit{CausesDir}(a,ts,\varphi,s)\defi
\exists s_a.\;\mathit{timeStamp}(s_a)=ts\wedge(S_0<do(a,s_a)\leq s)\\
&&\hspace{3 mm}\mbox{}\wedge\neg\varphi[s_a]\wedge\forall s'.(do(a,s_a)\leq s'\leq s\supset\varphi[s']).
\end{eqnarray*}
\end{small}
\end{definition} 
\noindent That is, $a$ executed at time-stamp $ts$ is the \emph{primary cause} of effect $\varphi$ in situation $s$ 
iff $a$ was executed in a situation with time-stamp $ts$ in scenario $s$, $a$ caused $\varphi$ to change its truth value to true, and no subsequent actions on the way to $s$ falsified $\varphi$. 
\par
Now, note that a (primary) cause $a$ might have been non-execut\-able initially. Also, $a$ might have only brought about the effect conditionally and this context condition might have been false initially. Thus earlier actions on the trace that contributed to the preconditions and the context conditions of a cause must be considered as causes as well. KL introduced $\mathit{Causes}(a,ts,\varphi,s)$ to inductively capture both primary and indirect causes.
%
The following definition captures both primary and indirect causes.\footnote{In this, we need to quantify over situation-suppressed DF. Thus we must encode such formulae as terms and formalize their relationship to the associated SC formulae. This is tedious but can be done essentially along the lines of \cite{GiacomoLL00}. We assume that we have such an encoding and use formulae as terms directly.} 
\begin{definition}[Actual Cause \cite{KL21}]\label{ACause}
\begin{small}
\begin{eqnarray*}
&&\hspace{-7 mm}\mathit{Causes}(a,ts,\varphi,s)\defi\\
&&\hspace{-7 mm}
\forall P.[
\forall a,ts,s,\varphi.(\mathit{CausesDir}(a,ts,\varphi,s)\supset P(a,ts,\varphi,s))\\
&&\hspace{13 mm}\mbox{}\wedge\forall a,ts,s,\varphi.( \exists a'\!,ts'\!,s'\!.(\mathit{CausesDir}(a'\!,ts'\!,\varphi,s)\\
&&\hspace{29 mm}\mbox{}\land\mathit{timeStamp}(s')\!=\!ts'\land s'<s\\ 
&&\hspace{29 mm}\mbox{}\land
P(a,ts,[\mathit{Poss}(a')\wedge\mathit{After}(a',\varphi)],s')
\\
&&\hspace{15 mm}\mbox{}\supset P(a,ts,\varphi,s))\\
&&\hspace{-2 mm}]\supset P(a,ts,\varphi,s).
\end{eqnarray*}
\end{small}
\end{definition}
\noindent Thus, $\mathit{Causes}$ is defined to be the least relation $P$ such that if $a$ executed at time-step $ts$ directly causes $\varphi$ in scenario $s$ then $(a,ts,\varphi,s)$ is in $P$, and if $a'$ executed at $ts'$ is a direct cause of $\varphi$ in $s$, the time-stamp of $s'$ is $ts'$, $s'<s$, and $(a,ts,[\mathit{Poss}(a')\wedge\mathit{After}(a',\varphi)],s')$ is in $P$ (i.e.\ $a$ executed at $ts$ is a direct or indirect cause of $[\mathit{Poss}(a')\wedge\mathit{After}(a',\varphi)]$ in $s'$), then $(a,ts,\varphi,s)$ is in $P$. Here the effect $[\mathit{Poss}(a')\wedge\mathit{After}(a',\varphi)]$ requires $a'$ to be executable and $\varphi$ to hold after $a'$.
\paragraph{\textbf{Proof Sketches of Properties in Section \ref{HTSCPropoerties}.}}
\begin{lemma}[Uniqueness of Direct Cause]\label{UniqDirCause}
Given a causal setting $\langle \mathcal{D}, \varphi, s \rangle$, it follows that:
\par\vspace{-3 mm}
\begin{small}
\begin{eqnarray*}
&&\hspace{-5 mm}\mathcal{D} \models \forall a, a'\!, ts, ts'\!.\; \mathit{CausesDir}(a,ts,\varphi,s) \land \mathit{CausesDir}(a',ts',\varphi,s) \\
&& \hspace{10 mm}\supset a = a' \land ts = ts'.
\end{eqnarray*}
\end{small}
\end{lemma}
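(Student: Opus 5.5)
The plan is to unfold Definition~\ref{PCause} for both hypotheses and compare the two witnessing situations along the single scenario $s$. From $\mathit{CausesDir}(a,ts,\varphi,s)$ we obtain some $s_a$ with $\mathit{timeStamp}(s_a)=ts$, $S_0<do(a,s_a)\leq s$, $\neg\varphi[s_a]$, and $\varphi[s']$ for all $s'$ with $do(a,s_a)\leq s'\leq s$. Likewise, from $\mathit{CausesDir}(a',ts',\varphi,s)$ we obtain some $s_{a'}$ with the analogous properties. The key structural fact I will use is that the foundational axioms $\Sigma$ make the set of situations $\sqsubseteq$-below a fixed $s$ a linear chain. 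Concretely, if $do(a,s_a)\sqsubseteq s$ and $do(a',s_{a'})\sqsubseteq s$, then $do(a,s_a)$ and $do(a',s_{a'})$ are $\sqsubseteq$-comparable. Since $\mathit{Exec}(s)$ holds (it is implied by $do(a,s_a)\leq s$), the same comparability holds for $\leq$.

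I then split into three cases. In the first case, $do(a,s_a)=do(a',s_{a'})$. The unique names axioms for $do$ in $\Sigma$ give $a=a'$ and $s_a=s_{a'}$, hence $ts=\mathit{timeStamp}(s_a)=\mathit{timeStamp}(s_{a'})=ts'$, and we are done.

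In the second case, $do(a,s_a)<do(a',s_{a'})$. Then $s_{a'}$ lies in the interval $do(a,s_a)\leq s_{a'}$. This is because $do(a,s_a)\sqsubset do(a',s_{a'})$ implies $do(a,s_a)\sqsubseteq s_{a'}$ by the foundational axiom for $\sqsubset$ on $do$-terms. Executability of $s_{a'}$ is inherited from $\mathit{Exec}(s)$, and $s_{a'}\sqsubseteq s$. The persistence clause of the first cause therefore yields $\varphi[s_{a'}]$, which contradicts $\neg\varphi[s_{a'}]$ from the second cause. The third case, $do(a',s_{a'})<do(a,s_a)$, is symmetric. Hence only the first case is possible.

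The main obstacle is the bookkeeping in the second case. I must show that $s_{a'}$ genuinely satisfies $do(a,s_a)\leq s_{a'}\leq s$ in the $\leq$ sense, i.e.\ that the needed $\mathit{Exec}$ conjuncts hold. I will discharge this with the standard lemma that $\mathit{Exec}$ is downward closed under $\sqsubseteq$, which follows directly from the definition of $\mathit{Exec}$ (in either its atemporal or temporal form), together with linearity of predecessors from $\Sigma$. Once that is in place, the remainder is unfolding definitions; in particular, the timestamp axioms are needed only to transfer equality of $s_a$ and $s_{a'}$ to equality of $ts$ and $ts'$.
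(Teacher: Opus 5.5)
Your proposal is correct and takes the same approach as the paper, which compresses the argument into ``follows directly from the definition of $\mathit{CausesDir}$''; you spell out the implicit steps: linearity of the predecessors of $s$, unique names for $do$ in the equal case, and the persistence clause of the earlier cause contradicting $\neg\varphi[s_{a'}]$ otherwise. One small nit: when $do(a,s_a)=s$, $\mathit{Exec}(s)$ comes from the conjunct $S_0<do(a,s_a)$, not from $do(a,s_a)\leq s$ alone, but this does not affect the argument.
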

\begin{proof}[\textbf{Proof}]
Follows directly from Definition \ref{PCause}.
\end{proof}
%
\begin{lemma}[Uniqueness of Achievement Situation]\label{UniqAchvSit}
Given a hybrid setting $\langle \mathcal{D}, \varphi, s \rangle$, we have:
\par\vspace{-3 mm}
\begin{small}
\begin{eqnarray*}
&&\mathcal{D} \models \mathit{AchvSit}(s_\varphi, \varphi, s) \land \mathit{AchvSit}(s'_\varphi, \varphi, s) \supset s_\varphi = s'_\varphi.
\end{eqnarray*}
\end{small}
\end{lemma}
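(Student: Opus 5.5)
The plan is to argue directly from Definition~\ref{DefAchvSit}, using the fact that situations on the way to a fixed scenario $s$ are linearly ordered by $<$. Assume $\mathit{AchvSit}(s_\varphi,\varphi,s)$ and $\mathit{AchvSit}(s'_\varphi,\varphi,s)$. The first step is to establish that both $s_\varphi$ and $s'_\varphi$ lie on the history of $s$, i.e.\ $s_\varphi\leq s$ and $s'_\varphi\leq s$. This is implicit in the intended reading of $\mathit{AchvSitAux}$: its term $\mathit{end}(s_\varphi,s)$ is only defined when $s_\varphi=s$ or $do(a,s_\varphi)\leq s$ for some $a$. I would make this explicit by reading the definition as restricted to $s_\varphi\leq s$, which is how it is used in Definition~\ref{PTI}.

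Given this, I would invoke the foundational axioms $\Sigma$. Together with $\mathit{Exec}(s)$, which follows from the hybrid setting, they yield that any two situations $s_1,s_2$ with $s_1\sqsubseteq s$ and $s_2\sqsubseteq s$ are comparable: $s_1\sqsubseteq s_2$ or $s_2\sqsubseteq s_1$. This is the standard consequence of the tree structure of situations, proved by induction on $s$. Since $\mathit{Exec}$ is inherited by prefixes of $s$, strict prefixes of $s$ are related by $<$ rather than just $\sqsubset$. Hence exactly one of $s_\varphi=s'_\varphi$, $s_\varphi<s'_\varphi$, or $s'_\varphi<s_\varphi$ holds.

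The remaining step is to exclude the two strict cases by minimality. Suppose $s_\varphi<s'_\varphi$. Since $\mathit{AchvSitAux}(s_\varphi,\varphi,s)$ holds and $s_\varphi<s'_\varphi$, the second conjunct of $\mathit{AchvSit}(s'_\varphi,\varphi,s)$, namely $\neg\exists s''.\,(s''<s'_\varphi\land\mathit{AchvSitAux}(s'',\varphi,s))$, is contradicted by taking $s''=s_\varphi$. The case $s'_\varphi<s_\varphi$ is symmetric. This leaves $s_\varphi=s'_\varphi$.

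I expect the main obstacle to be the first step: justifying that achievement situations are prefixes of $s$ so that linearity applies. This depends on how the partial function $\mathit{end}$ is treated for situations off the branch. If necessary, I would add the conjunct $s_\varphi\leq s$ to $\mathit{AchvSitAux}$ as the evidently intended reading. The linearity and minimality steps are routine.
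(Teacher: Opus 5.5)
Your proof is correct and is the argument the paper intends when it says the lemma ``follows directly from Definition~\ref{DefAchvSit}'': the minimality clause of $\mathit{AchvSit}$ rules out $s_\varphi<s'_\varphi$ and $s'_\varphi<s_\varphi$, and you spell out the linearity of the prefixes of $s$ (with $\mathit{Exec}$ inherited from $s$), which the paper leaves implicit. Your worry about the first step is justified, but if the case-defined term $\mathit{end}(s_\varphi,s)$ is expanded as an abbreviation, the conjunct $\varphi[\mathit{end}(s_\varphi,s),s_\varphi]$ already forces $s_\varphi=s$ or $do(a,s_\varphi)\leq s$ for some $a$, hence $s_\varphi\sqsubseteq s$, so under that reading you do not need to amend the definition.
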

\begin{proof}[\textbf{Proof}] Follows directly from Definition \ref{DefAchvSit}.
\end{proof}
%
\begin{theorem}[Uniqueness of Primary Cause of Temporal Effects]\label{Property-Uniqueness}
Given a hybrid setting $\langle \mathcal{D}, \varphi, s \rangle$, we have:
\par\vspace{-3 mm}
\begin{small}
\begin{eqnarray*}
&&\hspace{-5 mm}\mathcal{D} \models \mathit{CausesDir}^\mathit{prim}_\mathit{temp}(a_1, ts_1, \varphi, \sigma) \land \mathit{CausesDir}^\mathit{prim}_\mathit{temp}(a_2, ts_2, \varphi, \sigma) \\
&& \hspace{10 mm} \supset a_1 = a_2 \land ts_1 = ts_2.
\end{eqnarray*}
\end{small}
\end{theorem}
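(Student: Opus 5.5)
The plan is to unfold Definition~\ref{PTI} for both hypotheses and reduce the claim to the two uniqueness lemmas already established, plus the Mutex Axiom to pin down the context index. From $\mathit{CausesDir}^\mathit{prim}_\mathit{temp}(a_1, ts_1, \varphi, \sigma)$ we obtain a situation $s_1$ and an index $i$. These satisfy $\mathit{AchvSit}(s_1,\varphi,\sigma)$ and $\mathit{CausesDir}(a_1,ts_1,\gamma_i^f,s_1)$. Symmetrically, the second hypothesis gives a situation $s_2$ and an index $j$ with $\mathit{AchvSit}(s_2,\varphi,\sigma)$ and $\mathit{CausesDir}(a_2,ts_2,\gamma_j^f,s_2)$.

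The first step is to apply Lemma~\ref{UniqAchvSit} to the two $\mathit{AchvSit}$ facts, which yields $s_1 = s_2$; call this common situation $s_\varphi$. We are then left with $\mathit{CausesDir}(a_1,ts_1,\gamma_i^f,s_\varphi)$ and $\mathit{CausesDir}(a_2,ts_2,\gamma_j^f,s_\varphi)$.

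The second step is to show $i=j$. By Definition~\ref{PCause0}, each $\mathit{CausesDir}$ fact forces its context to hold at the final situation. Taking $s'=s_\varphi$ in the universal conjunct gives both $\gamma_i^f[s_\varphi]$ and $\gamma_j^f[s_\varphi]$. The contexts of a temporal fluent are mutually exclusive, as stipulated in the HTSC via the Mutex Axiom. Hence $\gamma_i^f$ and $\gamma_j^f$ must be the same context. (If two syntactically distinct indices could denote equivalent formulas, we would use this formula identity rather than index equality in what follows.)

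The final step is to invoke Lemma~\ref{UniqDirCause} with the single effect $\gamma_i^f$ and scenario $s_\varphi$, which gives $a_1=a_2$ and $ts_1=ts_2$. If one prefers not to rely on that lemma, the argument behind it can be spelled out directly from Definition~\ref{PCause0}. Let $s_a$ and $s_b$ be the witnesses with $do(a_1,s_a)\le s_\varphi$ and $do(a_2,s_b)\le s_\varphi$. Both lie on the same branch below $s_\varphi$, so they are comparable. Suppose without loss of generality that $do(a_1,s_a) < do(a_2,s_b)$. Then $do(a_1,s_a)\le s_b\le s_\varphi$, so the persistence clause gives $\gamma_i^f[s_b]$. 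This contradicts $\neg\gamma_i^f[s_b]$. Hence $s_a=s_b$, and comparing $do(a_1,s_a)=do(a_2,s_a)$ via the foundational axioms yields $a_1=a_2$. Since the time-stamps coincide, $ts_1=ts_2$.

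I expect the main subtlety to be the context-identification step. It needs mutual exclusivity of contexts at the level of the discrete fluents $\gamma_i^f$, whereas the Mutex Axiom is stated about values of $\Phi$. I would treat this exclusivity as part of the HTSC assumptions, as the paper does informally when justifying Definition~\ref{PTI}. A smaller point is that the scenario in the statement is written $\sigma$ while the setting is given as $s$; I would read these as the same scenario.
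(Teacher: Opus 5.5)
Your proposal is correct and follows the same route as the paper's proof. Both unfold Definition~\ref{PTI}, identify the two achievement situations via Lemma~\ref{UniqAchvSit}, pin down a single context via mutual exclusivity (the paper cites the Mutex Axiom for this just as loosely as you flag), and conclude with Lemma~\ref{UniqDirCause}. Your explicit unfolding of direct-cause uniqueness is a useful addition, because it does not need $\langle\gamma_i^f, s_\varphi\rangle$ to be a causal setting, which Lemma~\ref{UniqDirCause} formally presupposes.
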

%
\begin{proof}[\textbf{Proof}]
Follows from the definition of primary achievement cause (Definition \ref{PTI}), the uniqueness of the achievement situation $s_\varphi$ (Lemma \ref{UniqAchvSit}), the mutual exclusivity of contexts (the Mutex Axiom in Section \ref{prelim}), and the uniqueness of direct causes (Lemma \ref{UniqDirCause}).
\end{proof}
%
%
%
\par
%
\begin{theorem}[Implicit Primary Cause]\label{NoCause}
Assume that $\varphi$ is a constraint on the values of a primitive temporal fluent $f$. Then: 
\par\vspace{-3 mm}
\begin{small}
\begin{eqnarray*}
&& \hspace{-7 mm} \mathcal{D} \models (\mathit{ProperHTSCAchvCausalSetting}(\varphi, \sigma)\land \exists s_\varphi.\; \mathit{AchvSit}(s_\varphi, \varphi, \sigma) \\
&& \hspace{0 mm} \mbox{}\land \exists i.\; \gamma_i^f[s_\varphi] \land (\forall s'.\; S_0 \leq s' \leq s_\varphi \supset \gamma_i^f[s']) \\
&& \hspace{0 mm} \supset \neg \exists a, ts.\; \mathit{CausesDir}^\mathit{prim}_\mathit{temp}(a, ts, \varphi, \sigma)),\\
&& \hspace{-7 mm} \text{where,}\\ 
&& \hspace{-7 mm}\mathit{ProperHTSCAchvCausalSetting}(\varphi, \sigma) \defi \mathit{Exec}(\sigma)
\land \exists a_0.\; do(a_0, S_0) \leq \sigma\\
&& \hspace{0 mm}{}\land \neg \varphi[\mathit{start}(S_0), S_0] \land \neg \varphi[\mathit{time}(a_0), S_0] \land \varphi[\mathit{start}(\sigma), \sigma].
\end{eqnarray*}
\end{small}
\end{theorem}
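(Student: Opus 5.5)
We argue by contradiction, unfolding Definition \ref{PTI}. Suppose the antecedent holds, so we have a proper hybrid causal setting, an achievement situation $s_\varphi$ with $\mathit{AchvSit}(s_\varphi,\varphi,\sigma)$, and an index $i$ such that $\gamma_i^f$ holds in every $s'$ with $S_0\leq s'\leq s_\varphi$. Suppose also that $\mathit{CausesDir}^\mathit{prim}_\mathit{temp}(a,ts,\varphi,\sigma)$ holds for some $a,ts$.

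By Definition \ref{PTI} there are an $s^*_\varphi$ and a $j$ with $\mathit{AchvSit}(s^*_\varphi,\varphi,\sigma)$ and $\mathit{CausesDir}(a,ts,\gamma_j^f,s^*_\varphi)$. By Lemma \ref{UniqAchvSit}, $s^*_\varphi=s_\varphi$. Unfolding Definition \ref{PCause0} gives a situation $s_a$ with $\mathit{timeStamp}(s_a)=ts$, $S_0<do(a,s_a)\leq s_\varphi$, $\neg\gamma_j^f[s_a]$, and $\gamma_j^f$ holding throughout $[do(a,s_a),s_\varphi]$. In particular $\gamma_j^f[s_\varphi]$ holds.

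The next step is to identify $j$ with $i$. Both $\gamma_i^f[s_\varphi]$ and $\gamma_j^f[s_\varphi]$ hold. The contexts of $f$ are mutually exclusive, as guaranteed by the Mutex Axiom of Section \ref{prelim}, which is the same fact used in the proof of Theorem \ref{Property-Uniqueness}. Hence $\gamma_j^f=\gamma_i^f$, i.e.\ $j=i$.

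This step is the main obstacle. Strictly, the Mutex Axiom only says that $\Phi$ determines a unique value. Reading it as pairwise exclusivity of the $\gamma$'s needs the paper's standing assumption that contexts are mutually exclusive, stated explicitly in the HTSC preliminaries, and I would invoke that assumption directly. If $j=i$ could not be secured, the argument would still have to rule out a distinct context $\gamma_j^f$ holding simultaneously, so it cannot be skipped.

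With $j=i$, we have $\neg\gamma_i^f[s_a]$. Since $S_0\leq s_a$ and $do(a,s_a)\leq s_\varphi$ gives $s_a<s_\varphi$, the situation $s_a$ lies in the range $S_0\leq s'\leq s_\varphi$. The hypothesis then forces $\gamma_i^f[s_a]$, a contradiction. One routine check remains: $S_0\leq s_a$ must follow from $S_0<do(a,s_a)$ together with executability. This comes from the foundational axioms $\Sigma$ and the definition of $\leq$ via $\sqsubset$ and $\mathit{Exec}$.

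Finally, the properness assumption and the conditions on $\varphi$ are not needed for the contradiction. They only guarantee that the setting is non-degenerate, namely that $\sigma\neq S_0$ and that $\varphi$ is false in $S_0$.
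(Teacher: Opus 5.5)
Your proof is correct and matches the paper's proof. Both assume some $\mathit{CausesDir}^\mathit{prim}_\mathit{temp}(a,ts,\varphi,\sigma)$, use Lemma~\ref{UniqAchvSit} and the mutual exclusivity of contexts to get $\mathit{CausesDir}(a,ts,\gamma_i^f,s_\varphi)$, and then contradict the persistence of $\gamma_i^f$ from $S_0$ via the conjunct $\neg\gamma_i^f[s_a]$ of Definition~\ref{PCause0}; you only spell out the check $S_0\leq s_a<s_\varphi$ and the reliance on the standing exclusivity assumption, both of which the paper leaves implicit. One correction to your last remark: properness is not idle, because its $\mathit{Exec}(\sigma)$ conjunct makes the prefixes of $\sigma$ executable, so the minimality in Definition~\ref{DefAchvSit} (which is taken w.r.t.\ $<$) actually applies, and this is what licenses Lemma~\ref{UniqAchvSit}, which the paper states only for hybrid settings.
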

%
%
\begin{proof}[\textbf{Proof (by contradiction)}]
Fix $\varphi_1, \sigma_1, s_{\varphi_1}, \gamma_{i_1}^f$ and assume that:
\begin{eqnarray}
&& \hspace{0 mm} \mathit{AchvSit}(s_{\varphi_1}, \varphi_1, \sigma_1) \land \gamma_{i_1}^f[s_{\varphi_1}] \label{4.41},\\
&& \hspace{0 mm} \forall s'. S_0 \leq s' \leq s_{\varphi_1} \supset \gamma_{i_1}^f[s'] \label{4.42}.
\end{eqnarray}
Fix $a_1$ and $ts_1$ and also assume that: 
\begin{eqnarray}
&& \hspace{0 mm} \mathit{CausesDir}_\mathit{temp}^\mathit{prim}(a_1, ts_1, \varphi_1, \sigma_1) \label{4.43}.
\end{eqnarray}
Now, note that by \ref{4.41} and Lemma \ref{UniqAchvSit}, the achievement situation $s_{\varphi_1}$ is unique. Thus by the Mutex Axiom (which guarantees that the contexts are mutually exclusive) and the definition of primary achievement cause in primitive temporal case (Definition \ref{PTI}), we have:
\begin{eqnarray}
\mathit{CausesDir}(a_1, ts_1, \gamma_{i_1}^f, s_{\varphi_1}).
\end{eqnarray}
But this is contradictory to \ref{4.42} and the definition of direct cause (Definition \ref{PCause}).
\end{proof}
%
%
\begin{lemma}\label{PersistenceOfAchvSit} Given a hybrid setting $\langle \mathcal{D}, \varphi, s \rangle$, we have:
\begin{small}
\begin{eqnarray*}
&&\hspace{-7 mm}\mathcal{D} \models \mathit{AchvSit}(s_\varphi, \varphi, s) \land s < s^*\\
&&\hspace{-5 mm} \mbox{}\land (\forall s', t.\; s \leq s' \leq s^* \land \mathit{start}(s') \leq t \leq \mathit{end}(s', s^*) \supset \varphi[t, s'])\\
&&\hspace{0 mm} \mbox{} \supset \mathit{AchvSit}(s_\varphi, \varphi, s^*).
\end{eqnarray*}
\end{small}
\end{lemma}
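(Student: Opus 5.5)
The plan is to unfold Definition \ref{DefAchvSit} for the scenario $s^*$ and verify its two conjuncts, $\mathit{AchvSitAux}(s_\varphi,\varphi,s^*)$ and minimality, separately. Let $\Theta$ denote the hypothesis that $\varphi$ holds on every $[\mathit{start}(s'),\mathit{end}(s',s^*)]$ for $s\leq s'\leq s^*$. The key bookkeeping fact concerns the end-time function. For any $s'<s$, the successor $do(a,s')\leq s$ is also $\leq s^*$, so $\mathit{end}(s',s)=\mathit{end}(s',s^*)$. The single discrepancy is at $s'=s$: there $\mathit{end}(s,s)=\mathit{start}(s)$, whereas $\mathit{end}(s,s^*)$ is the time of the next action toward $s^*$. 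That larger interval is covered by $\Theta$. I will state this as a preliminary observation, using that $s<s^*$ implies $s\sqsubset s^*$.

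\textbf{Step 1: $\mathit{AchvSitAux}(s_\varphi,\varphi,s^*)$.} Since $\mathit{AchvSit}(s_\varphi,\varphi,s)$ gives $s_\varphi\leq s$, split on whether $s_\varphi<s$ or $s_\varphi=s$.

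If $s_\varphi<s$, the end-time observation gives $\varphi[\mathit{end}(s_\varphi,s^*),s_\varphi]$ from the old first conjunct.

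If $s_\varphi=s$, the required fact $\varphi[\mathit{end}(s,s^*),s]$ comes directly from $\Theta$ with $s'=s$.

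For the universal conjunct, take $s_\varphi<s'\leq s^*$ and split on whether $s'<s$, $s'=s$, or $s<s'$. The case $s<s'$ also uses linearity of the prefixes of $s^*$, which follows from the foundational axioms in $\Sigma$.
\begin{itemize}
\item If $s'<s$, the old hypothesis applies, together with the equality of end times.
\item If $s'=s$, the interval for $s^*$ is $[\mathit{start}(s),\mathit{end}(s,s^*)]$, which is covered by $\Theta$.
\item If $s<s'\leq s^*$, $\Theta$ applies directly.
\end{itemize}

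\textbf{Step 2: minimality.} Suppose for contradiction that some $s''<s_\varphi$ satisfies $\mathit{AchvSitAux}(s'',\varphi,s^*)$. I will show that $\mathit{AchvSitAux}(s'',\varphi,s)$ also holds, contradicting minimality of $s_\varphi$ in $s$.

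Since $s''<s_\varphi\leq s$, we have $s''<s$, so the first conjunct transfers by the end-time observation.

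For the universal conjunct, take $s''<s'\leq s$.
\begin{itemize}
\item If $s'<s$, the end times agree and the interval is identical, so the fact is inherited from the $s^*$ version.
\item If $s'=s$, the required interval is just the point $\{\mathit{start}(s)\}$. It is contained in $[\mathit{start}(s),\mathit{end}(s,s^*)]$, which is covered by the $s^*$ version, since $s''<s\leq s^*$ and $\mathit{start}(s)\leq\mathit{end}(s,s^*)$ by $\mathit{Exec}(s^*)$.
\end{itemize}

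The main obstacle is this boundary behaviour of $\mathit{end}$ at the last situation. I need $\mathit{end}(s,s^*)\geq\mathit{start}(s)$, which comes from the redefined $\mathit{Exec}$ forbidding time paradoxes. I also need $\mathit{end}(s',s)$ to be well-defined for $s'<s$, i.e.\ the next action toward $s$ to be unique. I would establish both facts as small auxiliary claims from $\Sigma$ before running the case analysis above.
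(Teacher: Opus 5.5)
Your proof is correct and takes essentially the paper's approach. The paper's proof only says the lemma follows from the antecedent and Definition~\ref{DefAchvSit}. Your two-step unfolding supplies exactly the detail that one-liner leaves implicit: you transfer $\mathit{AchvSitAux}$ to $s^*$, then pull minimality back to $s$, using $\mathit{end}(s',s)=\mathit{end}(s',s^*)$ for $s'<s$ and $\mathit{start}(s)\leq\mathit{end}(s,s^*)$ via $\mathit{Exec}(s^*)$.

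One step you take for granted is that $\mathit{AchvSit}(s_\varphi,\varphi,s)$ yields $s_\varphi\leq s$. This holds only if $\mathit{end}(s_\varphi,s)$ is read as meaningful just for prefixes of $s$, since the definition of $\mathit{end}$ covers no other case; the paper makes the same assumption tacitly.
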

\begin{proof}[\textbf{Proof}]
Follows from antecedent and Definition \ref{DefAchvSit}.
\end{proof}
\noindent 
%
\begin{theorem}[Persistence]\label{PCausePersistence}Given 
setting $\langle \mathcal{D}, \varphi, s \rangle$, we have:
\begin{small}
\begin{eqnarray*} 
&&\hspace{-7 mm}\mathcal{D} \models \mathit{CausesDir}^\mathit{prim}_\mathit{temp}(a, ts, \varphi, s) \\
&&\hspace{-5 mm}\mbox{}\land (\forall s', t'\!.\; s \leq s' \leq s^* \land \mathit{start}(s') \leq t' \leq \mathit{end}(s'\!, s^\ast) \supset \varphi[t'\!, s']) \\
&&\hspace{0 mm}\supset \mathit{CausesDir}^\mathit{prim}_\mathit{temp}(a, ts, \varphi, s^*).
\end{eqnarray*}
\end{small}
\end{theorem}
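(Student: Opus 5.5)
The plan is to unfold Definition~\ref{PTI} on both sides and transfer each conjunct from $s$ to $s^*$.

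First I fix $a$ and $ts$ and assume $\mathit{CausesDir}^\mathit{prim}_\mathit{temp}(a,ts,\varphi,s)$. By Definition~\ref{PTI} this gives some $s_\varphi$ and some index $i$ with $\mathit{AchvSit}(s_\varphi,\varphi,s)$ and $\mathit{CausesDir}(a,ts,\gamma_i^f,s_\varphi)$. To reach $\mathit{CausesDir}^\mathit{prim}_\mathit{temp}(a,ts,\varphi,s^*)$ I will keep the \emph{same} witnesses $s_\varphi$ and $i$. The second conjunct, $\mathit{CausesDir}(a,ts,\gamma_i^f,s_\varphi)$, does not mention the scenario at all, so it carries over unchanged. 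The whole task therefore reduces to proving $\mathit{AchvSit}(s_\varphi,\varphi,s^*)$.

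For that I invoke Lemma~\ref{PersistenceOfAchvSit}. Its hypotheses are $\mathit{AchvSit}(s_\varphi,\varphi,s)$ and the persistence condition on $[s,s^*]$, both of which we already have, together with $s<s^*$. The theorem only quantifies over $s'$ with $s\le s'\le s^*$, so I split into two cases.
\begin{itemize}
\item If $s^*=s$, the conclusion is exactly the hypothesis.
\item If $s<s^*$, Lemma~\ref{PersistenceOfAchvSit} applies directly.
\end{itemize}
If instead $s\le s^*$ fails, the persistence antecedent is vacuous and the statement could break. I will therefore read the theorem as implicitly assuming $s\le s^*$, which is the intended reading of ``subsequent scenarios'', and I will state this assumption explicitly.

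The main obstacle I expect is checking the lemma's content against the shifting $\mathit{end}$ function. The quantity $\mathit{end}(s',s)$ equals $\mathit{start}(s)$ when $s'=s$, but $\mathit{end}(s',s^*)$ is the time of the next action, so the intervals differ between the two scenarios.
\begin{itemize}
\item For $s_\varphi<s$, the value $\mathit{end}(s_\varphi,s^*)$ equals $\mathit{end}(s_\varphi,s)$, because the next action is the same in both.
\item If $s_\varphi=s$, then $\varphi[\mathit{end}(s,s^*),s]$ comes from the persistence hypothesis.
\item For situations $s_\varphi<s'<s$, the intervals coincide, so the $\forall$-clause carries over.
\item For $s\le s'\le s^*$, the clause is supplied directly by the hypothesis.
\end{itemize}

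Minimality of $s_\varphi$ in $s^*$ also needs an argument. Suppose some $s''<s_\varphi$ satisfied $\mathit{AchvSitAux}(s'',\varphi,s^*)$. Restricting its $\forall$-clause to the situations up to $s$ (where the intervals agree) would give $\mathit{AchvSitAux}(s'',\varphi,s)$, contradicting the minimality of $s_\varphi$ in $s$. The only exception is the case $s''$ adjacent to $s$, where the last interval shrinks to $\mathit{start}(s)$, and that is harmless. If the lemma is taken as given, these verifications are already discharged, and combining the two conjuncts yields the theorem.
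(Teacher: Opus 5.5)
Your proposal is correct and follows the paper's route: Lemma~\ref{PersistenceOfAchvSit} keeps the achievement situation $s_\varphi$ unchanged in $s^*$, and Definition~\ref{PTI} then goes through with the same witnesses $s_\varphi$ and $i$. The paper also cites uniqueness and the Mutex Axiom, which your reuse of the witnesses shows is unnecessary, and your explicit assumption $s\le s^*$ is warranted because the paper's proof also silently needs the lemma's hypothesis $s<s^*$.
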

%
%
\begin{proof}[\textbf{Proof}]
By Lemma \ref{PersistenceOfAchvSit} and the antecedent, the achievement situation $s_\varphi$ in $s$ and $s^\ast$ remains the same. The Property thus follows from this, the definition of Primary Achievement Cause in Primitive Temporal Case (Definition \ref{PTI}), the uniqueness of achievement situations (Lemma \ref{UniqAchvSit}), the mutual exclusivity of contexts (the Mutex Axiom), and the uniqueness of direct cause (Lemma \ref{UniqDirCause}), which together ensures that the direct cause of the unique context associated with the unique achievement situation $s_\varphi$ in $s$ and $s^\ast$ is unique.
\end{proof}
\paragraph{\textbf{Proof Sketches of Properties in Section \ref{Contrib}.}}
\begin{theorem}[Equivalence of Primary Cause and Direct Cause Definitions]\label{thm:equivalence}\
Given a hybrid setting $\langle\mathcal{D}, \sigma, \varphi\rangle$, we have:
\begin{eqnarray*}
&&\hspace{-7 mm}\mathcal{D}\models\forall\alpha_1,\alpha_2,ts_1,ts_2.\;(\mathit{CausesDir}_\mathit{temp}^\mathit{prim}(\alpha_1, ts_1, \varphi, s)\\ 
&&\hspace{10 mm}{}\land\mathit{PrimCause}(\alpha_2,ts_2,\varphi,\sigma))
\supset(\alpha_1 = \alpha_2 \land ts_1 = ts_2).
\end{eqnarray*}
\end{theorem}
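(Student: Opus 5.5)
The plan is to unfold both definitions over the same scenario (reading $s$ in the statement as $\sigma$) and reduce the claim to the uniqueness results already established. Assume $\mathit{CausesDir_{temp}^{prim}}(\alpha_1,ts_1,\varphi,\sigma)$ and $\mathit{PrimCause}(\alpha_2,ts_2,\varphi,\sigma)$. By Definition~\ref{PTI} there are $s_\varphi^1$ and $i_1$ with $\mathit{AchvSit}(s_\varphi^1,\varphi,\sigma)$ and $\mathit{CausesDir}(\alpha_1,ts_1,\gamma_{i_1}^f,s_\varphi^1)$. By Definition~\ref{CF-PrimaryCause} there are $s_\alpha$ and $s_\varphi^2$ with $\mathit{AchvSit}(s_\varphi^2,\varphi,\sigma)$, $\mathit{timeStamp}(s_\alpha)=ts_2$, and $\mathit{DirActContr}(\alpha_2,s_\alpha,s_\varphi^2,\varphi,\sigma)$. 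Unfolding the latter via Definitions~\ref{CF-DirActContr} and~\ref{CF-DirPossContr} gives $\sigma'\le\sigma$ and $i_2$ with $s_\alpha<s_\varphi^2\le\sigma'$ and $\mathit{CausesDir}(\alpha_2,ts_2,\gamma_{i_2}^f,s_\varphi^2)$, where the time-stamp inside $\mathit{DirPossContr}$ is the one of $s_\alpha$, namely $ts_2$.

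First, Lemma~\ref{UniqAchvSit} gives $s_\varphi^1=s_\varphi^2$; call this situation $s_\varphi$. Second, $\mathit{CausesDir}(\cdot,\cdot,\gamma_{i_j}^f,s_\varphi)$ implies $\gamma_{i_j}^f[s_\varphi]$, taking $s'=s_\varphi$ in the last conjunct of Definition~\ref{PCause0}. So both contexts hold in $s_\varphi$, and mutual exclusivity of the contexts of $f$ (the Mutex Axiom together with the disjointness of the $\gamma_i$ assumed in the SEA) yields $\gamma_{i_1}^f=\gamma_{i_2}^f$. Third, we now have $\mathit{CausesDir}(\alpha_1,ts_1,\gamma,s_\varphi)$ and $\mathit{CausesDir}(\alpha_2,ts_2,\gamma,s_\varphi)$ for the same $\gamma$ and the same situation, and Lemma~\ref{UniqDirCause} yields $\alpha_1=\alpha_2$ and $ts_1=ts_2$.

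The main obstacle I expect is the middle step. The Mutex Axiom as literally stated only says that $\Phi$ determines a unique value $y$; it does not directly say that two distinct $\gamma_i$ cannot hold at the same time. I would therefore invoke the stated design assumption that the contexts are mutually exclusive, as the proof of Theorem~\ref{Property-Uniqueness} does. A secondary subtlety is that $s_\varphi$ in $\mathit{DirActContr}$ is a priori only some situation below some $\sigma'\le\sigma$. Here I rely on the fact that Definition~\ref{CF-PrimaryCause} uses the same variable $s_\varphi$ in both the $\mathit{AchvSit}$ conjunct and the $\mathit{DirActContr}$ conjunct, so it is pinned to the achievement situation in $\sigma$.

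Lemma~\ref{UniqDirCause} itself should follow from Definition~\ref{PCause0}. Suppose the two causes occur at $s_a$ and $s_{a'}$, both on the path to $s_\varphi$. Assume $do(a,s_a)<do(a',s_{a'})$. Then $s_{a'}$ lies between $do(a,s_a)$ and $s_\varphi$, so the first cause forces $\gamma[s_{a'}]$, while the second forces $\neg\gamma[s_{a'}]$. This is a contradiction, and the symmetric case is the same, so $s_a=s_{a'}$. Since both are on the path to $s_\varphi$, this gives $a=a'$ and equal time-stamps.
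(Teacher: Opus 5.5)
Your proposal is correct and follows essentially the same route as the paper's proof sketch: identify the two achievement situations via Lemma~\ref{UniqAchvSit}, identify the two active contexts via mutual exclusivity, and conclude with Lemma~\ref{UniqDirCause}. Your direct argument for Lemma~\ref{UniqDirCause} is a useful addition. It relies only on the executability of $s_\varphi$, not on $\neg\gamma_i^f[S_0]$, so it covers this application even though $\langle\mathcal{D},\gamma_i^f,s_\varphi\rangle$ need not be a causal setting in the sense that lemma assumes.
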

%
%
\begin{proof}[\textbf{Proof sketch}]
By Lemma \ref{UniqAchvSit} and the Mutex Axiom, the achievement situation $s_\varphi$ and the context $\gamma_i^f$ enabled is $s_\varphi$ are the same in both definitions. According to Definition \ref{PTI}, $\alpha_1$ executed in $ts_1$ directly causes $\gamma_i^f$ in $s_\varphi$. In Definition \ref{CF-PrimaryCause}, $\alpha_2$ executed in $ts_2$ directly causes the same $\gamma_i^f$ in $s_\varphi$ (see Definitions \ref{CF-DirPossContr} and \ref{CF-DirActContr}). Given the uniqueness of direct causes (Lemma \ref{UniqDirCause}), $\alpha_1$ and $\alpha_2$ must be the same action and their execution time must also be the same.
\end{proof}
\paragraph{\textbf{Proof Sketches of Properties in Section \ref{ButFor}.}}
\begin{theorem}\label{DisCF}[Preempted Contribution]
    \begin{small}
    \begin{eqnarray*}
        &&\hspace{-7 mm}\mathcal{D}\nvDash \mathit{Causes}(a,\mathit{ts},\phi,s)\supset
        \neg\exists s'.\;\mathit{CFEx}_\mathit{one}(s',s,\langle\mathit{noOp},a,ts\rangle)\vee\neg\phi(s').
    \end{eqnarray*}
    \end{small}
\end{theorem}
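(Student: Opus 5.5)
Theorem \ref{DisCF} is a non-entailment claim. I will prove it by exhibiting one concrete basic action theory $\mathcal{D}$ and one model of it in which the implication fails. Concretely, I need a cause $a$ at $ts$ together with an executable counterfactual $s'$, obtained by replacing $a$ at $ts$ with $\mathit{noOp}$, such that $\phi$ still holds in $s'$. The natural witness is a preemption scenario built from the discrete fragment $\mathcal{D}_\mathit{npp}^\mathit{SC}$, extended with the always-possible, effectless action $\mathit{noOp}$.

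The first step is the choice of scenario. Take $\phi=\mathit{CSFailed}(P_1)$ and
\[
s=do([\mathit{csFailure}(P_1),\mathit{mRad}(P_1),\mathit{csFailure}(P_1)],S_0).
\]
This $s$ is not executable, because the second $\mathit{csFailure}$ requires $\neg\mathit{CSFailed}$. The precondition must therefore be relaxed, so I would modify the domain slightly. Let $\mathit{Poss}(\mathit{csFailure}(p),s)$ hold always, keeping the same successor-state axiom. Then $s$ is executable and $\neg\phi[S_0]$ holds. Moreover $\phi$ becomes true after the first $\mathit{csFailure}$ (time-stamp $0$) and stays true through $s$. Hence by Definition \ref{PCause} we get $\mathit{CausesDir}(\mathit{csFailure}(P_1),0,\phi,s)$, and therefore $\mathit{Causes}$ by the base clause of Definition \ref{ACause}. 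The second $\mathit{csFailure}$ at time-stamp $2$ is the preempted contributor.

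The second step is to build the counterfactual. Let
\[
s'=do([\mathit{noOp},\mathit{mRad}(P_1),\mathit{csFailure}(P_1)],S_0).
\]
To verify $\mathit{CF}_\mathit{one}(s',s,\langle\mathit{noOp},\mathit{csFailure}(P_1),0\rangle)$, take $s_\mathit{sh}=S_0$, $a_1=\mathit{csFailure}(P_1)$ and $a_2=\mathit{noOp}$. Unique names gives $a_1\neq a_2$. The later actions occur at matching time-stamps $1$ and $2$, which follows from the $\mathit{timeStamp}$ axioms. Executability follows from the (relaxed) precondition axioms and $\forall s.\mathit{Poss}(\mathit{noOp},s)$. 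Finally, $\mathit{noOp}$ has no effect on $\mathit{CSFailed}$, so $\phi[s']$ holds because of the last $\mathit{csFailure}$. Thus the consequent, $\neg\exists s'.\mathit{CFEx}_\mathit{one}(\ldots)\vee\neg\phi(s')$, is false in every model, while the antecedent is true. This shows $\mathcal{D}\not\models$ the implication. Indeed the negation is entailed.

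The main obstacle is not the argument itself but the precondition choice in the running example. Its $\mathit{Poss}(\mathit{csFailure})$ axiom blocks a repeated failure without an intervening $\mathit{fixCS}$. If I am to stay within $\mathcal{D}_\mathit{npp}^\mathit{SC}$ unmodified, I would instead use
\[
s=do([\mathit{csFailure},\mathit{mRad},\mathit{fixCS},\mathit{csFailure}],S_0)
\]
and remove the final primary cause, and I expect this variant to fail. The simplest route is therefore to present a fresh small domain where the achieving action has a trivially true precondition. Since the theorem only asserts non-entailment for a generic $\mathcal{D}$, one counterexample theory suffices. I would also note that the reading of $\phi(s')$ as $\phi[s']$ and the parenthesization of the consequent must be fixed explicitly.
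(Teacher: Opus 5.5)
Your counterexample is correct and follows the same route as the paper: exhibit a discrete scenario in which a later, preempted action re-achieves the effect once the direct cause is replaced by $\mathit{noOp}$, so an executable counterfactual satisfying $\phi$ exists. The paper avoids changing any precondition by using the already always-possible $\mathit{rup}(P_1)$ with $\mathit{Ruptured}(P_1)$ (scenario $[\mathit{rup},\mathit{mRad},\mathit{fixP},\mathit{rup},\mathit{rup}]$, replacing the $\mathit{rup}$ at time-stamp $3$), so your relaxation of $\mathit{Poss}(\mathit{csFailure})$ is unnecessary, though harmless for a non-entailment claim.
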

%
%
%
\begin{proof}[\textbf{Proof Sketch}]
This can be proven using a counter-example, e.g.\ using our running example's discrete variant with domain theory $\mathcal{D}_\mathit{npp}^\mathit{SC}$, where replacing the direct cause in setting $\langle \mathcal{D}_\textit{npp}^\mathit{SC}, \varphi_3, \sigma_3 \rangle$ leads to an executable situation where $\varphi_3$ still holds, where $\varphi_3 = \mathit{Ruptured}(P_1,\sigma_2)$ and $\sigma_2 = do([\mathit{rup}(P_1),\mathit{mRad}(P_1), \mathit{fixP}(P_1),\mathit{rup}(P_1),$ $\mathit{rup}(P_1)], S_0)$.
\end{proof}
\begin{lemma}\label{Lemma-ExisOfMaxPreemp}
\begin{eqnarray*}
&&\hspace{-7 mm}\mathcal{D} \models \exists a, ts.\; \mathit{PrimCause}(a, ts, \varphi, \sigma) \supset \exists \sigma'\!.\; \mathit{DefusedSit}(\varphi, \sigma, \sigma').
\end{eqnarray*}
\end{lemma}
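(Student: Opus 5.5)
Assume $\mathit{PrimCause}(a_0,ts_0,\varphi,\sigma)$ for some fixed $a_0,ts_0$. The plan is to show that the set of tuples generated by $\mathit{PreempContr}$ for $\varphi$ and $\sigma$ is non-empty, finite, and totally ordered by the number of $\mathit{noOp}$ actions. Its top element is then the required defused situation.

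\emph{Non-emptiness.} Since $\sigma$ is ground, $\sigma=do([\alpha_1,\ldots,\alpha_n],S_0)$, and $a_0$ occurs at position $ts_0$. Let $\sigma_0$ be the situation obtained by replacing that occurrence with $\mathit{noOp}(\mathit{time}(a_0))$. By Definition \ref{CF-SingleAction}, $\mathit{CF}_\mathit{one}(\sigma_0,\sigma,\langle \mathit{noOp}(\mathit{time}(a_0)),a_0,ts_0\rangle)$ holds. Here $a_0\neq\mathit{noOp}(\cdot)$ by unique names, because a $\mathit{noOp}$ has no effects and so cannot be the direct cause of a context. The base clause of Definition \ref{CF-PreemptedContributors} then gives $\mathit{PreempContr}(a_0,ts_0,\sigma_0,\varphi,\sigma)$.

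\emph{Structure of the generated set.} Because $\mathit{PreempContr}$ is a least fixpoint, I can argue by induction on it (the second-order induction principle built into the definition). The claim to establish is that every tuple $(a,ts,\sigma',\varphi,\sigma)$ in the set satisfies three things:
\begin{itemize}
\item $\sigma'$ has the same length and the same time-stamps as $\sigma$;
\item $\sigma'$ differs from $\sigma$ only by replacing some non-$\mathit{noOp}$ positions with $\mathit{noOp}$ actions carrying the same times;
\item $|\sigma'|=|\sigma|+k$, where $k\geq 1$ is the number of replacement steps.
\end{itemize}
In the inductive step, a new primary cause $a'$ of $\varphi$ in $\sigma''$ is again a non-$\mathit{noOp}$ action at some position, so replacing it raises the $\mathit{noOp}$ count by exactly one. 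Since there are at most $n$ positions, $|\sigma'|\leq n$, so the set of generated situations is finite and the counts are bounded. I then need a determinism property: at each stage the next situation is unique. This follows from Theorem \ref{Property-Uniqueness} together with Theorem \ref{thm:equivalence}, which show that the primary cause of $\varphi$ in a given scenario is unique whenever it exists, and from the observation that $\mathit{CF}_\mathit{one}$ with a specified replacement triple determines $\sigma'$ uniquely. As a result, the generated situations form a single chain $\sigma_0,\sigma_1,\ldots,\sigma_m$ with $|\sigma_j|=|\sigma|+j+1$. Distinct situations therefore have distinct counts, which is the second item of the lemma, proved along the way.

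\emph{Conclusion and main obstacle.} Take $\sigma'=\sigma_m$, the last element of the chain, which exists by finiteness. It is in $\mathit{PreempContr}$, and every other $\sigma''$ in the set is some $\sigma_j$ with $j<m$, so $|\sigma''|<|\sigma'|$. Hence $\mathit{DefusedSit}(\varphi,\sigma,\sigma')$ holds.

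The main obstacle is the induction over the second-order least-fixpoint definition. It has to be done carefully: one must instantiate $P$ with the relation "$\sigma'$ is the $j$-th element of the replacement chain for some $j$" and check that this relation satisfies both closure clauses. The most delicate point is checking that a preempted cause $a'$ in $\sigma''$ is never a $\mathit{noOp}$. To show this, I would argue via Definition \ref{CF-PrimaryCause}: a direct actual contributor must be the direct cause of some context $\gamma_i^f$, and this requires changing a discrete fluent, which $\mathit{noOp}$ actions cannot do by assumption on the domain. Finiteness also relies on $\sigma$ being ground, so that $n$ is fixed.
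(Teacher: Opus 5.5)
Your proof is correct, and its first step is the paper's: replace the primary cause $a$ at $ts$ by $\mathit{noOp}(\mathit{time}(a))$ to get a member of $\mathit{PreempContr}$. The paper's proof stops there, so strictly it only shows that the set of $\mathit{PreempContr}$-tuples for $\varphi,\sigma$ is non-empty. It never argues that some member has a $\mathit{noOp}$-count strictly larger than every other member, which is what $\mathit{DefusedSit}$ requires. That fact is supplied only implicitly, by the next lemma (equal counts force equal situations) and the finiteness remark in the proof that defused situations are unique.

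You build these ingredients into the existence proof itself: fixpoint induction with the replacement chain as the instance of $P$, determinism from uniqueness of the primary cause and of the $\mathit{CF}_{\mathit{one}}$ witness, and the bound $|\sigma'|\le n$ from groundness of $\sigma$. This makes the lemma self-contained and yields the uniqueness lemma as a by-product, at the price of more careful second-order reasoning. You also give the right justification for the first step: $a$ actually occurs in $\sigma$ at $ts$, since $\mathit{CausesDir}(a,ts,\gamma_i^f,s_\varphi)$ holds and $s_\varphi\le\sigma$; merely having $\sigma\neq S_0$ is not enough. And you make explicit that a $\mathit{noOp}$ is never a primary cause, which is needed both for $a_1\neq a_2$ in $\mathit{CF}_{\mathit{one}}$ and for the counts to increase strictly.

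Two small things to tighten. First, the equivalence theorem is stated only as a cross-uniqueness claim. To get uniqueness of $\mathit{PrimCause}$, note that $\mathit{PrimCause}$ entails $\mathit{CausesDir}^{\mathit{prim}}_{\mathit{temp}}$ by unfolding $\mathit{DirActContr}$ and $\mathit{DirPossContr}$, which contain $\mathit{AchvSit}$ and $\mathit{CausesDir}(\alpha,ts,\gamma_i^f,s_\varphi)$. Second, the uniqueness results are stated for hybrid settings, so observe that any chain element with a primary cause is one. It is executable and satisfies $\varphi$ at its start by $\mathit{PrimCause}$, and it inherits the $S_0$ conditions because $\mathit{noOp}(\mathit{time}(a))$ preserves action times.
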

\begin{proof}[\textbf{Proof}]
By Definition \ref{CF-PrimaryCause} and \ref{CF-PreemptedContributors}, if $a$ is a primary cause then $\sigma$ must have been a non-initial situation(i.e., $\sigma \neq S_0$), and thus we can always construct $\sigma'$ by replacing $a$ with the $\mathit{noOp}(time(a))$ action in $\sigma$.
\end{proof}
\begin{lemma}\label{UniquePreepContr}
\begin{eqnarray*}
&&\hspace{-7 mm} \exists a_1, ts_1, \sigma_1, \varphi, \sigma, a_2, ts_2, \sigma_2, \varphi, \sigma.\; \mathit{PreempContr}(a_1, ts_1, \sigma_1, \varphi, \sigma) \\
&&\hspace{0 mm}\mbox{}
\land \mathit{PreempContr}(a_2, ts_2, \sigma_2, \varphi, \sigma) \land |\sigma_1| = |\sigma_2| \supset \sigma_1 = \sigma_2.
\end{eqnarray*}
\end{lemma}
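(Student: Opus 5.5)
We prove the statement by induction on the least-fixpoint construction of $\mathit{PreempContr}$ (Definition \ref{CF-PreemptedContributors}), showing that for fixed $\varphi$ and $\sigma$ the set of situations $\sigma'$ occurring in tuples of $\mathit{PreempContr}(\cdot,\cdot,\sigma',\varphi,\sigma)$ forms a \emph{chain}. The chain is indexed by the number of $\mathit{noOp}$ replacements performed, and there is exactly one situation at each stage. Equal $|\cdot|$ will then force equality. Concretely, we define the stages $\sigma^{(0)}=\sigma$ and $\sigma^{(k+1)}$ as the situation obtained from $\sigma^{(k)}$ by replacing its primary cause with $\mathit{noOp}$ at the same time, whenever that primary cause exists. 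The invariant to establish is: every tuple in $\mathit{PreempContr}$ with base $\sigma$ has third component $\sigma^{(k)}$ for some $k\geq 1$, and $|\sigma^{(k)}| = |\sigma|+k$.

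\textbf{Determinism of each step.} In $\sigma^{(k)}$ there is at most one pair $(a,ts)$ with $\mathit{PrimCause}(a,ts,\varphi,\sigma^{(k)})$. This follows from the equivalence theorem (Theorem \ref{thm:equivalence}) together with uniqueness of the temporal primary cause (Theorem \ref{Property-Uniqueness}); more directly, it follows from Lemma \ref{UniqAchvSit}, the Mutex Axiom and Lemma \ref{UniqDirCause}, as in the proof of Theorem \ref{thm:equivalence}. Next, $\mathit{CF}_\mathit{one}(\sigma',\sigma^{(k)},\langle\mathit{noOp}(time(a)),a,ts\rangle)$ determines $\sigma'$ uniquely. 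By Definition \ref{CF-SingleAction}, $\sigma'$ shares the prefix $s_{sh}$ (the situation of timestamp $ts$ in $\sigma^{(k)}$) and has the same subsequent actions in the same timestamp order. A short induction on the remaining suffix, using the $\mathit{timeStamp}$ axioms and the unique-names and foundational axioms of $\Sigma$, therefore pins $\sigma'$ down.

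\textbf{Counting.} Replacing a non-$\mathit{noOp}$ action by $\mathit{noOp}$ increases $|\cdot|$ by exactly one (Definition \ref{NumberOfNoOpActions}). To see this, we must first argue that the primary cause $a$ is never itself a $\mathit{noOp}$. By Definition \ref{CF-PrimaryCause}, $a$ is a direct actual contributor, so $\mathit{CausesDir}(a,ts,\gamma_i^f,s_\varphi)$ holds. This requires $\neg\gamma_i^f[s_a]$ and $\gamma_i^f[do(a,s_a)]$, which is impossible for an action with no effects. Hence $|\sigma^{(k+1)}|=|\sigma^{(k)}|+1$, and by the induction principle given by the second-order definition (instantiate $P$ with the predicate ``$\sigma'=\sigma^{(k)}$ for some $k\ge1$, reached from base $\sigma$''), every preempted-contributor tuple lies on the chain. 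Two chain elements with equal counts then have equal indices, so $\sigma_1=\sigma^{(k)}=\sigma_2$.

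\textbf{Main obstacle.} The hardest step will be instantiating the second-order least-fixpoint definition cleanly. The relation $P$ carries the base scenario $\sigma$ as a parameter while the recursion ranges over intermediate $\sigma''$, so the chain must be expressed inside the logic, e.g.\ via $|\sigma'|-|\sigma|$ as the index. We also need to verify that the chain is well defined, meaning no branching occurs, using the uniqueness results above. A secondary subtlety is that the statement as written quantifies $\varphi,\sigma$ twice and existentially. We will read it as intended, with $\varphi$ and $\sigma$ shared and all variables universally quantified, and note this convention.
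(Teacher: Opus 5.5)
Your proposal is correct and takes essentially the same route as the paper's proof: induction on the number of $\mathit{noOp}$ actions, using the uniqueness of the primary cause (via the uniqueness and equivalence theorems) to show that each replacement step yields a unique situation. Yours is simply more careful. You explicitly check that $\mathit{CF}_\mathit{one}$ pins down the counterfactual situation, and that the replaced cause is never a $\mathit{noOp}$, so each step adds exactly one to $|\cdot|$. You also index by $|\sigma'|-|\sigma|$ rather than assuming, as the paper's base case $n=1$ tacitly does, that $\sigma$ itself contains no $\mathit{noOp}$.
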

\begin{proof}[\textbf{Proof sketch (by induction)}]
We start by induction on $|\sigma_1|$, the number of $\mathit{noOp}$ actions in $\sigma_1$.
For $n = 1$, both $\sigma_1^b$ and $\sigma_2^b$ contain exactly one $\mathit{noOp}$ action, which corresponds to a substitution of the primary cause of $\varphi$ in $\sigma$. By Definition \ref{CF-PreemptedContributors}, Theorem \ref{Property-Uniqueness} (the uniqueness of the primary cause), and Theorem \ref{thm:equivalence} (that the two definitions of primary causes are equivalent), we have:
\begin{eqnarray*} 
&&\hspace{-7 mm}\mathit{PreempContr}(a_1, ts_1, \sigma_1^b, \varphi, \sigma) \land \mathit{PreempContr}(a_2, ts_2, \sigma_2^b, \varphi, \sigma)\\
&&\hspace{-7 mm}{}\supset \sigma_1^b = \sigma_2^b. 
\end{eqnarray*}
Assume that the consequence holds for $|\sigma_1^k| = |\sigma_2^k| = k$. We will show that it holds for $|\sigma_1| = |\sigma_2| = k+1$. By the same argument as in the base case (i.e., that replacing the actual cause from the same situation yields a unique situation due to the uniqueness of primary cause), it can be shown that $\sigma_1 = \sigma_2$.
\end{proof}
%
\begin{lemma}\label{UniqueMaxPreepContr}
\begin{eqnarray*}
&&\hspace{0 mm} \exists \sigma'\!, \sigma''\!.\; \mathit{DefusedSit}(\varphi, \sigma, \sigma') \land \mathit{DefusedSit}(\varphi, \sigma, \sigma'') \supset \sigma' = \sigma''.
\end{eqnarray*}
\end{lemma}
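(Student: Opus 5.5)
The plan is to prove the uniqueness of defused situations directly from Definition \ref{CF-MaxPreempContrCFSit}, using Lemma \ref{UniquePreepContr} to close off the one case the definition does not settle by itself. Fix $\varphi$ and $\sigma$, and suppose both $\mathit{DefusedSit}(\varphi,\sigma,\sigma')$ and $\mathit{DefusedSit}(\varphi,\sigma,\sigma'')$ hold. We argue by contradiction and assume $\sigma'\neq\sigma''$.

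First, unfold the first conjunct of each definition. This gives witnesses $a',ts'$ with $\mathit{PreempContr}(a',ts',\sigma',\varphi,\sigma)$, and witnesses $a'',ts''$ with $\mathit{PreempContr}(a'',ts'',\sigma'',\varphi,\sigma)$.

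Next, instantiate the maximality clause of $\mathit{DefusedSit}(\varphi,\sigma,\sigma')$ with $\sigma''$, $a''$, $ts''$. Since $\sigma'\neq\sigma''$, we get $|\sigma''|<|\sigma'|$. Symmetrically, the maximality clause of $\mathit{DefusedSit}(\varphi,\sigma,\sigma'')$, instantiated with $\sigma'$, $a'$, $ts'$, gives $|\sigma'|<|\sigma''|$. Here $|\cdot|$ (Definition \ref{NumberOfNoOpActions}) is a well-defined natural number for ground situation terms, so these two strict inequalities are contradictory. Hence $\sigma'=\sigma''$.

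The argument does not actually need Lemma \ref{UniquePreepContr}, because the strict inequality in the definition already excludes two distinct maximal elements. The only point that needs care is that $|\cdot|$ is a total function into the naturals with the usual strict order, so that $<$ is irreflexive and asymmetric. This follows by induction on situations from the foundational axioms $\Sigma$, since every situation is $S_0$ or of the form $do(a,s)$, and the three cases of Definition \ref{NumberOfNoOpActions} are exhaustive and mutually exclusive (using unique names for $\mathit{noOp}$). I would still cite Lemma \ref{UniquePreepContr} as a sanity check: it shows the maximality requirement is coherent, since two preempted-contributor situations with equal $\mathit{noOp}$ counts are already identical.
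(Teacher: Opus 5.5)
Your proof is correct (reading the statement's leading $\exists$ as the intended universal quantifier, as you do), and it is more direct than the paper's. The paper cites Lemma~\ref{UniquePreepContr}, Definition~\ref{CF-MaxPreempContrCFSit}, and the finiteness of $\sigma$. The natural reconstruction is this: each defused situation has a $\mathit{noOp}$ count at least that of every preempted-contributor situation, so the two counts are equal, and Lemma~\ref{UniquePreepContr} (equal counts imply equal situations) then yields $\sigma'=\sigma''$. You instead use the strictness of the inequality in Definition~\ref{CF-MaxPreempContrCFSit}. Applied symmetrically, it gives $|\sigma''|<|\sigma'|<|\sigma''|$ whenever $\sigma'\neq\sigma''$, so you need nothing beyond asymmetry of $<$ on the values of $|\cdot|$, which your induction over $\Sigma$ secures. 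Your route is self-contained and does not depend on the only-sketched inductive proof of Lemma~\ref{UniquePreepContr}, which itself rests on uniqueness of primary causes, Theorem~\ref{thm:equivalence}, and determinacy of $\mathit{CF}_\mathit{one}$. The paper's route, by contrast, would survive if the maximality clause were weakened to a non-strict ``maximal count'' condition, whereas yours would not. As you suspect, the finiteness of $\sigma$ and Lemma~\ref{UniquePreepContr} really matter for \emph{existence} of a defused situation. Finiteness ensures a maximal count is attained, and Lemma~\ref{UniquePreepContr} ensures the maximizer is unique, which is exactly what it needs to satisfy the strict clause. Those ingredients therefore belong with Lemma~\ref{Lemma-ExisOfMaxPreemp}, whose proof in the paper only constructs a single preempted-contributor situation, rather than with uniqueness.
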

\begin{proof}[\textbf{Proof}]
Follows directly from Lemma \ref{UniquePreepContr}, Definition \ref{CF-MaxPreempContrCFSit}, and the fact that since the scenario $\sigma$ is finite, only a finite number of actions could be replaced with $\mathit{noOp}$ actions.
\end{proof}
\begin{lemma}\label{NoCauseInMPreempSit}
\begin{eqnarray*}
&&\mathit{DefusedSit}(\varphi, s, s') \supset \neg\exists b, ts_b. \mathit{PrimCause}(b, ts_b, \varphi, s').
\end{eqnarray*}
\end{lemma}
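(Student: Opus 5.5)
We argue by contradiction, using the maximality built into Definition \ref{CF-MaxPreempContrCFSit} together with the closure clause of Definition \ref{CF-PreemptedContributors}. Fix $\varphi$, $s$, $s'$ with $\mathit{DefusedSit}(\varphi, s, s')$, and suppose there are $b, ts_b$ such that $\mathit{PrimCause}(b, ts_b, \varphi, s')$. From the definition of $\mathit{DefusedSit}$ we obtain $a', ts'$ with $\mathit{PreempContr}(a', ts', s', \varphi, s)$. The goal is to produce a strictly larger member of the $\mathit{PreempContr}$ relation, which contradicts the maximality of $s'$.

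The first step is to build the counterfactual situation $s''$ obtained from $s'$ by replacing the action $b$ at time-stamp $ts_b$ with $\mathit{noOp}(time(b))$. By Definition \ref{CF-PrimaryCause}, $b$ is a direct actual contributor in $s'$, so $\mathit{CausesDir}(b, ts_b, \gamma_i^f, s_\varphi)$ holds for some $s_\varphi \leq s'$. Hence $b$ actually occurs in the history of $s'$ at a situation with time-stamp $ts_b$, and the replacement is well defined. Such an $s''$ exists and satisfies $\mathit{CF}_\mathit{one}(s'', s', \langle \mathit{noOp}(time(b)), b, ts_b\rangle)$. The same observation that underlies Lemma \ref{Lemma-ExisOfMaxPreemp} applies: the situation-building functions always let us form this term.

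The second step applies the inductive (closure) clause of Definition \ref{CF-PreemptedContributors}. We instantiate $a'' := a'$, $ts'' := ts'$, $\sigma'' := s'$; then $P(a', ts', s', \varphi, s)$ holds for every $P$ closed under the clauses, $b$ at $ts_b$ is a primary cause in $s'$, and $s''$ is the required single-action counterfactual. Since $\mathit{PreempContr}$ is the intersection of all such $P$, this gives $\mathit{PreempContr}(b, ts_b, s'', \varphi, s)$.

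The third step compares the numbers of $\mathit{noOp}$ actions, and I expect it to be the main obstacle. We need $|s''| = |s'| + 1$, and in particular $s'' \neq s'$, which then contradicts the requirement $|s''| < |s'|$ in $\mathit{DefusedSit}$. By Definition \ref{NumberOfNoOpActions}, and since $s''$ agrees with $s'$ at every position except time-stamp $ts_b$, this reduces to showing that $b$ is not itself of the form $\mathit{noOp}(t)$. That follows because a $\mathit{noOp}$ action has no effects: it cannot change the truth value of any context $\gamma_i^f$, so it cannot satisfy $\neg\gamma_i^f[s_b] \land \gamma_i^f[do(b, s_b)]$, which $\mathit{CausesDir}(b, ts_b, \gamma_i^f, s_\varphi)$ requires. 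This rests on the modeling assumption, stated in the paper, that the domain modeler ensures $\mathit{noOp}$ is effect-free, that is, its successor-state axioms leave every discrete fluent unchanged. I would make this assumption explicit and then complete the counting argument by a routine induction over the common suffix used in $\mathit{CF}_\mathit{one}$.
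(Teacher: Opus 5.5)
Your proof is correct and follows the paper's own argument: assume a primary cause $b$ at $ts_b$ in $s'$, replace it by $\mathit{noOp}(time(b))$ to obtain a $\mathit{PreempContr}$ member with one more $\mathit{noOp}$, and contradict the maximality built into $\mathit{DefusedSit}$. Your version is more careful than the paper's sketch in two places. First, the paper records the new tuple as $\mathit{PreempContr}(b_1, ts_{b_1}, s_1'', \varphi, s_1')$, relative to $s_1'$, which does not directly clash with $\mathit{DefusedSit}(\varphi, s_1, s_1')$; you instead use the inductive clause relative to the original scenario $s$, which is what the contradiction needs. Second, you justify $|s''| = |s'|+1$ by showing $b$ is not a $\mathit{noOp}$; note that this fact is also needed already in your first step, since $\mathit{CF}_\mathit{one}$ requires $a_1 \neq a_2$.
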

\begin{proof}[\textbf{Proof (by contradiction)}]
Fix $\varphi_1, s_1, s_1'$ and assume on the contrary that for some $b_1$ and $ts_{b_1}$,
\[\mathit{PrimCause}(b_1, ts_{b_1}, \varphi_1, s_1').\]
According to Definition \ref{CF-PreemptedContributors}, 
\[\mathit{PreempContr}(b_1, ts_{b_1}, s_1'', \varphi, s_1'),\] 
where $s_1''$ is a counterfactual situation of $s_1'$ with $b_1$ replaced by $\mathit{noOp(time(b_1))}$. This implies $|s_1'| < |s_1''|$, contradicting Definition \ref{CF-MaxPreempContrCFSit} and Lemma \ref{UniqueMaxPreepContr}, which requires $s_1'$ to be the unique defused situation and $s_1'$ to have the maximum number of $\mathit{noOp}$ actions. Therefore, no such $b_1$ exists, proving the lemma.
\end{proof}
\begin{theorem}[Counterfactual Dependence]\label{CF-Dep-Therorem}
Given a hybrid setting, $\langle\mathcal{D},\sigma,\varphi\rangle$, where $\varphi$ is a constraint on the values of $f$, we have:
\par\vspace{-3 mm}
\begin{small}
\begin{eqnarray*}
&&\hspace{-7 mm} \exists a, ts.\;\mathit{PrimCause}(a, ts, \varphi, s) \supset{} 
\\
&&\hspace{-7 mm}[\exists s'\!.\;\!\mathit{DefusedSit}(\varphi, s, s')\!\land\!(\wedge_{i} \neg\gamma_i^f[S_0] \supset    \neg(\varphi[start(s'), s']\!\land\!\mathit{Exec}(s')))].
\end{eqnarray*}
\end{small}
\end{theorem}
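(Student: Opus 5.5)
The statement has two conjuncts, and I will prove them separately. The first is the existence of a defused situation $s'$. This follows directly from Lemma~\ref{Lemma-ExisOfMaxPreemp} applied to the assumed primary cause $a$ at $ts$. By Lemma~\ref{UniqueMaxPreepContr} this $s'$ is unique. For the rest of the proof I fix $s'$ and assume $\wedge_i\neg\gamma_i^f[S_0]$. I then argue by contradiction: suppose $\mathit{Exec}(s')\land\varphi[\mathit{start}(s'),s']$.

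The key step is to show that $\langle\mathcal{D},s',\varphi\rangle$ behaves as a hybrid setting admitting a primary cause, which contradicts Lemma~\ref{NoCauseInMPreempSit}. I proceed as follows.
\begin{enumerate}
\item $s'$ differs from $s$ only by replacing actions with $\mathit{noOp}$ actions that have the same time arguments. Hence $s'$ is non-initial, has the same first-action time as $s$, and agrees with $s$ on $S_0$. So $\neg\varphi[\mathit{start}(S_0),S_0]\land\neg\varphi[\mathit{time}(\alpha_1),S_0]$ carries over from the original setting.
\item Since $s'$ is finite, executable, and $\varphi$ holds at its start, there is an earliest situation $s_\varphi$ at which $\varphi$ holds at its end and is maintained until $\mathit{start}(s')$. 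By Definition~\ref{DefAchvSit} this gives $\mathit{AchvSit}(s_\varphi,\varphi,s')$. I would justify this by taking the minimal element of the nonempty, finite set of situations satisfying $\mathit{AchvSit}$Aux (nonempty since $s'$ itself qualifies).
\item $s_\varphi\neq S_0$. If $s_\varphi$ were $S_0$, then $\varphi$ would hold at $\mathit{end}(S_0,s')=\mathit{time}(\alpha_1)$, contradicting item 1.
\item $\varphi$ is false at the start of $S_0$ and true at $\mathit{end}(s_\varphi,s')$, so the value of $f$ must have changed somewhere on $[S_0,s_\varphi]$. By the state evolution axiom, $f$ can change only while some context $\gamma_i^f$ holds. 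Together with the minimality of $s_\varphi$ and the assumption that no context holds in $S_0$, I would argue that the context active in $s_\varphi$, say $\gamma_i^f$, is true there; the Mutex Axiom makes it unique. Since $\gamma_i^f$ is false in $S_0$, there is a last situation $s_b<s_\varphi$ with $\neg\gamma_i^f[s_b]$. Let $b$ be the action taken in $s_b$ on the way to $s_\varphi$, with $ts_b=\mathit{timeStamp}(s_b)$. Then $\mathit{CausesDir}(b,ts_b,\gamma_i^f,s_\varphi)$ holds.
\item From item 4, Definitions~\ref{CF-DirPossContr} and \ref{CF-DirActContr}, taking $\sigma'=s'$, give $\mathit{DirActContr}(b,s_b,s_\varphi,\varphi,s')$. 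The required conjunct $\neg\varphi[\mathit{time}(b),s_b]$ follows from the minimality of $s_\varphi$, possibly after moving to an earlier context switch as argued below. Definition~\ref{CF-PrimaryCause} then yields $\mathit{PrimCause}(b,ts_b,\varphi,s')$, which contradicts Lemma~\ref{NoCauseInMPreempSit}.
\end{enumerate}

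I expect item 4 to be the main obstacle, namely showing that a context is actually active in $s_\varphi$. It is possible that no context holds in $s_\varphi$, so that $f$ is frozen there and $\varphi$ was really achieved earlier. My first attempt is to use the minimality in $\mathit{AchvSit}$ directly. If no context held in $s_\varphi$, then $f$ would be constant throughout $s_\varphi$. In that case $\varphi$ would already hold at the end of the predecessor situation and would persist from there, so the predecessor would satisfy $\mathit{AchvSit}$Aux, contradicting minimality. The exception is when the predecessor is $S_0$, which is excluded by item 1. The same style of argument handles $\neg\varphi[\mathit{time}(b),s_b]$. If $\varphi$ already held at $\mathit{time}(b)$ in $s_b$, then, because $b$ does not change $f$'s value at the instant it is executed (temporal fluents are continuous across actions), $s_b$ or an earlier situation would qualify as an achievement situation, again contradicting minimality.
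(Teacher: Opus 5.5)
\textbf{There is a genuine gap, in item 5.} Your overall route is the paper's: produce a primary cause in the defused situation $s'$, then contradict maximality via Lemma~\ref{NoCauseInMPreempSit}. Items 1--4 are fine, granted continuity of $f$ across actions, which the paper also assumes tacitly.

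The failing step is the conjunct $\neg\varphi[\mathit{time}(b),s_b]$ of Definition~\ref{CF-DirPossContr}. Your minimality argument does not deliver it:
\begin{itemize}
\item From $\varphi[\mathit{time}(b),s_b]$ you only get $\varphi$ at $\mathit{end}(s_b,s')$.
\item $\mathit{AchvSitAux}(s_b,\varphi,s')$ also needs $\varphi$ on the whole interval of every $s''$ with $s_b<s''\le s_\varphi$.
\item $\mathit{AchvSitAux}(s_\varphi,\varphi,s')$ only guarantees $\varphi$ at the \emph{end} of $s_\varphi$.
\end{itemize}
Since $\delta_i$ is arbitrary, $f$ can leave and re-enter the region described by $\varphi$ while $\gamma_i^f$ stays active. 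Moving to an earlier context switch does not help either. $\mathit{DirActContr}(\cdot,\cdot,s_\varphi,\varphi,s')$ needs a direct cause of a context holding in $s_\varphi$, and by the Mutex Axiom and Lemma~\ref{UniqDirCause} that can only be $b$ at $ts_b$. So items 1--4 establish only $\mathit{CausesDir}^\mathit{prim}_\mathit{temp}(b,ts_b,\varphi,s')$. Theorem~\ref{thm:equivalence} cannot upgrade this to $\mathit{PrimCause}$, because it only asserts agreement when both exist.

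\textbf{The step genuinely fails, and with it the theorem at this generality.} Consider the following domain.
\begin{itemize}
\item $P,Q,R$ are false in $S_0$ and are set by always-possible actions $a_P,a_Q,a_R$; $Z$ is an always-possible action with no effect.
\item The contexts are $\gamma_1=P\land\neg Q\land\neg R$ (rate $10$), $\gamma_3=R$ (rate $1000$), and $\gamma_2=Q\land\neg R$ with $f(t,s)=f(\mathit{start}(s),s)+3(t-30)^2-3(\mathit{start}(s)-30)^2$. This law is unaffected by inserting a $\mathit{noOp}$.
\item $f=0$ in $S_0$, $\varphi=(f\ge 100)$, and $\sigma=do([a_P(0),a_Q(20),a_R(25),Z(50)],S_0)$.
\end{itemize}
In $\sigma$, $f$ is $200$ at time $20$ and $-25$ at time $25$. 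The achievement situation is $do(a_R(25),\cdot)$, and $a_R(25)$ is the primary cause.

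Now replace $a_R(25)$ by $\mathit{noOp}(25)$. Then $\gamma_2$ persists, and $f$ dips to $-100$ at time $30$ and reaches $1100$ at time $50$. The achievement situation becomes $do(\mathit{noOp}(25),\cdot)$, whose context is directly caused by $a_Q(20)$. But $f=200\ge 100$ at time $20$, so $a_Q(20)$ fails the $\neg\varphi$ conjunct and there is no $\mathit{PrimCause}$. Defusing therefore stops at an executable situation satisfying $\varphi$, even though no context holds in $S_0$.

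The paper's sketch skips the same point. It simply asserts that replacing the context-enabling action $A_1'$ yields a $\mathit{PreempContr}$ tuple, which presupposes $\mathit{PrimCause}(A_1',\ldots)$.

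To close the gap you need one of two changes:
\begin{itemize}
\item An extra hypothesis, e.g.\ $f$ non-decreasing under every context and $\varphi$ upward closed, as in $\mathcal{D}_\mathit{npp}$. Under it your minimality argument does go through.
\item Running the defusing with Definition~\ref{PTI} in place of $\mathit{PrimCause}$. Then your items 1--4 suffice.
\end{itemize}
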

%
\begin{proof}[\textbf{Proof sketch}] 
Fix $A_1, ts_1, \varphi_1, s_1$ and assume that $\mathit{PrimCause}(A_1,$ $ts_1, \varphi_1, s_1)$.
From Lemma \ref{Lemma-ExisOfMaxPreemp}, it follows that there is a situation, say $s_1'$, such that $\mathit{DefusedSit}\allowbreak(\varphi_1, s_1, s_1')$.
\par
Now, assume that $\bigwedge_{i} \neg\gamma_i^f[S_0]$. Also assume that $\mathit{Exec}(s_1')$. We need to show that $\neg\varphi_1[start(s_1'), s_1']$. We will prove this by contradiction. Assume that $\varphi_1[start(s_1'), s_1']$. Since all the contexts of the only fluent $f$ in $\varphi_1$ were initially false, there must be an action $A_1'(t)$ executed at some timestamp $ts'$ that brought about some context of $\varphi$ which eventually achieved $\varphi$. Consider the situation that replaces $A_1'(t)$ with $\mathit{noOp}(t)$ in $s_1'$, let's call it $s_1^*$. By Definition \ref{CF-PreemptedContributors}, $s_1^*$ must be a preempted contributor, i.e., $\mathit{PreempContr}(A_1'(t), ts', s_1^*, \varphi, \sigma)$. 
Moreover, $|s_1^*| > |s_1'|$; but this means that by Definition \ref{CF-MaxPreempContrCFSit}, $s_1'$ cannot be the defused situation with respect to $\varphi$ and $s_1$, which is contradictory to the above assumption.
\end{proof}

\end{document}